    \pgfplotsset{
	% define the layers you need.
	% (Don't forget to add `main' somewhere in that list!!)
	layers/my layer set/.define layer set={
		background,
		main,
		foreground
	}{
		% you could state styles here which should be moved to
		% corresponding layers, but that is not necessary here.
		% That is why we don't state anything here
	},
	% activate the newly created layer set
	set layers=my layer set,
}
\newtheorem{lemma}{Lemma}
\newtheorem{theorem}{Theorem}[section]
\newcommand*\diff{\mathop{}\!\mathrm{d}}
\DeclarePairedDelimiterX{\infdivx}[2]{\big(}{\big)}{%
	#1\;\delimsize\|\;#2%
}
\newcommand{\infdiv}{\text{KL}\infdivx}
\DeclarePairedDelimiter{\norm}{\lVert}{\rVert}
\DeclareMathOperator{\E}{\mathbb{E}}
\newcommand\equalhat{\mathrel{\stackon[1.5pt]{=}{\stretchto{%
				\scalerel*[\widthof{=}]{\wedge}{\rule{1ex}{3ex}}}{0.5ex}}}}
\newcommand{\expnumber}[2]{{#1}\mathrm{e}{#2}}
\newcommand\Tstrut{\rule{0pt}{2.6ex}}         % = `top' strut
\newcommand\Bstrut{\rule[-0.9ex]{0pt}{0pt}}   % = `bottom' strut
\newenvironment{keywords}
{\bgroup\leftskip 20pt\rightskip 20pt \small\noindent{\bf Keywords:} }%
{\par\egroup\vskip 0.25ex}
\DeclareRobustCommand{\Udots}{%
	\vcenter{\offinterlineskip
		\halign{%
			\hbox to .8em{##}\cr
			\hfil.\cr\noalign{\kern.2ex}
			\hfil.\hfil\cr\noalign{\kern.2ex}
			.\hfil\cr}%
	}%
}
\title{Robust Learning of Parsimonious Deep Neural Networks}
\author{Valentin Frank Ingmar Guenter\thanks{Graduate Student, Email:
vguenter@uci.edu}\ \ and\ \ Athanasios~Sideris\thanks{Professor, Email:
asideris@uci.edu}\\ \\  Department of Mechanical and
Aerospace Engineering,\\ University of California, Irvine,\\
Irvine, CA, 92697}
\begin{document}
\maketitle
\begin{abstract}%
We propose a simultaneous learning and pruning algorithm capable of identifying and eliminating irrelevant structures in a neural network during the early stages of training. Thus, the computational cost of subsequent training iterations, besides that of inference, is considerably reduced.
Our method, based on variational inference principles using Gaussian scale mixture priors on neural network weights, learns the variational posterior distribution of Bernoulli random variables multiplying the units/filters similarly to adaptive dropout.
Our algorithm, ensures that the Bernoulli parameters practically converge to either $0$ or $1$, establishing a deterministic final network.
We analytically derive a novel hyper-prior distribution over the prior parameters that is crucial for their optimal selection and leads to consistent pruning levels and prediction accuracy regardless of weight initialization or the size of the starting network. We prove the convergence properties of our algorithm establishing theoretical and practical pruning conditions.
We evaluate the proposed algorithm on the MNIST and CIFAR-10 data sets and the commonly used fully connected and convolutional LeNet and VGG16 architectures. The simulations show that our method achieves pruning levels on par with state-of the-art methods for structured pruning, while maintaining better test-accuracy and more importantly in a manner robust with respect to network initialization and initial size.
\end{abstract}
\begin{keywords}
	Neural networks, variational inference, Bayesian model reduction, Neural network pruning
\end{keywords}
\section{Introduction}
Deep learning has gained tremendous prominence during recent years as it has been shown to achieve outstanding performance in a variety of Machine Learning tasks, such as natural language processing, object detection, semantic image segmentation and reinforcement learning \cite{girshick_fast_2015, noh_learning_2015, silver_mastering_2017}. However, Deep Neural Networks (DNNs) can be unnecessarily overparametrized resulting in excessive computational requirements both during training and inference, often making it infeasible to deploy them on systems with limited computational resources, e.g., low-powered mobile devices. Neural network pruning \cite{han_learning_2015,blalock_what_2020} has been one technique used to reduce the size of such over-parameterized Neural Networks (NNs) by appropriately eliminating weights and/or nodes from the network while essentially maintaining its prediction accuracy.
Thus, neural network pruning techniques can be divided into structured and unstructured methods. Unstructured or weight-based pruning aims to remove individual weights and therefore, connections in the neural network. Typically, a saliency or importance score, e.g., a weight’s magnitude is assigned to each weight \cite{han_learning_2015} and  network pruning is carried out by permanently removing weights with a score below a certain threshold \cite{han_learning_2015, guo_dynamic_2016, lecun_optimal_1990}. On the other hand, structured pruning aims to remove entire structures from the neural network. Thus, in fully connected NNs, the goal is to reduce the number of neurons in its hidden layers, while in convolutional NNs to remove entire filters. This makes structured pruning particularly attractive as it prunes the network to a smaller counterpart and allows accelerated inference with standard deep learning libraries; in contrast, the practical acceleration of DNNs achieved with unstructured or weight pruning may be limited by poor cache locality and jumping memory access caused by the ensuing random connectivity of the network and require specialized hardware \cite{wen_learning_2016}. In addition, when the dimensionality of a feature vector in a DNN has a specific interpretation, structured pruning methods, which effectively learn this dimension, inherently are more appropriate over unstructured pruning methods.

\paragraph{Review of structured pruning methods:} Next, we review in more detail some structured pruning methods since our proposed algorithm belongs to this category. Filter Thresholding(FT, \cite{li_pruning_2017})uses the Euclidean norm of the fan-out weight vector in fully-connected layers or the Frobenius norm of each kernel matrix in convolutional layers as a score to prune the corresponding unit or filter from the network; this method simply keeps the units/filters with the largest norm until the desired sparsity level is met, however,there is no active mechanism to promote the weights of the network to approach zero other than the typical $\mathcal{L}_2$ regularization terms in the cost function.
SoftNet \cite{he_soft_2018} is similar to FT \cite{li_pruning_2017} but using the $\mathcal{L}_1$ norm of the weights corresponding to a unit in a layer as the score and also allowing weights previously pruned, i.e., set to zero, to become non-zero again during its fine-tuning scheme. ThiNet \cite{luo_thinet:_2017} iteratively prunes feature maps or units in the network for which removal leads to the least absolute error in the pre-activation of the subsequent layer and until a desired pruning level is achieved.
Provable Filter Pruning (PFP, \cite{liebenwein_pruning_19}) uses the empirical sensitivity of \cite{baykal_data-dependent_2019} to construct importance sampling
distributions over feature maps and an iterative sampling scheme to prune the feature maps/units while keeping the output of the layer close to its original unpruned value.

Bayesian variational techniques together with sparsity promoting priors have also been employed for neural network pruning \cite{Nalisnick_2015, Louizos_2017, molchanov2017variational, srinivas_generalized_2016}.  Such methods typically employ Gaussian scale mixture priors, which are zero mean normal probability density functions (pdf's) with variance (scale) given by another random variable (RV). A notable prior in this class is the spike-and-slab prior, in which only two scales are used. In these methods,
%\paragraph{Related Work:}
\cite{Louizos_2017} uses variational inference by postulating parametrized posteriors over the weights and the scale RVs and finds the parameters of such posteriors by maximizing the Evidence Lower Bound (ELBO); weights are eliminated for which with scale RVs for which the posterior variance of their scale RVs
is greater than the posterior mean by a set threshold.
 To eliminate groups of weights, e.g., units in a layer, a common scale RV is used for the group. In their approach discrete distributions on the scale variables such as Bernoulli are problematic due to the need to apply the reparametrization trick \cite{kingma_variational_2015} in maximizing the ELBO.

In \cite{Nalisnick_2015}, an identifiable parametrization of the multiplicative noise
is used where the RVs are the product of NN weights and the scale variables and the
scale variables themselves. Then, estimates of the scale variables are obtained using the
Expectation-Maximization algorithm to maximize a lower bound on the log-likelihood.
A Bernoulli distribution on the scale variables is used, although the authors state that in principle, this is not justified since gradients with respect to (w.r.t.) discrete variables are considered. Also, the expectation step is accomplished using samples from the posterior of the NN weights via Monte Carlo (MC) simulations. Weights with a sum of posterior variance and mean less than a set threshold are pruned,
 a criterion that allows weights with high variance to survive in distinction to \cite{Louizos_2017}.

\cite{molchanov2017variational} also uses multiplicative Gaussian noise on the network weights, which receive an improper log-scale uniform distribution as prior. It postulates a normal posterior on the weights and proceeds to maximize the ELBO over its parameters. Due to the choice of prior, an approximation to the Kullback-Leibler-Divergence (KL-Divergence) term of the ELBO is necessary. The authors discuss that their method can be sensitive to weight initialization and extra steps must be taken to assure good initialization.

In \cite{Gal2016Uncertainty} Dropout \cite{hinton_improving_2012} has been interpreted as imposing a spike-and-slab pdf on the weights of a NN. Generalized Dropout \cite{srinivas_generalized_2016} also implicitly employs a spike-and-slab pdf on the product of scales and NN weights by placing a Bernoulli prior on the scale RVs and a normal prior on the NN weights. The posterior on the scale RVs is Bernoulli, sharing parameters with the prior and a Beta hyper-prior on these parameters is used; NN weights are treated as RVs with a delta posterior placed on them, leading to technical difficulties in gradient calculations.
Due to the choice of the parameters of the prior distributions to coincide with those of the variational posteriors, the method becomes highly sensitive to weight initialization and requires sensitive tuning to produce good results.

\paragraph{Overview of proposed algorithm:} In this paper, we propose a structured simultaneous learning and pruning algorithm capable of robustly identifying redundant or
irrelevant units/filters in a neural network during the early stages of the training process.
Thus, our algorithm allows to reduce the computational complexity of subsequent training
iterations besides doing so during inference. In our approach, we also use multiplicative
Bernoulli noise, i.e., we propose a spike-and-slab method and use variational techniques.
This can be interpreted as adding unit-wise dropout to the network, where each unit possesses
its own and adaptive dropout rate. Thus, during each training iteration via backpropagation,
a unit is active only with a certain probability and a different subnetwork is
realized formed from the active units. Based on variational Bayesian principles, we learn
parameterized posterior probability distributions for the Bernoulli random variables, which
determine the active units.

%\paragraph{Unique Contributions:}
While belonging to the same family of scale mixture of Gaussian priors for variational inference methods, our approach differs from the ones described above in several important aspects summarized next.
\begin{enumerate}
\item
A Bernoulli variational posterior is imposed only on the scale RVs. The weights receive a Gaussian prior as usual and are obtained via Maximum a posteriori (MAP) estimation, which is equivalent to having a delta posterior on the weights \cite{srinivas_generalized_2016} but with the technical difficulties and ill-posed KL-divergence terms circumvented.
The Bernoulli variational posterior on the scale RVs in our method naturally leads to deterministic, smaller networks and renders our method significantly different from \cite{Louizos_2017, molchanov2017variational}, which employ continuous scale pdf's.
\item
Expectations w.r.t. the scale RV's are explicitly computed in terms of the variational posterior Bernoulli parameter of a single scale RV and efficiently approximated w.r.t. the other scale RV’s using mini-batch samples
within the backpropagation algorithm. This formulation allows explicit calculation of the gradients
w.r.t. the Bernoulli parameter of each scale prior in terms of the error function values for the two forward passes corresponding to two discrete values of the scale RV. By using a 1st order Taylor series approximation, we avoid this computation, which can be significant for large network, and this turns out to be equivalent to the straight-through estimator
proposed on empirical grounds in \cite{bengio_estimating_2013} for approximating gradients of stochastic units w.r.t. the parameters of the noise pdf. This estimator is used also in \cite{srinivas_generalized_2016} but our analysis provides theoretical justification for the good properties of this approximation.
\item
We introduce a hyper-prior over the parameters of the Bernoulli prior on the scale RVs as in \cite{srinivas_generalized_2016}, which in effect makes our spike-and-slab approach multiscale.
However, unlike in \cite{srinivas_generalized_2016}, that poses a Beta hyper-prior, we
analytically derive the optimal form of this hyper-prior based on carefully examining the gradient of the ELBO w.r.t. the parameters of the posterior distribution on the scale RVs. This novel hyper-prior forces these parameters to either $0$ or $1$ in a manner that avoids premature pruning. Thus, our approach results in deterministic compressed networks that outperform state-of-the-art results. In addition our method is robust, i.e., insensitive w.r.t. the initial choice of weights and/or network structure. We remark that our analysis offers new insights on how to construct sparsifying priors, while in the literature such priors are typically selected based on their generic properties and the ability to perform needed computations.
\item
Most pruning approaches in the general framework considered apply pruning after training the full network, thus saving resources only during the prediction phase. In distinction, our approach effects simultaneous training and pruning and it can reduce training times and/or expended energy by $3-$ to $4$-fold in the case of training the VGG16 architecture \cite{simonyan_deep_2015} on the CIFAR-10 data set \cite{Krizhevsky_09}. Successful simultaneous training and pruning is challenging since aggressive pruning can save training resources but result in a poor network. Therefore, it is imperative to assure that units are pruned as soon as possible but not earlier. To this end, we develop analytical results establishing a region of attraction around $0$ for the dynamics of the posterior parameters on the scale RVs and the NN weights. That is, we provide provable conditions under which units that converge to their elimination cannot recover and survive.
\item
Our method does not require much more computation per training iteration than standard backpropagation. In particular there is no need for expensive MC simulations. In fact, because of the discrete variational posterior, during forward-, backpropagation and the update-phase only a part of the network is active, leading in principle to additional computational savings \cite{graham2015efficient}. Furthermore, in our method there is only one hyper-parameter besides the standard backpropagation ones to tune, which has a clear interpretation in terms of trade-offs between network compression and accuracy.
\end{enumerate}

The remainder of the paper is arranged as follows. In Section~\ref{sec:Model_definition}, we present the statistical modeling that forms the basis of our simultaneous pruning and learning approach. In Section~\ref{sec:Variational_Approach}, we give the analysis for fitting the parameters of the model and  in Section~\ref{sec:hyper_prior_selection}, we detail the optimal design of the hyper-prior distribution responsible for the robustness properties of our algorithm.  We provide convergence results supporting the pruning process in Section~\ref{sec:algorithm_convergence}, \ref{app:A}  and~\ref{app:B}.
%and some practical guidelines resulting from the theoretical analysis in Section~\ref{sec:practical_conditions}.
In Section~\ref{sec:algorithm}, we summarize the proposed simultaneous learning/pruning algorithm and in Section~\ref{sec:simulations}, we  present simulations on standard machine learning problems and comparison with state-of-the-art structured and variational inference-based pruning approaches. Section~\ref{sec:conclusions} concludes the paper.

\paragraph{Notation:}  We use the superscript $l$ to distinguish parameters or variables of the $l$th layer of a neural network and the subscript $j$ to denote dependence on the $j$th unit/filter in this layer. However, we also use the subscript $i$ to denote dependence on the $i$th sample in the given data set. We use $n$ in indexing such as $x(n)$ to denote iteration count. Also $\norm{\cdot}$ denotes the Euclidean norm of a vector, $M^\top$ transpose of a matrix (or vector), and $\E[\cdot]$ taking expectation with respect to the indicated random variables. Other notations are introduced in the following before their use.

\section{Problem Formulation}\label{sec:Model_definition}
We develop our algorithm for fully connected feed-forward neural networks (NNs) and comment on the simple extension to Convolutional Neural Networks (CNNs) in Section~\ref{sec:algorithm}. We consider NNs with $L-1$ hidden layers realizing mappings \mbox{$\hat y = NN(x;W, \Xi)$} through the hierarchy of functions
\begin{align}\label{eq:NN_def}
	\zeta^l = W^l \cdot \big(z^l\odot \xi^l\big), \quad z^{l+1} = a_l(\zeta^l), \quad  \xi^l \sim Bernoulli(\pi^l)
\end{align}
for $l=1,\dots, L$. Here, $z^{l+1}$ is the output of the $l^{th}$ layer and $W^1,\dots,W^L$ are the weights of the NN denoted collectively by $W$. To absorb the additive bias usually used in neural network architectures into the weight matrices $W^l$, we extend the features $z^l$ by a constant $1$ and each $W^l$ with a last row $\begin{bmatrix} 0 &\dots& 0 & 1 \end{bmatrix}$ of appropriate dimension. Then, we have $z_1 = \begin{bmatrix} x^\top & 1 \end{bmatrix}^\top$ and $z^{L+1} = \hat y$. The $\xi^l$ are (vector) Bernoulli random variables with parameters $\pi^l$, $l=1,\dots, L$ corresponding to the features $z^l$ and are denoted collectively by $\Xi$.  Therefore, each component of $\xi^l$  attains values $0$ or $1$.  The symbol $\odot$  denotes element-wise multiplication. The activation functions $a_l(\cdot)$ are assumed to be continuously differentiable nonlinearities applied to each component of their input to ensure this property for the overall NN mapping; this is the case for the bipolar sigmoidal and smoothed versions of the ReLU activation functions. The activation for the last component of $z^l$, $l=1,\ldots,L$ is taken to be the identity function to ensure that the last element of $z^{l}$, used for injecting the bias, is always equal to $1$. In the output layer, we use the linear or the softmax activation function for regression or multi-class classification problems, respectively.  We refer to each element of a hidden layer output $z^l$ as a unit. In a fully connected layer, the number of weights and therefore, the computational power needed to evaluate the layer, is proportional to the sum of the products of the number of units in two successive layers. The modeling of the NN in \eqref{eq:NN_def} with the additional Bernoulli RVs $\xi^l$ leads to the well-known dropout formulation  introduced and used in \cite{Srivastava_2014_Dropout} to regularize deep neural networks. Here, however, we aim to learn the appropriate number of units in each hidden layer and assume individual dropout RVs for each unit with learnable parameters $\pi^l$ so that the posterior distributions of the $\xi^l$  dictate the structure of the network.

\subsection{Statistical Model}
Given a data set $\mathcal{D}=\lbrace (x_i,y_i)\rbrace_{i=1}^N$, where $x_i \in \mathbb{R}^m$ are input patterns and $y_i \in \mathbb{R}^n$ are the corresponding target values, the goal is to learn the weights $W$ and appropriate parameters $\Pi=\{\pi^1,\ldots,\pi^L\}$ for the prior distributions of the RVs $\Xi$ for the NN in \eqref{eq:NN_def}.
\\
For regression tasks, the samples $(x_i,y_i)$ are assumed to be  drawn independently from the Gaussian statistical model
\begin{align}\label{eq:model_regression}
	p(Y\mid X,W,\Xi) \sim \prod_{i=1}^N \mathcal{N}\left(y_i;NN(x_i,W,\Xi),\frac{1}{\tau}\right)
\end{align}
with a variance hyper-parameter $\frac{1}{\tau}$. For a $K$-class classification problem, we assume the categorical distribution and write the statistical model as
\begin{align}\label{eq:model_classification}
%		p(Y\mid X,W,\Xi) \sim \prod_{i=1}^N \prod_{k=1}^{K} \left( NN(x_i,W,\Xi)\right)^{y_{i,k}},
		p(Y\mid X,W,\Xi) \sim \prod_{i=1}^N \prod_{k=1}^{K} (\hat y_{i,k})^{y_{i,k}},
\end{align}
where $y_{i,k}$ and $\hat y_{i,k}$ are the $k$th components of the one-hot coded target and NN output vectors, respectively for the $i$th sample point.  We note that $X,Y$ denote collectively the given patterns and targets, respectively and not the underlying RVs. Therefore, \eqref{eq:model_regression} and \eqref{eq:model_classification} give the conditional likelihood of the targets when the NN model is specified.\\

We have already assumed Bernoulli prior distributions $p(\xi^l\mid \pi^l)$ for the RVs $\xi^l$'s in $\Xi$. We will also treat the weights in $W$ as RVs with Normal prior distributions $p(W^l\mid \lambda)$
and choose the overall prior distribution to factorize as follows
\begin{align*}
	p(W,\Xi\mid \Pi, \lambda) = p(W\mid \lambda)\cdot p(\Xi \mid \Pi) = \prod_{l=1}^L p(W^l\mid \lambda) \cdot p(\xi^l\mid \pi^l)
\end{align*}
with
\begin{align}\label{eq:prior_distributions}
\begin{split}
p(W^l\mid \lambda) &\sim \mathcal N \big(0, \lambda^{-1} \textbf{I}\big)\\
p(\xi^l \mid \pi^l ) &\sim Bernoulli\big(\pi^l\big) \propto ({\pi^l})^{\xi^l} (1-\pi^l)^{1-\xi^l}.
\end{split}
\end{align}
Next, we place a hyper-prior $p(\pi^l \mid \Gamma)$
on the variables $\pi^l$, the selection of which will be crucial for the success of the proposed algorithm and will be discussed in detail in Section~\ref{sec:hyper_prior_selection}.
Combining the NN statistical model with the prior distributions gives
\begin{align*}%\label{eq:Bayes_model}
%\begin{split}
	p(Y\mid X, W, \Xi)\cdot p(W\mid \lambda)\cdot p(\Xi \mid \Pi )\cdot p( \Pi\mid \Gamma) &= p(Y,W,\Xi,\Pi\mid X,\lambda,\Gamma) \\
	&=p(W,\Pi,\Xi\mid Y,X,\lambda,\Gamma)\cdot p(Y\mid X,\lambda,\Gamma)
%\end{split}
\end{align*}
from which, we obtain the posterior
\begin{align*}
%\begin{split}
	p(W,\Pi \mid Y,X,\lambda,\Gamma) &= \int p(W,\Pi,\Xi \mid Y,X,\lambda,\Gamma) \diff \Xi\\
	&= \int p(Y\mid X,W,\Xi)\cdot p(W\mid \lambda)\cdot p(\Xi\mid \Pi)\cdot p(\Pi\mid \Gamma)\cdot \frac{1}{p(Y\mid X,\lambda,\Gamma)}\diff \Xi \\
	&\propto p(Y\mid X,W,\Pi)\cdot p(W\mid\lambda)\cdot p(\Pi\mid\Gamma).
%\end{split}
\end{align*}
Maximum a Posteriori (MAP) estimation selects the parameters $\pi$ and $W$  by maximizing the $\log$ posterior:
\begin{align}\label{eq:obj_pi}
	\max_{W,\Pi} \left[\log p(Y\mid X,W,\Pi) +\log p(W\mid \lambda) + \log p(\Pi \mid \Gamma)\right].
\end{align}
The authors of \cite{corduneanu_2001_model_sel} considered a similar formulation for the problem of choosing the appropriate number of components in a Gaussian mixture model. Their approach assumes a fixed number of potential components in the mixture and proceeds with optimal estimates of the mixing coefficients. Then, components with small mixing coefficients are eliminated from the mixture.
Here, we have a much more complex model in the form of NN with potential components being the NN units present in the initial structure. The goal is to set up an optimization problem, such that units not sufficiently contributing to the network performance on the given task are automatically identified and eliminated. Corresponding to the mixing coefficients in \cite{corduneanu_2001_model_sel}, we have the parameters $\Pi$ of the prior distributions \eqref{eq:prior_distributions} on the network's RVs $\Xi$. When during optimization, an element of $\pi^l$ converges to a small value near zero, it practically signals the removal of the associated unit from the NN.

\paragraph{Remark:} The Gaussian scale mixture priors on NN weights used in \cite{Louizos_2017, Nalisnick_2015, molchanov2017variational} lead to zero mean normal pdf's on the weights $W^l\sim\mathcal{N}\left(0,(\xi^l\odot\xi^l)\cdot\lambda^{-1}\textbf{I}\right)$ with scale $\xi^l \sim p(\xi^l\mid\pi^l)$. They are also referred to as multiplicative noise priors since we can equivalently express $W^l = \bar W^l \cdot \xi^l$ with $\bar W^l \sim \mathcal{N}\left(0,\lambda^{-1}\textbf{I}\right)$. While our approach falls into this same group using $p(\xi^l\mid\pi^l)\sim Bernoulli\big(\pi^l\big)$, an important distinction to existing works and key to the robustness properties of our method is the design of a novel hyper-prior on the $\pi^l$ parameters effectively transforming the $2$-scale Bernoulli prior into a multiscale, more flexible one (see Section~\ref{sec:flattening_hyper_prior}.)

\section{Model Fitting via a Variational Approximation Approach}\label{sec:Variational_Approach}
Given the data set, we aim to find parameter values $W, \Pi$ and in addition infer the posterior distribution on $\Xi$.
For deep neural networks the exact $p(Y \mid X,W,\Pi)$ in  \eqref{eq:obj_pi} is intractable, necessitating the approximation of the posterior distribution on $\Xi$. To this end, we employ variational methods and introduce the variational posterior
\begin{align*}
	q(\Xi \mid \Theta) \sim Bernoulli(\Theta),
\end{align*}
which factorizes to individual Bernoulli distributions for each component of $\Xi$ corresponding to a unit of the NN with parameters denoted collectively as $\Theta$. Next, we derive the Variational lower bound (see e.g. \cite[Chapter 10]{bishop_pattern_2006})
on \mbox{$p(Y \mid X,W,\Pi)$} in  \eqref{eq:obj_pi} as follows:
\begin{align*}%\label{eq:ELBO_relation}
%\begin{split}
	\log p(Y\mid X,W,\Pi) &=\log \int  p(Y,\Xi \mid X,W,\Pi) \diff \Xi%\\[5pt]
	= \log \int q(\Xi\mid \Theta) \frac{ p(Y ,\Xi\mid X,W,\Pi)}{q(\Xi\mid \Theta)}  \diff \Xi\\[5pt]
	&\geq \int q(\Xi\mid \Theta)  \log \frac{ p(Y ,\Xi\mid X,W,\Pi)}{q(\Xi\mid \Theta)}  \diff \Xi \equalhat \mathcal{L}(\Theta,W,\Pi),	
%\end{split}
\end{align*}
where Jensen's inequality was used. We proceed by replacing the intractable evidence $\log p(Y\mid X,W,\Pi)$ in \eqref{eq:obj_pi} with the Variational lower bound $\mathcal{L}(\Theta,W,\Pi)$  and obtain our main objective:
\begin{align}\label{eq:Main_obj}
	&\max_{W,\Pi,\Theta} \mathcal{L}(\Theta,W,\Pi) + \log p(W\mid \lambda)  + \log p(\Pi\mid \Gamma).
\end{align}
Further, we can express
\begin{align}\label{eq:ELBO}
\begin{split}
\mathcal{L}(\Theta,W,\Pi) &=\int q(\Xi\mid \Theta)  \log \frac{ p(Y \mid X,W,\Xi)p(\Xi\mid \Pi)}{q(\Xi\mid \Theta)}  \diff \Xi\\
&= \int q(\Xi\mid \Theta) \log p(Y\mid X,W,\Xi) \diff \Xi - \infdiv{q(\Xi\mid \Theta)}{p(\Xi\mid \Pi)}
\end{split}
\end{align}
and substituting \eqref{eq:ELBO} in \eqref{eq:Main_obj} gives:
\begin{align}\label{eq:Objective_with_KL}
\begin{split}
	\max_{W,\Pi,\Theta} \int q(\Xi\mid \Theta) \log p(Y\mid X,W,\Xi) \diff \Xi  - \infdiv{q(\Xi \mid \Theta)}{p(\Xi\mid \Pi)}
	+ \log p(W\mid \lambda)  + \log p(\Pi\mid \Gamma).
\end{split}
\end{align}
The integral term in (\ref{eq:Objective_with_KL}) represents an estimate of the loss over the given samples; its maximization leads to parameters $W$ and $\Theta$ that explain the given data set best by placing all probability mass of $q(\Xi\mid  \Theta)$ where $p(Y\mid X,W,\Xi)$ is highest. Maximizing the second part in (\ref{eq:Objective_with_KL}) or equivalently minimizing the KL-Divergence between the variational posterior and the prior distributions on $\Xi$  keeps the approximating distribution close to our prior. Finally, maximizing the last two terms in (\ref{eq:Objective_with_KL}) serves to reduce the complexity of the network by driving the update probabilities of nonessential units and their weights to zero.

Following  \cite{corduneanu_2001_model_sel}, we first define an optimization problem that can be solved explicitly for the parameters $\Pi$ in terms of $\Theta$ once the prior $p(\Pi\mid\Gamma)$ has been specified. This step amounts to a type~II MAP estimation.
Specifically, to maximize the main objective \eqref{eq:Objective_with_KL} with respect to $\Pi$, we can equivalently minimize
\begin{align}\label{eq:maxobj_pi}
	&\min_\Pi  \infdiv{q(\Xi\mid \Theta)}{p(\Xi\mid \Pi)} - \log p(\Pi\mid \Gamma).
\end{align}
Note that  \eqref{eq:maxobj_pi} factorizes over the units of the NN.
Then, for each unit $j$ in layer $l$ and after dropping indices from $\pi_j^l$, $\xi_j^l$ and $\theta_j^l$ for brevity, and using the definition of the discrete KL-Divergence,  we obtain the scalar minimization problem
\begin{align}\label{eq:opt_pi_star_full}
\begin{split}
	&\min_\pi  q(\xi=0)\log\frac{q(\xi=0)}{p(\xi=0\mid \pi)} + q(\xi=1)\log\frac{q(\xi=1)}{p(\xi=1\mid \pi)} - \log p(\pi\mid \Gamma)\\
	\Leftrightarrow\quad &\min_\pi J(\pi)=  (1-\theta) \log \frac{1-\theta}{1-\pi} + \theta \log\frac{\theta}{\pi} - \log p(\pi\mid \Gamma).
\end{split}
\end{align}
Given $p(\pi\mid \Gamma)$ and $\theta$, we define the optimum parameter $\pi^\star(\theta)$ as follows:
\begin{align}\label{eq:opt_pi_star}
\begin{split}
\pi^\star=\pi(\theta) = & \arg \min_\pi J(\pi) \\
&s.t. \quad \epsilon_1 \leq \pi \leq 1-\epsilon_2.
\end{split}
\end{align}
We restrict $\epsilon_1 \leq \pi \leq 1-\epsilon_2$ where $0<\epsilon_1, \epsilon_2 \ll 1$ to keep the $\log$ terms in  \eqref{eq:opt_pi_star_full} out of singularity and avoid infinite gradients during the learning process. The solution to this optimization problem for two particular choices of $p(\pi\mid \Gamma)$ is further discussed in Section~\ref{sec:hyper_prior_selection}. Furthermore, the regularization term on the network's weights \eqref{eq:Objective_with_KL} comes from the $\log$-probability of the Gaussian prior on $W$  and is
\begin{align}\label{eq:W-prior}
	\log p(W\mid \lambda) = \frac{N_W}{2}\log \lambda - \frac{N_W}{2} \log(2\pi)  -W^\top W \frac{\lambda}{2}
\end{align}
where $N_W$ is the total number of weights in the initial network and  we consider $W$ as a vector consisting of the network weights.

Let us express the negative of the integral term in  \eqref{eq:Objective_with_KL} as
\begin{align}\label{eq:Cost}
	C(W,\Theta)=\E_{\Xi\sim  q(\Xi\mid \Theta)}\big[-\log p(Y\mid X,W, \Xi)\big].
\end{align}
%\subsection{Learning the Network's Weights}\label{sec:LearningWeights}
Then by replacing each $\pi$ with the corresponding optimal $\pi^\star= \pi^\star(\theta)$ and using \eqref{eq:W-prior}, \eqref{eq:Cost} in \eqref{eq:Objective_with_KL}, dropping constant terms and  switching from maximization to equivalent minimization,  our optimization objective becomes
\begin{align}\label{eq:L_train_obj}
	\min_{W,\Theta} L(W,\Theta) \equalhat C(W,\Theta) + \frac{\lambda}{2}W^\top W +\infdiv{q(\Xi \mid \Theta)}{p(\Xi\mid \Pi^\star)} -\log p(\Pi^\star \mid \Gamma).
\end{align}
We employ (stochastic) gradient descent to minimize $L(W,\Theta)$ in \eqref{eq:L_train_obj}
with respect to the NN weights $W$ and variational parameters $\Theta$, as exact solutions are intractable. The required gradients are computed in the following subsections. In distinction to \cite{srinivas_generalized_2016}, we do not treat $W$ and $\Pi$ as part of the variational distribution and thus we avoid using the dirac-pdf on $W$, previously criticized in \cite{molchanov2017variational}, and the simplifying assumptions restricting $\pi=\theta$.

\subsection{Learning the Network's Weights $W$}\label{sec:LearningWeights}
From \eqref{eq:L_train_obj}, we obtain the gradient of $L(W,\Theta)$ with respect to weights $W^l$ of the $l^{th}$ layer as
\begin{align}\label{eq:grad_w}
	\frac{\partial L(W,\Theta)}{\partial W^l} =\frac{\partial C(W,\Theta)}{\partial W^l} + \lambda W^l.
\end{align}
To calculate the first term in \eqref{eq:grad_w}, we first estimate the expectation over the RVs $\Xi$ in \eqref{eq:Cost} with a sample $\hat\Xi\sim Bernoulli(\Theta)$ and since typically the data set is large, we also approximate the $\log$-likelihood in \eqref{eq:Cost} with a sub-sampled data set (mini-batch) $\mathcal{S} = \lbrace (x_i,y_i)\rbrace_{i=1}^B$ of size $B$. Thus, we have
\begin{align}\label{eq:MiniBatch_approx}
	C(W,\Theta) \approx \frac{N}{B}\sum_{i=1}^{B}\big[-\log p(y_i\mid x_i,W,\hat\Xi)\big]
\end{align}
and from the backpropagation algorithm (see for example \cite{bishop_pattern_2006}, Chapter~5)
\begin{align*}%\label{eq:C_grad}
	\frac{\partial C(W,\Theta)}{\partial W^l} \approx
\frac{N}{B}\sum_{i=1}^{B} {\delta_i^l}\cdot\big(z_i^l\odot \hat \xi^l_i\big)^\top,
\end{align*}
with $\delta_i^l$, a column vector, denoting the gradient estimate of the partial cost \linebreak${-\log p(y_i\mid x_i,W,\hat\Xi)}$ with respect to the activation input $\zeta_i^l$ in layer $l$  (see  \eqref{eq:NN_def}).
Then, the gradient of our objective $L(W,\Theta)$ with respect to the network weights $W^l$ in layer $l$ can be approximated by:
\begin{align}\label{eq:grad_m}
	\frac{\partial L(W,\Theta)}{\partial W^l} \approx \frac{N}{B}\sum_{i=1}^{B} {\delta_i^l}\cdot\big(z_i^l\odot \hat \xi^l_i\big)^\top +\lambda W^l.
\end{align}

\subsection{Learning the pruning parameters $\Theta$}\label{sec:Estimators}
To learn the parameters $\Theta$ of the variational posterior $q(\Xi\mid \Theta)$ simultaneously with learning the network weights $W$, we also need the derivative of the objective $L(W,\Theta)$ with respect to each component in $\Theta$.
In the following, we denote the scalar RV $\xi_j^l$ just as $\xi$ and all remaining RVs $\xi_{j'}^{l'}, l'\neq l$ or $j'\neq j$ as $\bar \Xi$.
%, i.e. $\bar \xi = \lbrace\xi_i^l, i\neq j\rbrace$.
We apply the same notation for $\theta_j^l$, i.e., we denote the scalar $\theta_j^l$ as $\theta$ and all remaining $\theta_{j'}^{l'}, l'\neq l$ or $j'\neq j$ as $\bar \Theta$.\\
Next using the fact that $q(\xi\mid \theta)$ is a Bernoulli distribution with parameter $\theta$, we express $C(W,\theta, \bar\Theta)$ from \eqref{eq:Cost} in a form that exposes its dependence on $\theta$ as follows:
\begin{align}\label{eq:C_def}
%\begin{split}
&C(W,\theta,\bar\Theta) = \E_{\Xi\sim  q(\Xi\mid \Theta)}\big[-\log p(Y\mid X,W, \Xi)\big]\\
&\hspace{-0.1cm}= \theta \underbrace{\E_{\bar \Xi\sim  q(\bar \Xi\mid \bar \Theta)}\hspace{-0.05cm}\big[\hspace{-0.13cm}-\log p(Y\mid X,W,\xi=1,\bar \Xi)\big]}_{\equalhat C_1(W,\bar\Theta)} \hspace{-0.03cm}+(1-\theta)\underbrace{\E_{\bar \Xi\sim  q(\bar \Xi\mid \bar \Theta)}\hspace{-0.05cm}\big[\hspace{-0.13cm}-\log p(Y\mid X,W,\xi=0, \bar \Xi)\big]}_{\equalhat C_0(W,\bar\Theta)}.\nonumber
%\end{split}
\end{align}
We can also write \eqref{eq:L_train_obj} in a similar way as
\begin{align*}%\label{eq:loss_single_unit_dep}
	 L(W,\theta,\bar\Theta)= \theta C_1 + (1-\theta)C_0 +(1-\theta)\log \frac{1-\theta}{1-{\pi}^\star} + \theta \log \frac{\theta}{{\pi}^\star}-\log p(\pi^\star \mid \Gamma) + \bar L(W,\bar \Theta),
\end{align*}
and readily compute the derivative with respect to $\theta$ as
\begin{align}\label{eq:grad_theta}
	\frac{\partial L(W,\Theta)}{\partial \theta} = C_1 - C_0 + \log \left(\frac{\theta(1-\pi^\star(\theta))}{(1-\theta)\pi^\star(\theta)}\right).
\end{align}
Note that $\frac{d L}{d \theta} = \frac{\partial L}{\partial \theta} + \frac{\partial L}{\partial \pi^\star}\frac{d\pi^\star}{d\theta} =\frac{\partial L}{\partial \theta} $, since $\pi^\star$ selected as in \eqref{eq:opt_pi_star} is such that either $\frac{\partial L}{\partial \pi^\star} = 0$ in case that $\pi^\star$ is a function of $\theta$ or $\frac{\partial \pi^\star}{\partial \theta}= 0$ in case that $\pi^\star= \epsilon_1\ {\rm or}\  \epsilon_2 = const.$

In \eqref{eq:grad_theta},  $C_1-C_0$ is the difference in the total cost with the particular unit switched on and off. A large negative value for the difference $C_1-C_0$ indicates high importance of the unit for the performance of the network.  Then in minimizing $L(W,\theta)$ via gradient descent, a negative value for $C_1-C_0$ will make the corresponding $\theta$ grow, leading to the unit being switched on and its weights being optimized more frequently. On the other hand, $C_1-C_0 > 0$ leads to smaller $\theta$ and less frequent optimization of the weights of the corresponding unit with a consequence that the weight decay term will drive these weights to zero.
Also the regularization $\log$-term in the gradient expression in \eqref{eq:grad_theta} is negative for $\theta<\pi^\star$, positive for $\theta>\pi^\star$, zero at $\theta=\pi^\star$ and $\mp \infty$ at the boundaries $\theta=0,1$, respectively. Therefore, this term drives $\theta$ towards the solution of $\theta=\pi^\star(\theta)$, the choice made in \cite{srinivas_generalized_2016}, which clearly relies only on the regularization term ignoring the data reflected in the difference $C_1-C_0$.

\subsection{$C_1-C_0$ Approximations}
Calculating the expected values $C_1$, $C_0$ is computationally not feasible for large networks, since requires evaluating the network with all combinations of units switched on and off. In the following, we discuss how to approximate the difference $C_1-C_0$ required in \eqref{eq:grad_theta} by methods based on (i) a first order Taylor approximation, (ii) a continuous relaxation of the discrete Bernoulli distribution (CONCRETE distribution), and (iii) sampling methods.
While sampling methods can provide unbiased estimates, in our experience such estimates suffer from either high variance or high computational effort in comparison to the other methods, which, however, result in biased estimates. Nevertheless, we show that the first order Taylor approximation can provide unbiased estimates in the limit when weights corresponding to a unit approach zero and can be then be particularly effective for units that are about to be pruned.
\subsubsection{Taylor Series Approximation of $C_1-C_0$}\label{sec:Taylor}
\begin{figure}[t!]
	\centering
	\resizebox{.7\linewidth}{!}{\includegraphics{./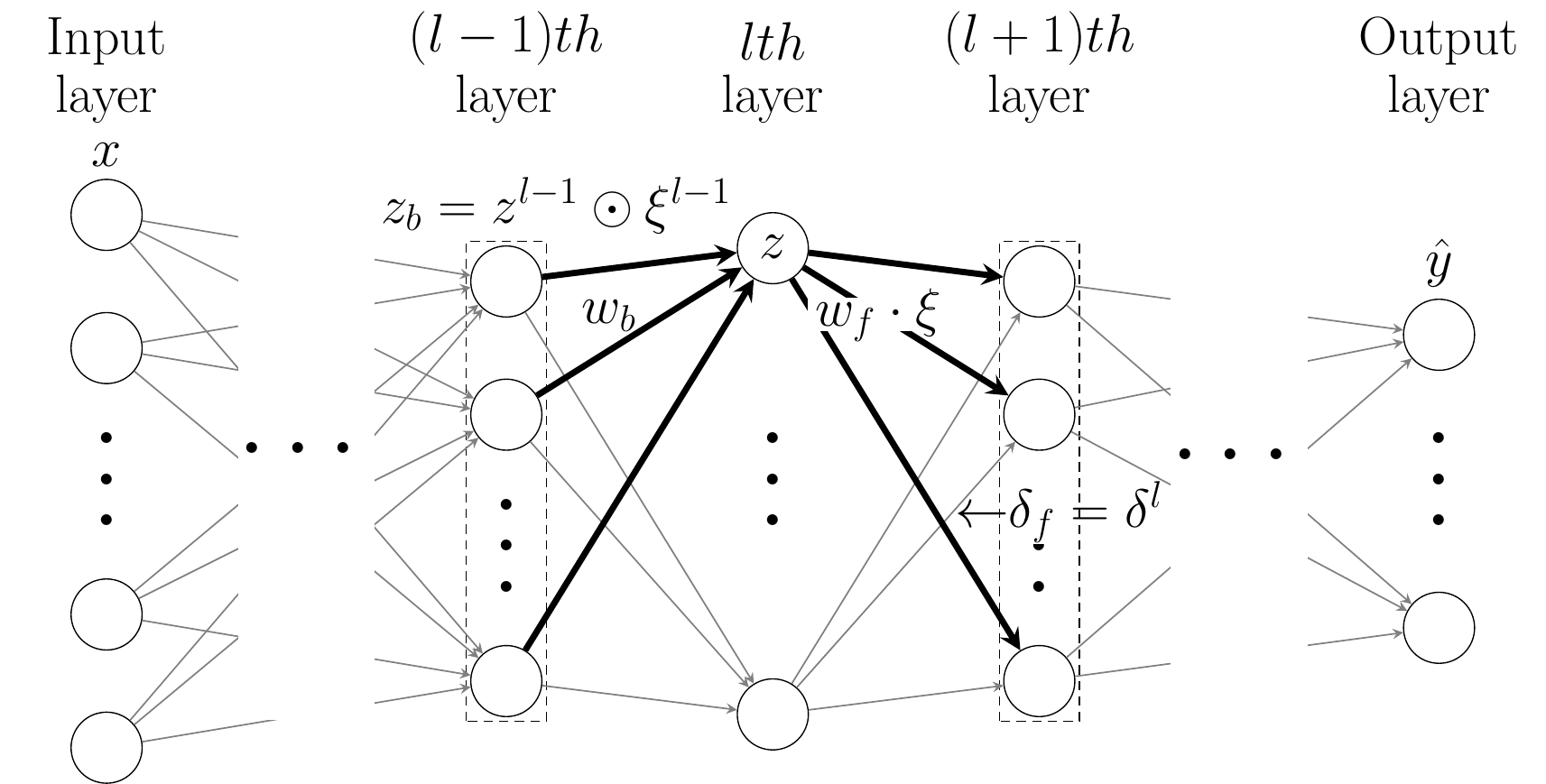}}
	\caption{Visualization of the fully connected neural network structure. $w_b$ and $w_f$ are the fan-in and fan-out weights corresponding to a single unit with output $z=z_j^l$ and selection RV $\xi=\xi_j^l$. $z_b=z^{l-1}\odot\xi^{l-1}$ is the componentwise product of the output of the $l-1$ layer and corresponding selection RVs and $\delta_f=\delta^l$ denotes the gradient of the cost with respect to the activation input $\zeta^l$ in layer $l$.}\label{fig:NN_struct}
\end{figure}
Let us first extend the simplifying notation
$z^l_j=z$, $\xi^l_j=\xi$ with $\delta^l_i=\delta_f$, $W^l_{:j} = w_f$, $z^{l-1}\odot \xi^{l-1} =z_b$, and $(W^{l-1}_{j:})^\top = w_b$
as illustrated in Figure \ref{fig:NN_struct}. Next, it will be convenient to defer taking expectation with respect to the unit's RV $\xi$ in the definition of the cost $C(W,\Theta)$ and consider it as a function also of $\xi$. Thus, we denote {(now with bar notation):
\begin{align}\label{eq:Costxi}
	\hspace{-0.25cm}\bar C(W,\Theta,\xi)=\E_{\bar\Xi\sim  q(\bar\Xi\mid \bar\Theta)}\hspace{-0.1cm}\big[\hspace{-0.1cm}-\log p(Y\mid X,W,\xi, \bar \Xi)\big]
&\hspace{-0.1cm}\approx N \E_{\substack{\mathcal{D}\sim p(\mathcal{D})\\ \bar\Xi\sim  q(\bar\Xi\mid \bar\Theta)}}\hspace{-0.1cm}\left[\hspace{-0.05cm}-\log p(y_i\mid x_i,W,\xi, \bar \Xi)\right]\hspace{-0.05cm}.
\end{align}
In the last expression of \eqref{eq:Costxi}, we view $\bar C(W,\Theta,\xi)$ as an expectation over the data $\cal D$ and the RVs $\bar\Xi$.
%over the data $\cal D$ in addition to the RVs $\bar\Xi$.
In what follows, we denote dependence of $\bar C$ only on the variables of interest. Let us consider $\tilde w_f\equalhat w_f\cdot\xi$ and
the Taylor expansion of $\bar C(\tilde w_f)$ at \mbox{a point $\tilde w_f^*$:}
\begin{align*}
	\bar C(\tilde w_f) = \bar C(\tilde w_f^*) + \frac{\partial \bar C(\tilde w_f)}{\partial \tilde w_f}\Bigr|_{\tilde w_f^*} (\tilde w_f - \tilde w_f^*) + \mathcal{O}(\norm{\tilde w_f - \tilde w_f^*})).
\end{align*}
Then, let $\tilde w_f^* = 0$ (for $\xi=0$) and evaluate at $\tilde w_f=w_f$ (for $\xi=1$) or alternatively let $\tilde w_f^*=w_f$  and evaluate at $\tilde w_f = 0$ to obtain in both cases
\begin{align}\label{eq:taylor}
% 	C_0 = C_1 - \frac{\partial C(\xi)}{\partial \xi}\Bigr|_{\xi= 1}  + h.o.t \quad \text{or}\quad  C_1 = C_0 + \frac{\partial C(\xi)}{\partial \xi}\Bigr|_{\xi= 0}   + h.o.t.
  C_1 - C_0 = w_f^\top\frac{\partial \bar C(\tilde w_f)}{\partial \tilde w_f}\Bigr|_{\tilde w_f= 0\, \text{or}\, w_f}   + \mathcal{O}(\norm{ w_f }))
\end{align}
since clearly it holds $C_0\equiv \bar C(W,\Theta,\xi=0)$ and $C_1\equiv \bar C(W,\Theta,\xi=1)$.
Furthermore, we obtain from \eqref{eq:Costxi} via backpropagation
\begin{align*}
\frac{\partial \bar C(\tilde w_f)}{\partial \tilde w_f}  \approx N \E_{\bar\Xi, \mathcal{D}}\left[z\cdot \delta_f \right] = N\E_{\bar\Xi,\mathcal{D}}\left[a_{l-1}(w_b^\top z_b) \delta_f \right]
\end{align*}
where $\delta_f=\delta_f(\xi)$ depends on $\xi$.
At each iteration, we estimate the above expectation and from \eqref{eq:taylor} we approximate the difference $C_1-C_0$ via sample means over the mini-batch data and samples $\hat\Xi$ in which $\hat\xi=0$ or $\hat\xi=1$ as follows:
\begin{align}\label{eq:Taylor_estimator}
	C_1-C_0  \approx w_f^\top \frac{\partial \bar C(\tilde w_f)}{\partial \tilde w_f} \approx  \frac{N}{B}\sum_{i=1}^B a_{l-1}(w_{b}^\top z_{b,i}) \delta_{f,i}^\top w_{f}.
\end{align}
The estimator in \eqref{eq:Taylor_estimator} is biased after dropping the higher order terms in the Taylor series expansion. However, for small weight values and as $w_f\rightarrow 0$ the estimator becomes unbiased. This perspective can explain the good success of the Taylor series approximation over the NN weights in other works such as \cite{bengio_estimating_2013}, since typically NN weights tend to be small. Indeed, the straight-through estimator in \cite{bengio_estimating_2013} approximates the expected gradient through stochastic binary neurons by back-propagating through the hard threshold function as if it had been the identity function and was found empirically to produce good results. Although motivated differently, the straight-through estimator leads to the same approximation of the difference $C_1-C_0$ as our Taylor approximation.

\subsubsection{CONCRETE Approximation}
Instead of using the exact Bernoulli distribution on $\xi$, we can relax it to its corresponding CONCRETE distribution \cite{Maddison_CONCRETE}.
\begin{align*}
	\xi (\theta, u)= 1-\sigma\left(\rule{0cm}{0.5cm}\right.\underbrace{\frac{1}{t} \left[\log(1-\theta) - \log  \theta + \log u - \log(1-u)\right]}_{h(\theta,u)}\left.\rule{0cm}{0.5cm}\right),
\end{align*}
with $\sigma(\cdot)$ the sigmoidal function, $1>t>0$ the approximation temperature and $u\sim \mathcal{U}(0,1)$ a uniformly distributed random variable.
Its derivative is given by
\begin{align*}
\frac{\partial \xi}{\partial \theta} = \sigma\big(h(\theta,u)\big) \bigg(\sigma\big(h(\theta,u)\big)-1\bigg) \frac{1}{t} \frac{1}{\theta(1-\theta)}.
\end{align*}
Then, we can express
\begin{align*}
C(W,\Theta)=\E_{\Xi\sim  q(\Xi\mid \Theta)}\hspace{-0.05cm}\big[\hspace{-0.1cm}-\log p(Y\mid X,W,\Xi)\big]  \hspace{-0.1cm}\approx \hspace{-0.1cm}\int_0^1 \hspace{-0.05cm}\underbrace{\E_{\bar\Xi\sim q(\bar\Xi\mid \bar\Theta)}\hspace{-0.05cm}\big[\hspace{-0.1cm}-\log p(Y\mid X,W,\xi(\theta,u), \bar \Xi)\big]}_{=g(\xi(\theta,u))} \diff u
\end{align*}
and obtain the approximation
\begin{align}\label{eq:concrete_estimator}
	\frac{\partial C}{\partial \theta} \approx \int \frac{\partial g(\xi)}{\partial \xi} \frac{\partial \xi}{\partial \theta} \diff u.
\end{align}
The above expectations over $u$ and $\bar \Xi$ can be approximated by samples, for example a single sample estimate.
In \cite{gal2017concrete}, this approximation is used to learn layer-wise Dropout probabilities  with parameter $t=0.1$. Our experience suggests that setting $t=0.1$ also works well for estimating $\frac{\partial C}{\partial \theta}$, striking a good compromise between approximation quality and sampling efficiency.

\subsubsection{Sampling Method}
Consider a Monte Carlo estimation over data $\mathcal{D}$ and $\Xi$ of the cost in \eqref{eq:MiniBatch_approx}. We employ the same sample $\hat\Xi$ for all mini-batch samples, which results in computational savings \cite{graham2015efficient}. Then, each mini-batch computation gives an unbiased estimate of $C_1$ or $C_0$ depending on whether $\hat \xi=1$ or $0$:
\begin{align}\label{eq:Chats_sampling}
	C_{1,0} \approx \hat C_{1,0} =\frac{N}{B} \sum_{i=1}^B -\log p(y_i \mid x_i,W,\hat \xi=\lbrace1,0\rbrace, \hat{\bar\Xi}).
\end{align}
Subsequently, the network is evaluated a second time switching the value of $\hat \xi$ while the sample $\hat{\bar\Xi}$ is kept the same. This approach requires  $M+1$ evaluations of the network to obtain estimates of $C_1$ and $C_0$ for all units, where $M$ is the number of hidden units in the network.
Then, the estimator for the difference is:
\begin{align}\label{eq:Sampling_estimator}
\begin{split}
	C_1-C_0 \approx& \frac{N}{B} \sum_{i=1}^B -\log p(y_i \mid x_i,W,\hat \xi=1, \hat{\bar\Xi}) + \frac{N}{B} \sum_{i=1}^B - \log p(y_i \mid x_i,W,\hat \xi=0, \hat{\bar\Xi})\\
	&=\frac{N}{B} \sum_{i=1}^B \log \left(\frac{p(y_i \mid x_i,W,\hat \xi=0, \hat{\bar\Xi})}{p(y_i \mid x_i,W,\hat \xi=1, \hat{\bar\Xi})}\right).
\end{split}
\end{align}
While this unbiased estimator was found to have relatively low variance in practice and the $M+1$ evaluations of forward passes of the network can be done in parallel, it may still be computationally unattractive for large networks even though the required number of forward network evaluations is linear in the number of hidden units of the network.\\
To trade-off computational efficiency vs. variance of the estimator, one may consider switching the $\hat \xi$ values of $m$ units at a time.
This reduces the number of required forward network evaluations approximately $m$-times and the resulting
unbiased estimator is
\begin{align*}
	C_1-C_0 \approx \frac{N}{B} \sum_{i=1}^B -\log p(y_i \mid x_i,W,\hat \xi=1, \hat{\bar\Xi})\frac{q(\hat\xi=1,\hat{\tilde\Xi} \mid \Theta)}{q(\hat \xi=1, \hat{\bar \Xi}\mid \Theta)} 
	 +\log p(y_i \mid x_i,W,\hat \xi=0, \hat{\bar\Xi})\frac{q(\hat\xi=0,\hat{\tilde\Xi} \mid \Theta)}{q(\hat \xi=0, \hat{\bar \Xi}\mid \Theta)},
\end{align*}
where $\hat{\tilde\Xi}$ contains the same values as $\hat {\bar \Xi}$ except $m-1$ of them are switched.
Unfortunately, the weights $\frac{q(\hat\xi=0,\hat{\tilde \Xi} \mid \Theta)}{q(\hat \xi=0, \hat{\bar \Xi}\mid \Theta)}$  can be much greater than one even for modest values of $m$ resulting in a high variance estimator. Therefore, this approach was not pursued further.

\subsubsection{Hybrid Approach}
We can utilize both, the Taylor approximation and the Sampling method to estimate the differences $C_1-C_0$ of the units in the network.
Due to the fact that the Taylor approximation estimate is asymptotically unbiased as the weights corresponding to a unit approach zero, we expect it to provide good estimates whenever the weights are small. For large DNNs trained with $\mathcal{L}_2$-regularization, generally small weights are observed.
The idea is then to use the cheap Taylor approximation for most of the network's units and use the Sampling method only for the units with the larger weights for which the Taylor estimate may have a larger bias.  One can choose the number of units to apply the more expensive sampling method based on the available computational budget during training.

\section{Selection of the Hyper-prior Distribution $p(\pi\mid\Gamma)$}\label{sec:hyper_prior_selection}
In this section, we discuss the choice of the hyper-prior $p(\pi\mid \Gamma)$ and its implications to the optimization problem \eqref{eq:opt_pi_star} and the learning of parameters $W$ and $\Theta$.
To promote strong pruning, $p(\pi\mid \Gamma)$ should be larger for values of $\pi$ close to 0.
To avoid committing what units to prune or keep before they had a change to learn, it is desirable to keep the log-term in (\ref{eq:grad_theta}) as flat as possible such that early changes in $\theta$ do not lead to stronger/weaker regularization and cause the pruning/survival of the unit prematurely. Also, flatness of this term helps to decouple the effect of how $\theta$ is initialized from the pruning of the network, thus eliminating the need to tune such initialization carefully, which is an issue with \cite{molchanov2017variational,srinivas_generalized_2016}.
Furthermore, the value of the $\log$-term in \eqref{eq:grad_theta} acts as a threshold for the first term ($C_1-C_0$), which reflects the ``usefulness'' of the corresponding unit in representing the data, for increasing the updating rate $\theta$ of the unit and thus allowing it to further learn and survive. In the following, we first discuss the choice of Beta distribution for $p(\pi\mid \Gamma)$, which is typically used when representing random variables taking values in $[0,\ 1]$. Then, we analytically derive a novel prior $p(\pi\mid \Gamma)$ to exactly meet the flatness requirement  on the log-term in (\ref{eq:grad_theta}) from the previous considerations.

\subsection{Beta hyper-prior}\label{sec:beta_hyper_prior}
Selecting $p(\pi \mid \Gamma=\{\alpha,\beta\})\sim Beta(\alpha,\beta)$ leads to
\begin{align*}
	J^{Beta}(\pi) = \underbrace{\big(1-\alpha-\theta\big)}_{=A}\log \pi +\underbrace{\big(\theta-\beta\big)}_{=B}\log( 1-\pi) + const.
\end{align*}
in  \eqref{eq:opt_pi_star} and obtaining the optimum prior on $\xi$ boils down to solving:
\begin{align*}
\min_\pi  J^{Beta}(\pi) = A\log \pi + B \log (1-\pi) \\
\quad s.t. \quad \epsilon_1 \leq \pi \leq 1-\epsilon_2
\end{align*}
for each unit of the neural network separately with potentially different $\alpha$ and $\beta$ in each layer of the network.
%Here we do not use separate values $\epsilon_1, \epsilon_2$ but rather set $\epsilon=\epsilon_1=\epsilon_2$.
In the remainder of this work, we are led to the choice $\beta >1$ and
we can summarize the choice of $\pi^\star$ in this case as follows:
\begin{align}\label{eq:pibeta}
	\pi^\star = \begin{cases}
		\frac{\theta_1+\alpha-1}{\alpha+\beta-1}=\epsilon_1\, , \quad &0\leq \theta \leq \theta_1
\equalhat{(1-\epsilon_1)(1-\alpha)+\epsilon_1\beta} \\[5pt]
		\frac{\theta+\alpha-1}{\alpha+\beta-1}\, , \quad &\theta_1< \theta < \theta_2
\equalhat{\epsilon_2(1-\alpha)+(1-\epsilon_2)\beta} \\[5pt]
		\frac{\theta_2+\alpha-1}{\alpha+\beta-1}=1-\epsilon_2\, , \quad &\theta_2\leq \theta \leq 1
	\end{cases},
\end{align}
where the last case in \eqref{eq:pibeta} is vacuous if $\theta_2>1$.
 Using this formulation of $\pi^\star$,
we can write the gradient of the loss function in \eqref{eq:grad_theta} more explicitly as
\begin{align}\label{eq:grad_theta_beta}
\begin{split}
	\frac{\partial L(W,\theta)}{\partial \theta}
	%&=\red{C_1-C_0+\frac{\partial E(\theta)}{\partial \theta}+\log\left(\frac{\beta-\theta_c}{\theta_c+\alpha-1}\right)}
	&= C_1-C_0+\begin{cases}
	\log\left(\frac{\theta(\beta-\theta_1)}{(1-\theta)(\theta_1+\alpha-1)}\right)\, ,\quad &0\leq\theta\leq \theta_1\\[5pt]
	\log\left(\frac{\theta(\beta-\theta)}{(1-\theta)(\theta+\alpha-1)}\right)\, ,\quad &\theta_1< \theta< \theta_2\\[5pt]
	\log\left(\frac{\theta(\beta-\theta_2)}{(1-\theta)(\theta_2+\alpha-1)}\right)\, ,\quad &\theta_2\leq \theta\leq 1
	\end{cases}.
\end{split}
\end{align}
We note that the pruning scheme in \cite{srinivas_generalized_2016} uses a Beta hyper-prior with both $\alpha,\beta<1$ in addition to the suboptimal choice $\pi^\star \equiv \theta$ (independent of $\alpha,\beta$).

\subsection{Flattening hyper-prior}\label{sec:flattening_hyper_prior}
Here, we propose a more careful choice for the hyper-prior $p(\pi \mid \Gamma)$ motivated by the considerations discussed earlier.
More specifically, we seek $p(\pi \mid \Gamma)$ such that $\pi^\star=\arg\min_\pi\ J(\pi)$ in \eqref{eq:opt_pi_star_full} (without the constraints $\epsilon_1\leq\pi\leq 1-\epsilon_2$) makes the $\log$-term in  \eqref{eq:grad_theta} flat for $\theta\in [0,\ 1]$, i.e.,
\begin{align}\label{eq:flat_log_term}
	\log \left(\frac{\theta(1-\pi^\star)}{(1-\theta)\pi^\star}\right) = - \log \gamma,\quad \gamma>0.
\end{align}
The solution to this optimization problem is characterized by
\begin{align*}
	\frac{\partial J(\pi)}{\partial \pi} = \frac{1-\theta}{1-\pi} - \frac{\theta}{\pi} - \frac{p'(\pi \mid \Gamma)}{p(\pi \mid \Gamma)} = 0 \quad \Leftrightarrow \quad \theta = \pi \left[1-(1-\pi)\frac{p'(\pi \mid \Gamma)}{p(\pi \mid \Gamma)}\right].
\end{align*}
Then, requiring \eqref{eq:flat_log_term} yields the following ordinary differential equation (ODE) for $p(\pi \mid \Gamma)$:
\begin{align*}
	\frac{(1-\theta)\pi}{\theta(1-\pi)} = \gamma \quad \Leftrightarrow\quad \frac{p(\pi\mid \Gamma)+ \pi p'(\pi\mid \Gamma)}{p(\pi\mid \Gamma) - (1-\pi)p'(\pi\mid \Gamma)} = \gamma \quad \Leftrightarrow \quad \frac{p'(\pi\mid \Gamma)}{p(\pi\mid \Gamma)} = \frac{\gamma-1}{1+(\gamma-1)(1-\pi)}.
\end{align*}
 The solution to this ODE is given by
\begin{align}\label{eq:pi-flat}
	p(\pi \mid \Gamma=\gamma) = \frac{c}{1+(\gamma-1)(1-\pi)}
\end{align}
with constant $c = \frac{\gamma-1}{\log(\gamma)}>0$ for the above to be a valid probability density function (PDF) on $\pi \in [0,1]$. This PDF is monotone increasing for $\gamma >1$ and monotone decreasing for $\gamma <1$. Therefore, to emphasize the lower values of $\pi$ as discussed earlier, we consider only values $\gamma <1$.

Selecting the Flattening hyper-prior in \eqref{eq:pi-flat} leads to solving \eqref{eq:opt_pi_star} with
\begin{align*}
	J^{Flattening}(\pi) = (1-\theta) \log \frac{1-\theta}{1-\pi} + \theta \log\frac{\theta}{\pi} + \log \left(1+(\gamma-1)(1-\pi)\right)
\end{align*}
which yields
\begin{align}\label{eq:piflat}
	\pi^\star = \pi(\theta) = \begin{cases}
	\epsilon_1\, , \quad &0\leq \theta \leq \theta_1\equalhat\frac{\epsilon_1}{\epsilon_1+\gamma(1-\epsilon_1)}\\
	\frac{\gamma\theta}{1+\theta(\gamma-1)}\, , \quad &\theta_1 \leq \theta \leq \theta_2\equalhat\frac{1-\epsilon_2}{1+\epsilon_2(\gamma-1)}\\
	1-\epsilon_2\, , \quad &\theta_2 \leq \theta\leq 1
	\end{cases}\,
\end{align}
Using this choice for $p(\pi \mid \gamma)$ leads to the
following explicit form for the gradient of the loss function with respect to $\theta$:
\begin{align}\label{eq:grad_theta_flattening}
	\frac{\partial L(\theta,W)}{\partial \theta} = C_1-C_0 + \begin{cases}
	\frac{\partial E(\theta)}{\partial \theta} - \log\left(\frac{\theta_1}{1-\theta_1}\gamma\right)\, ,\quad &\theta \leq \theta_1 \\[5pt]
	-\log(\gamma)\, , \quad &\theta_1 < \theta <\theta_2\\[5pt]
	\frac{\partial E(\theta)}{\partial \theta} - \log\left(\frac{\theta_2}{1-\theta_2}\gamma\right)\, ,\quad &\theta_2 \leq \theta
	\end{cases}
\end{align}
where $E(\theta)=\theta \log \theta+ (1-\theta)\log(1-\theta)$ is the negative binary entropy function. Therefore, the regularization $\log$-term in $\frac{\partial L(\theta,W)}{\partial \theta}$ has a constant value determined by the hyper-parameter $\gamma$ for almost all $\theta$ since by choice of $\epsilon_1$ and $\epsilon_2$, $\theta_1$ and $\theta_2$ are as close to $0$ and $1$, respectively, as desired.
%For $\theta_1\approx 0$ and $\theta_2\approx 1$ the gradient is almost everywhere $\frac{\partial L(\theta,W)}{\partial \theta} = \log(\gamma)$, without inducing any extra computation.
We also note that the gradient in  \eqref{eq:grad_theta_flattening} depends on $\log(\gamma)$ rather than explicitly on $\gamma$, which makes possible to numerically tolerate small values of $\gamma$ for achieving appropriate regularization as the network size increases.

\subsection{Discussion on Hyper-prior choice}\label{sec:dishyp}
In Figure~\ref{fig:regu_theta_all_three}, we plot the $\log$-term in \eqref{eq:grad_theta} as a function of $\theta$ for the Beta hyper-prior with parameters $\alpha=0.9$, $\beta=10$ and the Flattening hyper-prior with parameter $\gamma=\expnumber{1}{-2}$. We also plot the corresponding regularization term in the Stochastic Architecture Learning (SAL) \cite{srinivas_generalized_2016}, which is a Beta distribution, using $\alpha=0.099$ and $\beta=0.99$ so that $\frac{\beta}{\alpha}=10$ as chosen in  \cite{srinivas_generalized_2016}. In the case of the Flattening hyper-prior, $-\log(\gamma)$ has the clear interpretation as the minimum difference $C_0-C_1$ that a single unit must achieve to survive during the training process. $C_0-C_1$ is the reduction in the cost when the unit is activated, therefore, it can be thought as the ''usefulness'' of the unit in representing the data. Consequently, only units strongly contributing to the performance survive the training process. Indeed, if $C_0-C_1$ converges to a value less than  $-\log(\gamma)$,  $\frac{\partial L(\theta,W)}{\partial \theta}>0$  and $\theta$ will converge via the gradient descent update to its lower equilibrium point, which can only be in $[0,\theta_1]$ due to the flat nature of the curve and the fact that in both intervals $[0,\theta_1]$ and $[\theta_2,1]$ the curve is decreasing; in turn, the weights updated with rate $\theta$ do not participate in the learning process and are forced to zero by the weight decay term resulting in the automatic pruning of the corresponding unit. On the other hand, if $C_0-C_1$ converges to a value larger than $-\log(\gamma)$, then the equilibrium point for $\theta$ has to be greater than $\theta_2$. When $\epsilon_1,\ \epsilon_2$ are small enough, we have that $\theta_1\approx 0$ and $\theta_2\approx 1$ and thus the $\theta$ parameters converge to their equilibrium points at virtually $0$ or $1$, resulting in an optimally pruned deterministic network.

It is important to note that the flat shape of the regularization term in the case of the Flattening hyper-prior is responsible for the robust learning of the $\theta$ parameters in a manner insensitive to weight initialization. If the weights corresponding to a unit are initialized poorly and the corresponding $C_0-C_1$ value is small initially, the $\theta$ value decreases, but at the same time the regularization term $-\log(\gamma)$ stays constant, giving the unit a chance to recover by adapting its weights to increase the difference $C_0-C_1$ above the threshold $-\log(\gamma)$. Also, the choice of the hyper-parameter $\gamma$, along with the weight decay rate $\lambda$,  dictate the level of pruning and can be clearly used to trade-off network size versus performance. On the other hand, it is difficult to achieve the desired flat shape and appropriate level with the Beta hyper-prior and even more so under the restrictive choice $\theta=\pi$ made in \cite{srinivas_generalized_2016}. Thus, units with initial $C_0-C_1$ small are not given a chance to recover, so the final network depends heavily on weight initialization. Further, to achieve good pruning performance, the initial value of $\theta$ needs to be tuned carefully in SAL. This is not the case in our method, since we can adjust freely the level of the curve by varying $\log(\gamma)$.

\begin{figure}[t]
	\centering
	\resizebox{.55\linewidth}{!}{\includegraphics{./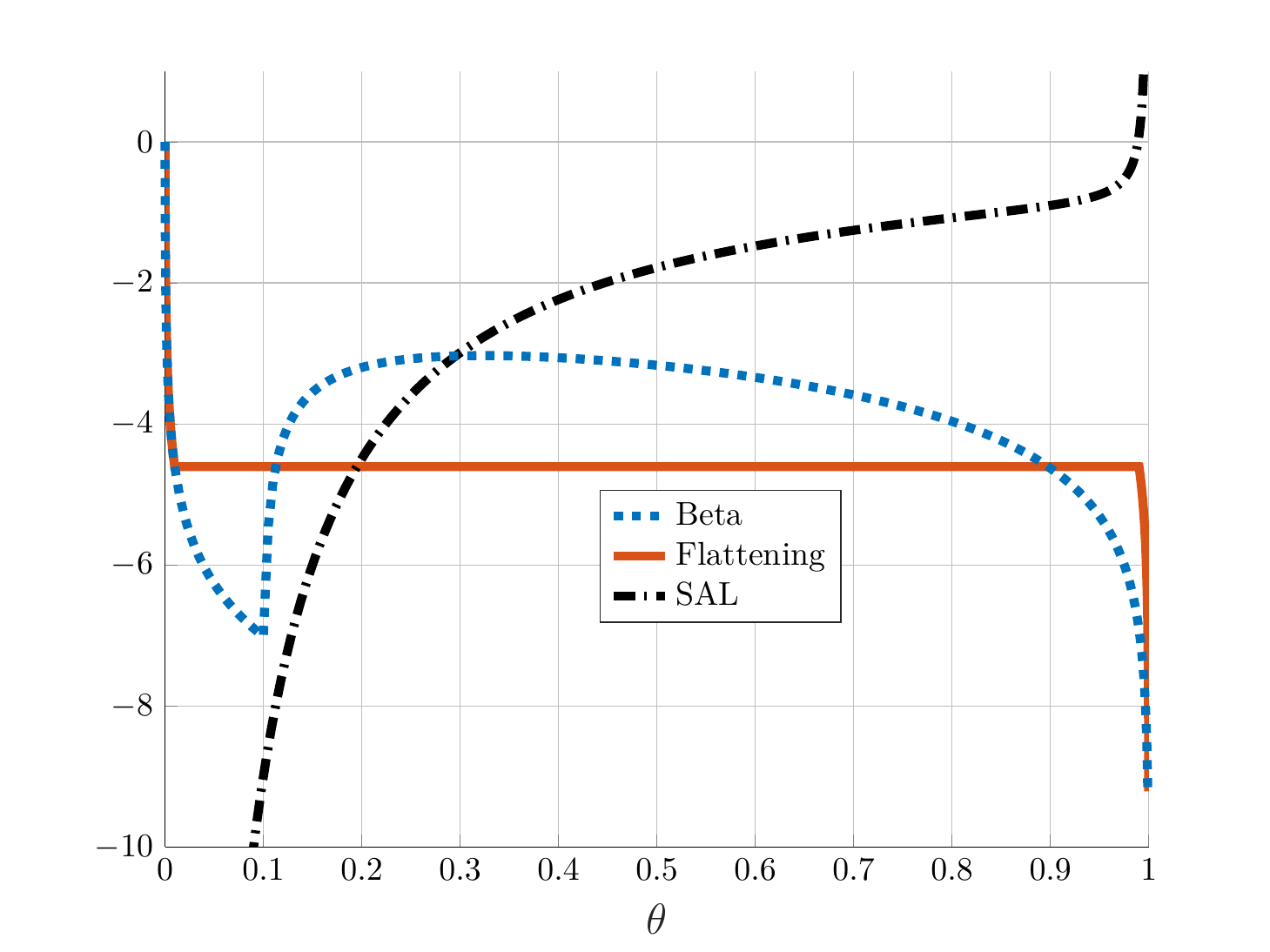}}
	\caption{The $\log$-regularization part of the gradient of the objective $L(W,\theta)$ w.r.t. $\theta$ in  \eqref{eq:grad_theta} for values of $\alpha=0.9, \beta=10$, $\gamma=\expnumber{1}{-2}$, $\epsilon_1=\epsilon_2=\expnumber{1}{-4}$ and $\alpha=0.099$, $\beta=0.99$ in the case of SAL \cite{srinivas_generalized_2016}.}\label{fig:regu_theta_all_three}
\end{figure}

\section{Convergence Results}\label{sec:algorithm_convergence}
In this section, we derive convergence results for the algorithm proposed in Section  \ref{sec:algorithm} by employing the continuous-time ordinary differential equations (ODE) underlying the gradient descent process on which the learning algorithm is based. More specifically, we show that this ODE system has equilibria points for which the weights of certain units are zero.
We then tie the convergence of the ODE system to the convergence of the learning algorithm using stochastic approximation results.

The ODE system describing the continuous learning (gradient descent) dynamics for the fan-in weights $w_b$, fan-out weights $w_f$ and update rate parameter $\theta$ of a typical hidden unit is expressed as
\begin{equation}\label{eq:ODE}
\left\{	\dot w_f = -\frac{\partial L}{\partial w_f}, \hspace{1cm} \dot w_b = -\frac{\partial L}{\partial w_b}, \hspace{1cm} \dot \theta =-\frac{\partial L}{\partial \theta} \right\}
\end{equation}
where $L(W,\Theta)$ is the objective function defined in \eqref{eq:L_train_obj}.
Next, \eqref{eq:grad_w} and expressing $C(W,\Theta$) as in \eqref{eq:C_def} yields
\begin{align}\label{eq:dLdw}
\begin{split}
	\frac{\partial L}{\partial w_f} =\theta \frac{\partial C_1}{\partial w_f} +  (1-\theta)\frac{\partial C_0}{\partial w_f} + \lambda w_f  \\
    \frac{\partial L}{\partial w_b} =\theta \frac{\partial C_1}{\partial w_b} +  (1-\theta)\frac{\partial C_0}{\partial w_b} + \lambda w_b.
\end{split}
\end{align}
Furthermore, with the notation of Section~\ref{sec:Taylor}, we derive
\begin{align*}%\label{eq:C_wf-grads}
\frac{\partial C(\tilde w_f)}{\partial w_f}=\frac{\partial C(\tilde w_f)}{\partial \tilde w_f}\cdot \frac{\partial \tilde w_f}{\partial w_f}=\E_{{\bar\Xi,\mathcal{D}}}\left[a_{l-1}\left(w_b^\top z_b\right) \delta_f \right]\cdot \xi
\end{align*}
and obtain for $\xi=0$ and $\xi=1$, respectively:
\begin{align}\label{eq:dCdwf}
\frac{\partial C_0}{\partial w_f} = 0 \quad \text{and} \quad
&\frac{\partial C_1}{\partial w_f} =  \E_{{\bar\Xi,\mathcal{D}}}\left[a_{l-1}\left(w_b^\top z_b\right) \delta_f(\xi=1) \right].
\end{align}
Also, from the backpropagation algorithm and for fixed $\xi$,
\begin{align*}%\label{eq:C_wb-grads}
\frac{\partial C(w_b)}{\partial w_b}=\E_{{\bar\Xi,\mathcal{D}}}\left[z_b a_{l-1}'\left(w_b^\top z_b\right) w_f^\top\delta_f\right]\cdot \xi
\end{align*}
and obtain for $\xi=0$ and $\xi=1$, respectively:
\begin{align}\label{eq:dCdwb}
\frac{\partial C_0}{\partial w_b} = 0 \quad \text{and} \quad
&\frac{\partial C_1}{\partial w_b} =  \E_{{\bar\Xi,x_i,y_i}}\left[a_{l-1}\left(w_b^\top z_b\right) \delta_f(\xi=1)\right]
\end{align}
We now substitute \eqref{eq:dCdwf} and \eqref{eq:dCdwb} in \eqref{eq:dLdw} and also use \eqref{eq:grad_theta} derived earlier to write the ODE system \eqref{eq:ODE} describing the continuous learning (gradient descent) dynamics for a single unit as follows:
\begin{align}\label{eq:Sys_GrDes}
\begin{split}
	\dot w_f &= -\frac{\partial L}{\partial w_f} =-\theta \underbrace{\E\left[\frac{a_{l-1}(w_b^\top z_b)}{w_b^\top z_b} \delta_f z_b^\top\right]}_{\equalhat M_1}w_b -\lambda w_f    \\
	\dot w_b &= -\frac{\partial L}{\partial w_b} = -\theta \underbrace{\E\left[a_{l-1}'(w_b^\top z_b) z_b \delta_f^\top\right]}_{\equalhat M_2}w_f - \lambda w_b \\   	
	\dot \theta &=- \frac{\partial L}{\partial \theta}= -(C_1 - C_0) + \log \left(\frac{(1-\theta)\pi^\star}{\theta(1-\pi^\star)}\right)
\end{split}
\end{align}
We note that the expectations in \eqref{eq:Sys_GrDes} are taken over the RV $\bar \Xi$ as well as the data set $\mathcal{D}$ and that the backpropagation error $\delta_f$ is computed with the unit being active, i.e., $\xi=1$.

Clearly, $\{w_f=0, w_b=0, \theta=\pi^\star\}$ is a valid equilibrium point of the dynamical system \eqref{eq:Sys_GrDes} since $C_1-C_0 = 0$ when the weights of the unit are zero. The following result establishes conditions under which this equilibrium point is asymptotically stable.
\begin{theorem}\label{thm:stability}
	Consider the dynamical system \eqref{eq:Sys_GrDes} written as
	\begin{align}\label{eq:dyn_sys}
	\begin{split}
	\begin{bmatrix}
	\dot w_f \\ \dot w_b
	\end{bmatrix} &= \begin{bmatrix}
	-\lambda I &  -\theta M_1 \\ -\theta M_2 & -\lambda I
	\end{bmatrix}\begin{bmatrix}
	w_f \\  w_b
	\end{bmatrix}\\
	\dot \theta &= -(C_1-C_0) + \log\left[\frac{(1-\theta)\pi^\star}{\theta(1-\pi^\star)}\right]
	\end{split}
	\end{align}
	with $w_b \in \mathbb{R}^p$, $w_f \in \mathbb{R}^q$, $\theta \in \mathbb{R}$ and matrices $M_1, M_2$ of appropriate dimension as well as scalar functions $C_1,C_0$ of $w_f,w_b$ and $\lambda >0$ a fixed parameter. Assume that the difference $C_1-C_0$ satisfies the inequality
	\begin{align}\label{eq:kbnd}
	|C_1-C_0| \leq \kappa\cdot \phi, \quad 0<\kappa<\infty,
	\end{align}
	where $\phi\equalhat \frac{1}{2}\left[{\norm{w_f}^2}+ {\norm{w_b}^2}\right]$ and also the maximum singular value of $M = M_1+M_2^\top$ is bounded as follows:
\begin{align}\label{eq:ebnd}
\bar\sigma( M_1+ M_2^\top)\leq \eta < \infty.
\end{align}
Lastly, let the hyper-prior $p(\pi\mid \Gamma)$ be such that the solution to $\pi^\star=\rm{argmin}_{\pi} J(\pi)$ under the constraints $0<\epsilon_1\leq\pi\leq 1-\epsilon_2<1$ with $J(\pi)$ defined in \eqref{eq:opt_pi_star_full} satisfies
	\begin{align}\label{eq:cond_prior}
	\pi^\star > \theta \quad \text{if}\ \theta<\epsilon_1 \quad \text{and}\quad  \pi^\star < \theta \quad \text{if}\ \theta>\epsilon_1.
	\end{align}
	Then, $\{w_f=0,w_b=0,\theta=\epsilon_1\}$ is an equilibrium of the system \eqref{eq:dyn_sys} and it is locally asymptotically stable if
	\begin{align}\label{eq:cond_stability}
	0 < \epsilon_1 < \frac{1}{2}\frac{\lambda}{\eta + \kappa}.
	\end{align}
	Moreover, if \eqref{eq:cond_stability} holds \begin{align}\label{eq:RoA}
	\mathcal{A}= \left\{w_b \in \mathbb{R}^p, w_f \in \mathbb{R}^q, \theta \in \mathbb{R} \quad \middle|\quad {\norm{w_f}^2}+{\norm{w_b}^2} + (\theta-\epsilon_1)^2  < \left(\frac{\lambda}{\eta + \kappa}-\epsilon_1\right)^2 \right\}
	\end{align}
	belongs to the region of attraction to the equilibrium point.
\end{theorem}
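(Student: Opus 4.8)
The plan is to establish the result by a direct Lyapunov argument with the candidate
\begin{align*}
V(w_f,w_b,\theta) = \tfrac12\left(\norm{w_f}^2 + \norm{w_b}^2 + (\theta-\epsilon_1)^2\right) = \phi + \tfrac12(\theta-\epsilon_1)^2 ,
\end{align*}
which is positive definite about $\{w_f=0,w_b=0,\theta=\epsilon_1\}$ and whose sublevel sets are exactly the balls appearing in \eqref{eq:RoA}. First I would confirm that this point is an equilibrium: the weight equations in \eqref{eq:dyn_sys} vanish at $w_f=w_b=0$, while the bound \eqref{eq:kbnd} forces $C_1-C_0=0$ there (since $\phi=0$), and condition \eqref{eq:cond_prior} gives $\pi^\star=\epsilon_1=\theta$ at $\theta=\epsilon_1$, so the logarithmic term in $\dot\theta$ also vanishes.

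Next I would differentiate $V$ along \eqref{eq:dyn_sys} and treat the weight block and the $\theta$ coordinate separately. For the weights, using $w_b^\top M_2 w_f = w_f^\top M_2^\top w_b$, the coupling collapses to a single indefinite term, so that
\begin{align*}
w_f^\top\dot w_f + w_b^\top\dot w_b = -\lambda\left(\norm{w_f}^2+\norm{w_b}^2\right) - \theta\, w_f^\top(M_1+M_2^\top)w_b .
\end{align*}
This cross term is controlled by \eqref{eq:ebnd} together with $\norm{w_f}\norm{w_b}\le\phi$, which yields the estimate $-(2\lambda-|\theta|\eta)\phi$. For the $\theta$ coordinate I would split $(\theta-\epsilon_1)\dot\theta$ into the data term $-(\theta-\epsilon_1)(C_1-C_0)$, whose magnitude is at most $|\theta-\epsilon_1|\kappa\phi$ by \eqref{eq:kbnd}, and the regularization term $(\theta-\epsilon_1)\log\frac{(1-\theta)\pi^\star}{\theta(1-\pi^\star)}$. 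The key structural point is that condition \eqref{eq:cond_prior} renders this last product nonpositive for every $\theta$: for $\theta>\epsilon_1$ one has $\pi^\star<\theta$, making the logarithm negative, and symmetrically for $\theta<\epsilon_1$. Thus the regularization acts as a restoring force toward $\epsilon_1$ and can simply be dropped as $\le 0$.

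Combining these estimates gives $\dot V \le -\phi\left[\,2\lambda-|\theta|\eta-|\theta-\epsilon_1|\kappa\,\right]$. Writing $|\theta|\le\epsilon_1+|\theta-\epsilon_1|$ and restricting to $\mathcal A$, where the deviation radius stays below $\tfrac{\lambda}{\eta+\kappa}-\epsilon_1$, the bracket is bounded below by a strictly positive constant; condition \eqref{eq:cond_stability} guarantees that this radius is positive, so that $\mathcal A$ is a genuine neighbourhood of the equilibrium and the lower bound is uniform over $\mathcal A$. Hence $\dot V<0$ at every non-equilibrium point of $\mathcal A$, because the bracket term vanishes only when $\phi=0$ and the regularization term vanishes only at $\theta=\epsilon_1$. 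Since $\mathcal A$ is a sublevel set of $V$ it is forward invariant, and the standard Lyapunov stability theorem then delivers both local asymptotic stability of the equilibrium and the inclusion $\mathcal A\subseteq$ region of attraction.

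I expect the main obstacle to be the simultaneous domination of two qualitatively different destabilizing terms by the weight-decay dissipation $-\lambda(\norm{w_f}^2+\norm{w_b}^2)$: the sign-indefinite coupling $-\theta\, w_f^\top(M_1+M_2^\top)w_b$, whose size grows with $|\theta|$ and must therefore be controlled uniformly over the $\theta$-extent of the region, and the cost difference $C_1-C_0$, which is accessible only through the quadratic envelope \eqref{eq:kbnd} rather than in closed form. Producing a single clean, strictly positive lower bound on the bracket throughout all of $\mathcal A$ — not merely near the equilibrium — is precisely what pins down the explicit radius in \eqref{eq:RoA} and the smallness requirement \eqref{eq:cond_stability} on $\epsilon_1$.
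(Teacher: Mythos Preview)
Your proposal is correct and follows essentially the same Lyapunov argument as the paper, with the identical candidate $V=\phi+\tfrac12(\theta-\epsilon_1)^2$ and the same use of condition \eqref{eq:cond_prior} to show the regularization term is nonpositive. The only noteworthy difference is cosmetic: where the paper introduces an auxiliary slack $e\in\bigl(\kappa(\theta_o-\epsilon_1),\,\kappa\theta_o\bigr)$ and invokes a Schur-complement positive-definiteness argument to obtain $\dot\phi\le -e\phi$, you bound the cross term $-\theta\,w_f^\top(M_1+M_2^\top)w_b$ directly via $\bar\sigma(M)\le\eta$ and $\norm{w_f}\norm{w_b}\le\phi$, arriving at $\dot\phi\le-(2\lambda-|\theta|\eta)\phi$ without the detour; your triangle-inequality step $|\theta|\le\epsilon_1+|\theta-\epsilon_1|$ then replaces the paper's separate verification that $\theta<\theta_o$ throughout $\mathcal A$.
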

\begin{proof}
	Clearly given \eqref{eq:kbnd}, $(w_f=0,w_b=0,\theta=\epsilon_1)$ is an equilibrium point of the dynamics of $w_f$ and $w_b$ and $\theta$ in \eqref{eq:dyn_sys}. To establish its local asymptotic stability, we consider the Lyapunov candidate function
	\begin{align*}%\label{eq:lyap_function}
	V(\phi,\theta) =\phi + \frac{1}{2} (\theta-\epsilon_1)^2,
	\end{align*}
	which satisfies $V>0$ for $\{w_f, w_b,\theta\}\not=\{0,0,\epsilon_1\}$ and show that $\dot V<0$ in the region $\mathcal A$, which is clearly a level set of $V$.

 First, let
	\begin{align}\label{eq:th-eps1}
	1 >\theta_{o}\equalhat\frac{\lambda}{\eta+\kappa} > 2 \epsilon_1
	\end{align}
	and assume that
	$0\leq\theta < \theta_{o}$.
	Next, pick $e>0$ such that
	\begin{align}\label{eq:e_bounds}
	\kappa (\theta_{o}-\epsilon_1) < e < \kappa \theta_{o}
	\end{align}
	holds for $\epsilon_1$ satisfying \eqref{eq:cond_stability}. Using the right part of the above inequality we get
	\begin{align*}
	e < \kappa \theta_{o} = \kappa \frac{\lambda}{\eta+\kappa} = \lambda - \theta_{o} \eta
	\end{align*}
	implying that
	\begin{align*}
	\lambda > \theta \bar \sigma(M) + e \quad \text{or aternatively} \quad  \lambda>e \quad \text{and}\quad  (\lambda-e) I - \frac{1}{\lambda-e} \theta^2 M^\top M \succ 0
	\end{align*}
	where $M\equalhat M_1+M_2^\top$.
	Then, by the Schur-condition for positive definiteness \cite[Appendix A.5.5]{boyd2004convex} the following matrix $M_\theta$ is positive definite:
	\begin{align*}
	M_\theta = \begin{bmatrix}
	(\lambda-e) I & \theta M \\ \theta M^\top & (\lambda-e) I
	\end{bmatrix} \succ 0,
	\end{align*}
and we obtain
	\begin{align}\label{eq:phi_bound}
	\dot{\phi}=
	\begin{bmatrix}
	{w_f}^\top & {w_b}^\top
	\end{bmatrix}
	\begin{bmatrix}
	-\lambda I & -\theta M \\ -\theta M^\top & -\lambda I
	\end{bmatrix}
	\begin{bmatrix}
	w_f \\ w_b
	\end{bmatrix}
	\leq
	\begin{bmatrix}
	{w_f}^\top & {w_b}^\top
	\end{bmatrix}
	\begin{bmatrix}
	-e I & 0 \\ 0 & -e I
	\end{bmatrix}
	\begin{bmatrix}
	w_f \\ w_b
	\end{bmatrix}
	=- e \phi.
	\end{align}
	Next, the Lie-Derivative of the candidate Lyapunov function $V$ along the dynamics of the system \eqref{eq:dyn_sys} satisfies
	\begin{align} \label{eq:Lie_der}
	\dot V =  \dot \phi +(\theta-\epsilon_1)\dot \theta \leq
	-e\phi + |\theta-\epsilon_1|\kappa \phi + (\theta-\epsilon_1) \log\left(\frac{(1-\theta)\pi^\star}{\theta(1-\pi^\star)}\right)
	\end{align}
	using \eqref{eq:phi_bound} and  $\dot\theta$ from \eqref{eq:dyn_sys}.
	Now note that from \eqref{eq:e_bounds} and since $\theta_{o}>2\epsilon_1$, it holds
	\begin{align}\label{eq:ebnds2}
	e> \kappa |\theta_{o}-\epsilon_1| > \kappa |\theta-\epsilon_1|.
	\end{align}
Also note that assumption \eqref{eq:cond_prior} implies that
\begin{align*}
\frac{(1-\theta)\pi^\star}{\theta(1-\pi^\star)}>1 \quad \text{if}\, \theta < \epsilon_1 \quad \text{and} \quad
\frac{(1-\theta)\pi^\star}{\theta(1-\pi^\star)}<1 \quad \text{if}\, \theta > \epsilon_1
\end{align*}
and this yields
\begin{align}\label{eq:theta_ineq_2}
(\theta-\epsilon_1)\log\left(\frac{(1-\theta)\pi^\star}{\theta(1-\pi^\star)}\right) < 0 \quad \forall \theta \in [0,1],\ \theta\not=\epsilon_1.
\end{align}
Using \eqref{eq:ebnds2} and \eqref{eq:theta_ineq_2} in \eqref{eq:Lie_der}
gives that $\dot V<0$ away from $\theta=\epsilon_1$, $\phi=0$.	
Finally, observe that the assumption $\theta<\theta_{o}$, which was used to obtain $\dot V<0$, holds in the region $\mathcal A$ since $\phi>0$. Hence, the analyzed equilibrium point is locally asymptotically stable with region of attraction encompassing $\mathcal{A}$.
\end{proof}
Next, we discuss the required assumptions in Theorem~\ref{thm:stability}. We show in \ref{app:A} that \eqref{eq:kbnd} and \eqref{eq:ebnd} hold true, if the NN weights $W$ remain bounded during the training process, i.e., $\phi\leq\phi_{max}$. This condition can be assured by introducing the projection step in the proposed algorithm. However, we remark that in our simulation experiments by choosing a large $\phi_{max}$, we never had to exercise the projection step. The next result shows that condition \eqref{eq:cond_prior} is satisfied for monotonically decreasing priors $p(\pi\mid \Gamma)$, a property that is desirable for effective pruning as we argued earlier. Indeed, both choices for $p(\pi\mid \Gamma)$ discussed in Section~\ref{sec:hyper_prior_selection} enjoy this property and, therefore, satisfy the required condition for the applicability of Theorem \ref{thm:stability}.

\begin{lemma}\label{lemma:hyper_prior_characterization}
Let $\pi^\star=\rm{argmin}_{\pi} J(\pi)$ under the constraints $0<\epsilon_1\leq\pi\leq 1-\epsilon_2<1$ with $J(\pi)$ defined in \eqref{eq:opt_pi_star_full} using a hyper-prior $p(\pi\mid \Gamma)$ that is monotonically decreasing in $[0,1]$. Then, it holds
\begin{align*}%\label{eq:theta_ineq}
%\red{(\theta-\epsilon_1)\log\left(\frac{(1-\theta)\pi^\star}{\theta(1-\pi^\star)}\right) < 0 \quad \forall \theta \in [0,1],\ \theta\not= \epsilon_1.}
%\end{align}
%\begin{align}
\pi^\star > \theta \quad \text{if}\ \theta<\epsilon_1 \quad \text{and}\quad  \pi^\star < \theta \quad \text{if}\ \theta>\epsilon_1.
\end{align*}
\end{lemma}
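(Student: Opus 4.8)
The plan is to work directly with the stationarity structure of the scalar problem \eqref{eq:opt_pi_star}. Differentiating the objective $J(\pi)$ from \eqref{eq:opt_pi_star_full} gives
\[
	\frac{\partial J(\pi)}{\partial \pi} = \underbrace{\frac{1-\theta}{1-\pi} - \frac{\theta}{\pi}}_{\equalhat\, g(\pi)} - \frac{p'(\pi\mid\Gamma)}{p(\pi\mid\Gamma)},
\]
and the first step is to record two elementary facts about the two pieces. A direct computation shows $g'(\pi) = \frac{1-\theta}{(1-\pi)^2} + \frac{\theta}{\pi^2} > 0$, so $g$ is strictly increasing on $(0,1)$; since $g(\theta)=0$, we have $g(\pi)<0$ for $\pi<\theta$ and $g(\pi)>0$ for $\pi>\theta$. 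Because $p(\pi\mid\Gamma)$ is monotonically decreasing, $p'(\pi\mid\Gamma)\le 0$, and hence the prior contribution $-p'/p$ is nonnegative throughout $[0,1]$.

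The first assertion is then immediate from feasibility rather than from any optimality analysis: the constraint $\epsilon_1\le\pi\le 1-\epsilon_2$ forces $\pi^\star\ge\epsilon_1$, so whenever $\theta<\epsilon_1$ we trivially obtain $\pi^\star\ge\epsilon_1>\theta$. The substance lies in the second assertion, which I would split according to whether $\theta$ is itself feasible. If $\theta>1-\epsilon_2$, the entire feasible interval lies strictly to the left of $\theta$, whence $\pi^\star\le 1-\epsilon_2<\theta$ with nothing further to argue.

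The remaining case is $\epsilon_1<\theta\le 1-\epsilon_2$, and here is where the two recorded facts combine. For every $\pi\in(\theta,1-\epsilon_2]$ we have $\frac{\partial J}{\partial \pi}=g(\pi)+\big({-p'/p}\big)\ge g(\pi)>0$, so $J$ is strictly increasing on $[\theta,1-\epsilon_2]$ and no minimizer can lie in $(\theta,1-\epsilon_2]$. Evaluating the derivative at the left endpoint gives $\frac{\partial J}{\partial \pi}\big|_{\pi=\theta}=-p'(\theta\mid\Gamma)/p(\theta\mid\Gamma)>0$ for a strictly decreasing prior, so moving left from $\theta$ strictly decreases $J$; thus $\theta$ is not optimal, and the minimizer of $J$ over $[\epsilon_1,\theta]$ must be attained at some $\pi^\star<\theta$ (either an interior stationary point or the clamped value $\epsilon_1$). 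Combining the cases yields $\pi^\star<\theta$ whenever $\theta>\epsilon_1$, which is exactly \eqref{eq:cond_prior}.

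I do not expect a deep obstacle, since the heavy lifting is just the monotonicity of $g$ against the definite sign of $-p'/p$. The only genuine care is boundary bookkeeping: covering the case where $\theta$ falls outside the feasible box on the right, and distinguishing strict from weak monotonicity of $p$, as it is the \emph{strict} decrease that upgrades $\frac{\partial J}{\partial \pi}|_{\pi=\theta}\ge 0$ to the strict conclusion $\pi^\star<\theta$. As a consistency check I would confirm the general argument against the closed forms \eqref{eq:piflat} and \eqref{eq:pibeta}, where one reads off directly that both the flattening hyper-prior and the $\beta>1$ Beta hyper-prior give $\pi^\star<\theta$ for $\theta>\epsilon_1$, with $\pi^\star=\epsilon_1$ on the low-$\theta$ clamped branch.
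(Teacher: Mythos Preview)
Your proof is correct and follows essentially the same approach as the paper's: both dispose of $\theta<\epsilon_1$ by feasibility and then show $\frac{dJ}{d\pi}>0$ for $\pi\geq\theta$ (your $g(\pi)$ simplifies to the paper's $\frac{\pi-\theta}{\pi(1-\pi)}$), forcing the minimizer to the left of $\theta$. Your version is slightly more careful in explicitly treating the boundary case $\theta>1-\epsilon_2$ and in flagging that \emph{strict} monotonicity of $p$ is what rules out $\pi^\star=\theta$, but the core argument is identical.
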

\begin{proof}
	Clearly, if $\theta<\epsilon_1$, we have that $\pi^\star>\theta$ since we directly constrained $\pi\geq \epsilon_1$. Further, if $\theta>\epsilon_1$, notice that from
\begin{align*}%\label{eq:Jp_der}
		\frac{ dJ(\pi)}{d\pi}=\frac{\pi-\theta}{\pi(1-\pi)}-\frac{d\log p(\pi\mid\Gamma)}{d\pi}
\end{align*}
and since $p(\pi\mid\Gamma)$ is assumed to be monotonically decreasing, we have that $\frac{ dJ(\pi)}{d\pi}>0$ for $\pi\geq\theta$. Thus, it follows that $J(\pi)$ is minimized under the considered constraints for $\epsilon_1\leq\pi^\star<\theta$ and the proof is complete.
\end{proof}
The following theorem connects Theorem~\ref{thm:stability} to our simultaneous pruning/training algorithm proposed in Section \ref{sec:algorithm}. This algorithm generates discrete-time sequences $\{w_f(n), w_b(n), \theta(n)\}_{n\geq0}$ for each unit of the network to minimize the objective \eqref{eq:L_train_obj} based on the discretization of the ODE system \eqref{eq:Sys_GrDes} and stochastic gradient descent. More specifically,
\begin{equation}\label{eq:DE}
%\blue{\left\{
w_f(n+1) = w_f(n)-a(n)\widehat{\frac{\partial L}{\partial w_f}},\
w_b(n+1) = w_b(n)-a(n)\widehat{\frac{\partial L}{\partial w_b}},\
\theta(n+1) =\theta(n)-a(n)\widehat{\frac{\partial L} {\partial \theta}}
%\right\}}
\end{equation}
where the estimates of the gradient of the objective function $L(W(n),\Theta(n))$ are obtained from \eqref{eq:grad_m} and \eqref{eq:grad_theta} and the stepsize $a(n)$ satisfies the Robbins-Monro conditions
\begin{equation}\label{eq:RM}
\sum_{n=0}^\infty a(n) = \infty  \quad \text{and}\quad \sum_{n=0}^\infty a(n)^2 = 0.
\end{equation}
\begin{theorem}\label{thm:Alg_conv}
Consider the sequence $x(n)\equiv\{w_f(n),w_b(n), \theta(n)\}_{n\geq0}$ of the weights and their update rate corresponding to a single unit/filter of the network as generated by \eqref{eq:DE} and \eqref{eq:RM}
%the algorithm presented in Section \ref{sec:algorithm}
and assume that this sequence enters and remains within a region of attraction for the asymptotically stable equilibrium point $x^*\equiv\{w_f=0,w_b=0, \theta=\epsilon_1\}$ of the ODE system \eqref{eq:Sys_GrDes} contained in $\{\phi=\|w_f\|^2+\|w_b\|^2\leq 2\phi_{max},\,\newline \theta\in[\theta_l,\ \theta_h]\}$ with $0<\theta_l<\epsilon_1<\theta_h<1$; in particular such a region is
\begin{align}\label{eq:ROAd}
	\mathcal{A}_d= {\cal A}\bigcap\left\{\norm{w_f}^2+\norm{w_b}^2<2\phi_{max}\right\}\bigcap
 \left\{\theta \in (
 \theta_l, \theta_h)\right\},
\end{align}
where $\cal A$ is defined in \eqref{eq:RoA}.
Then, $x(n)$ converges to $x^*$ almost surely.
\end{theorem}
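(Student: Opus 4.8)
The plan is to invoke the ODE method of stochastic approximation, which guarantees that under mild conditions the discrete iterates \eqref{eq:DE} asymptotically track the trajectories of the mean-field ODE \eqref{eq:Sys_GrDes}; convergence of the algorithm then follows from the local asymptotic stability already established in Theorem~\ref{thm:stability}. First I would rewrite \eqref{eq:DE} in the canonical form
\begin{align*}
x(n+1) = x(n) + a(n)\,\hat g(n), \qquad \hat g(n)\equalhat -\Big(\widehat{\tfrac{\partial L}{\partial w_f}},\ \widehat{\tfrac{\partial L}{\partial w_b}},\ \widehat{\tfrac{\partial L}{\partial \theta}}\Big),
\end{align*}
and decompose $\hat g(n) = g(x(n)) + \varepsilon(n+1)$, where $g(x)$ is exactly the right-hand side of \eqref{eq:Sys_GrDes} (the negative true gradient) and $\varepsilon(n+1)$ collects the estimation error caused by the mini-batch subsampling and the Monte-Carlo sampling of $\Xi$.

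The next step is to verify the hypotheses of a standard stochastic-approximation convergence theorem (Borkar; Kushner and Yin; Benveniste--M\'etivier--Priouret): (i) the mean field $g$ is locally Lipschitz on the bounded set $\mathcal{A}_d$, which holds because the NN map is continuously differentiable and the expectations defining $M_1,M_2,C_1,C_0$ in \eqref{eq:Sys_GrDes} are taken over finitely many configurations of $\bar\Xi$ and the finite data set, hence $g$ is $C^1$ there; (ii) the stepsizes obey the Robbins--Monro conditions \eqref{eq:RM}; (iii) with respect to the natural filtration $\mathcal{F}_n=\sigma(x(0),\varepsilon(1),\dots,\varepsilon(n))$ the noise splits as $\varepsilon(n+1)=M(n+1)+b(n)$ with $M(n+1)$ a martingale difference and $b(n)$ a bias term; and (iv) the conditional second-moment bound $\E[\,\norm{M(n+1)}^2\mid\mathcal{F}_n\,]\leq K\big(1+\norm{x(n)}^2\big)$, which is immediate on $\mathcal{A}_d$ since the iterates, the patterns/targets and the Bernoulli samples all range over bounded sets. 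Crucially, the boundedness of the iterates---usually the most delicate hypothesis---is supplied by the standing assumption that $x(n)$ enters and remains in $\mathcal{A}_d$, so no separate stability-of-the-iterates argument is needed.

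The subtle point is the bias $b(n)$. The mini-batch weight gradient \eqref{eq:grad_m} is an unbiased estimate and contributes only martingale-difference noise. For the $\theta$-update, which through \eqref{eq:grad_theta} requires estimating $C_1-C_0$, the cleanest route is the unbiased sampling estimator \eqref{eq:Sampling_estimator}, for which $b(n)\equiv 0$ and $\E[\varepsilon(n+1)\mid\mathcal{F}_n]=0$, making $\{\varepsilon(n)\}$ a pure martingale-difference sequence. If instead the Taylor estimator \eqref{eq:Taylor_estimator} is used, it is biased, but as shown in Section~\ref{sec:Taylor} its bias is $\mathcal{O}(\norm{w_f})$ and hence vanishes as $w_f\to 0$, i.e. as $x(n)\to x^*$. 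I expect this to be the main obstacle: making rigorous that a bias which decays only near $x^*$ does not disrupt the tracking argument. This is reconciled by working inside the invariant set $\mathcal{A}_d\subset\mathcal{A}$ (in which the iterates are trapped by assumption), where $\norm{w_f}$ is uniformly bounded; choosing $\mathcal{A}_d$ sufficiently small keeps the residual bias too small to reverse the sign of $\dot V$ from the proof of Theorem~\ref{thm:stability}, so the asymptotically-negligible-bias condition of the ODE method is met.

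Finally, with (i)--(iv) in force the ODE method asserts that the piecewise-linear interpolation of $\{x(n)\}$ converges almost surely to a connected, internally chain-transitive invariant set of \eqref{eq:Sys_GrDes} contained in $\mathcal{A}_d$. Since Theorem~\ref{thm:stability} establishes that $x^*=\{w_f=0,w_b=0,\theta=\epsilon_1\}$ is an asymptotically stable equilibrium whose region of attraction contains $\mathcal{A}$, and since $\mathcal{A}_d\subseteq\mathcal{A}$ from \eqref{eq:ROAd}, every trajectory of the ODE initialized in $\mathcal{A}_d$ converges to $x^*$; consequently the only invariant set available as a limit is $\{x^*\}$. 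Therefore $x(n)\to x^*$ almost surely, which completes the proof.
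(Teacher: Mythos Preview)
Your proposal is correct and follows essentially the same route as the paper: both cast \eqref{eq:DE} in the canonical stochastic-approximation form $x(n+1)=x(n)+a(n)\big(h(x(n))+M(n+1)\big)$, verify the standard Borkar hypotheses (local Lipschitz of the mean field, Robbins--Monro stepsizes, martingale-difference noise with bounded conditional second moment, boundedness of iterates supplied by the assumption that $x(n)$ stays in $\mathcal{A}_d$), and conclude via the ODE theorem that the limit set is an internally chain-transitive invariant set of \eqref{eq:Sys_GrDes}, which by Theorem~\ref{thm:stability} reduces to $\{x^*\}$. The paper carries this out only for the unbiased sampling estimator \eqref{eq:Sampling_estimator}, so your additional remarks on handling the Taylor-estimator bias go beyond what the paper proves; note, however, that the argument ``choose $\mathcal{A}_d$ small enough'' is not quite licit since $\mathcal{A}_d$ is fixed by the hypotheses rather than at your disposal.
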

The proof of Theorem~\ref{thm:Alg_conv} is based on well-established stochastic approximation results in \cite{borkar2009stochastic} and is relegated to \ref{app:B}.

\section{Learning/Pruning Algorithm}\label{sec:algorithm}
We now present the proposed simultaneous learning and pruning algorithm based on minimizing the objective in  \eqref{eq:L_train_obj} via gradient descent.
Pseudo-code of the algorithm is given in Algorithm \ref{alg:learning_algo}.
The algorithm assumes a given data set: $\mathcal{D}=\lbrace (x_i,y_i)\rbrace_{i=1}^N$, mini-batch size $B$ and an initial network structure characterized by the number of layers and the number of units in each layer. We select hyper-parameters $0<\alpha<1$ and $\beta>1$ or $0<\gamma<1$ for the Beta or Flattening hyper-priors, respectively, $\lambda>0$, $\tau>0$ for weight regularization.
We remark that as the data set size $N$ increases and because $C_0$ and $C_1$ are total expected errors over the data set, $C_0-C_1$ roughly increases linearly with $N$. This behavior is consistent with the Bayesian approach whereby using more data samples to estimate the posterior distribution leads to a diminishing influence of prior information. Therefore, to induce sufficient pruning and for the previous pruning conditions to be meaningful, the hyper-parameters $\gamma$ for the Flattening hyper-prior and $\alpha$, $\beta$ for the Beta hyper-prior need to be matched appropriately to the size of the data set. For the Flattening hyper-prior, the choice of $\gamma$ is straightforward, since it directly compares to $C_0-C_1$. For the Beta hyper-prior, we typically pick $\alpha=0.1$ and control the level of the log-term in the $\dot\theta$ equation by picking $\beta>1$. For large $N$, $\beta$ needs to be considerably larger than one, making the use of the Beta hyper-prior less effective than the Flattening hyper-prior for large data sets.
%the tuning the Beta hyper-prior is more delicate and in the end less effective than the Flattening hyper-prior.
We also pick $0<\epsilon_1, \epsilon_2\ll 1$ for guiding the convergence of the variational parameters $\Theta$.
The convergence results established in Section~\ref{sec:algorithm_convergence} require that
$\epsilon_1<\frac{\lambda}{2(\kappa+\eta)}$ where $\kappa$ and $\eta$ are constants such that \eqref{eq:kbnd} and \eqref{eq:ebnd} hold. The existence of such constants is established in Appendix~\ref{sec:kappabound} and \ref{sec:deltabound}, however, the analysis there provides rather conservative estimates for $\kappa$ and $\eta$ to be of practical value. Therefore, we simply let $\epsilon_1$ and $\epsilon_2$ be user-defined parameters. It is also shown in Appendix~\ref{sec:kappabound} that $\kappa$ and $\eta$ scale linearly with the data set size $N$. Then, the weight regularization parameter $\lambda$  should be also chosen to scale with $N$ to avoid having $\frac{\lambda}{\kappa+\eta}$ too small and allow reasonable values for $\epsilon_1$.
% and which can be estimated cheaply during the course of the algorithm as discussed Section~\ref{sec:practical_conditions}.
Equivalently, we may define directly $\theta_1$, $\theta_2$ instead of $\epsilon_1$ and $\epsilon_2$ (see \eqref{eq:pibeta} and \eqref{eq:piflat} for the Beta and Flattening hyper-priors, respectively), which are used to obtain the gradients in \eqref{eq:grad_theta_beta} and \eqref{eq:grad_theta_flattening}.
The theoretical analysis also requires to choose a bound $\phi_{max}$ used to scale the weights of each unit so that $\|w_b\|^2+\|w_f\|^2\leq 2\phi_{max}$ (see Figure~\ref{fig:NN_struct}) and interval $[\theta_l,\theta_h]$ with $0<\theta_l<\epsilon_1$, $1-\epsilon_2<\theta_l<1$ used to clip the variational parameters.  Picking a large value for $\phi_{max}$ should make the weight projection step practically unnecessary as our experiments show. The initialization phase is completed by setting the iteration counter to $n=0$, and initializing the network weights $W(0)$  in a standard manner and the parameters $\Theta(0)$ to $0.5$ in a neutral fashion.

Each iteration of the algorithm consists of five main steps. In Step~1, a mini-batch $\mathcal{S}=\lbrace ( x_i, y_i)\rbrace_{i=1}^B$ is sampled from the data $\mathcal{D}$ with replacement and a sample $\hat \Xi \sim Bernoulli(\Theta(n))$ is obtained and used to predict the network output and to approximate needed expectations. A single realization $\hat \Xi$ is used for all the data in the mini-batch to allow for a more efficient implementation without significant increase in the variance of realized estimates \cite{graham2015efficient}. This step is equivalent to the forward calculation in the well-known Dropout formulation in \cite{Srivastava_2014_Dropout}. In Step~2, the gradients of the objective \eqref{eq:L_train_obj} with respect to the network parameters $W$ are computed via standard backpropagation and $\frac{\partial C}{\partial\theta}=C_1-C_0$ is approximated for each unit using one of the methods described in Section~\ref{sec:Estimators}.  Then, the gradients of the objective with respect to $\theta$ are obtained from \eqref{eq:grad_theta_beta} or \eqref{eq:grad_theta_flattening} for the case of the Beta or Flattening prior, respectively. Step~3 constitutes the learning phase in which the network parameters $W$ and variational probabilities $\Theta$ are updated via gradient descent utilizing the previously computed gradients. The convergence results of Section~\ref{sec:algorithm_convergence} require that the stepsize $a(n)$ for the gradient descent update satisfies the Robbins-Monro conditions \eqref{eq:RM}
%$\sum_{n=0}^\infty a(n) = \infty  \quad \text{and}\quad \sum_{n=0}^\infty a(n)^2 = 0$,
and after the update step the weights of each unit  are scaled to satisfy $\|w_b\|^2+\|w_f\|^2\leq 2\phi_{max}$ and parameters $\Theta(n+1)$ are clipped in $[\theta_l,\theta_h]$.

In Step~4, we identify the units that can be pruned away and remove them from the network to reduce the computational cost in further training iterations. Based on Theorem~\ref{thm:Alg_conv}, units can be safely removed from the network if the weights $w_f$, $w_b$ and update rate $\theta$ of a unit enter and remain in the region $\mathcal A_d$ defined in \eqref{eq:ROAd}, since then $\{w_f\rightarrow 0,\  w_b\rightarrow 0,\ \theta\rightarrow\epsilon_1\}$ for this unit.
This result is of clear theoretical value but difficult to utilize in practice. However, it points to more practical conditions for unit removal that we have found to work well. Specifically, notice that from \eqref{eq:dyn_sys}, it is sufficient that $\theta\rightarrow 0$, for  $\{w_f\rightarrow 0,\  w_b\rightarrow 0\}$. Then we monitor $\theta$ for each unit and assume that convergence has been achieved once $\theta\leq\theta_{tol}$, a user-defined parameter.
We also utilized an alternative pruning criterion that can speed up pruning considerably as shown by our simulation experiments. More specifically, we observed (see Figure~\ref{fig:results_FCNN_thetas}) that once the $\theta$-curve for a unit has dropped sufficiently from its running maximum, it keeps decreasing towards zero. 
\begin{algorithm}
	\caption{Learning/Pruning Algorithm}
	\label{alg:learning_algo}
	\begin{algorithmic}[1]	
		\REQUIRE  Data set $\mathcal{D}=\lbrace (x_i,y_i)\rbrace_{i=1}^N$ and mini-batch size $B$, Initial Network Structure with Initial Weights $W(0)$ and parameters $\Theta(0)$;
		Hyper-parameters: $0<\alpha<0, \beta>1$, or $0<\gamma<1$, $\lambda>0$, $\tau>0$, $0<\epsilon_1, \epsilon_2\ll 1$, $\phi_{max}>0$, $0<\theta_l<\epsilon_1$, $1-\epsilon_2<\theta_h<1$. If using Case~(i) in Step~4, select $\theta_{tol}$; if using Case~(ii) in Step~4, select $\theta_{per}$, $n_0$ and set $\Theta_{max}=\theta(0)$.
		\\ \hrulefill \\
		\STATE initialize n=0
		\WHILE{Training has not converged or exceeded the maximum number of iterations}
		\STATE %\hrulefill not allowed directly after \WHILE
		\hrulefill \\
		\textbf{STEP 1: Forward Pass}\\
		%\hrulefill \\
		\STATE Sample $B$ times with replacement from the data set $\mathcal{D}$ to obtain samples $\mathcal{S}=\lbrace ( x_i, y_i)\rbrace_{i=1}^B$.
		\STATE Sample one realization of the network by sampling $\hat \Xi \sim Bernoulli(\Theta(n))$.
		\STATE Using current weights $W(n)$, predict the network's output $\hat y_i = NN(x_i;W,\hat \Xi) \ \forall i=1\dots B$ as in \eqref{eq:NN_def}.
		\\ \hrulefill \\
		\textbf{STEP 2: Backpropagation Phase}\\
		%\hrulefill \\
		%		\STATE Select the prior parameter $\pi^\star$ given $\theta(n)$ according to cases i)-ii) in Section \ref{sec:prior_sel}
		\STATE Approximate the gradient w.r.t. weights $g^l_W =\frac{\partial L(W,\Theta)}{\partial W^l} \ \forall l=1\dots L$ using  \eqref{eq:grad_m} with the single sample $\hat \Xi$ and $\mathcal{S}$.
		\STATE Approximate $\frac{\partial C}{\partial\theta}=C_1-C_0$ for each unit in the network using one of the methods described in Section \ref{sec:Estimators}.
		\STATE Approximate the elements of the gradient w.r.t. $\theta$, $g_\theta = 	\frac{\partial L(W,\Theta)}{\partial \theta}$ with  \eqref{eq:grad_theta_flattening} or \eqref{eq:grad_theta_beta} and using the previous approximations of $C_1-C_0$.
		\\ \hrulefill \\
		\textbf{STEP 3: Learning Phase}\\
		%\hrulefill \\
		\STATE  Take gradient steps $W^l(n+1) = W^l(n) - a(n) g^l_W$ and $\theta(n+1) = \theta(n) - a(n) g_\theta$ for appropriate step size $a(n)$ satisfying: $\quad \sum_{n=0}^\infty a(n) = \infty  \quad \text{and}\quad \sum_{n=0}^\infty a(n)^2 = 0.$\\
		\STATE Scale the weights of each unit so that $\|w_b(n+1)\|^2+\|w_f(n+1)\|^2\leq 2\phi_{max}$ (see Figure~\ref{fig:NN_struct}) and clip each  $\theta(n+1) \in [\theta_l, \theta_h]$. If using Case~(ii) in Step~4, set $\theta_{max}=\max\{\theta_{max},\ \theta(n+1)\}$.
		\\ \hrulefill \\
		\textbf{STEP 4: Network Pruning Phase}\\
		%\hrulefill \\
		\FOR{each unit on a hidden layer $l$}
		\IF {(i) $\theta(n+1)<\theta_{tol}$ or (ii) $\theta(n+1)<\theta_{max}(1-\theta_{per})$ and $n>n_0$}
		\STATE prune the  unit by setting its weights to zero.
		\ENDIF
		\ENDFOR
		\\ \hrulefill \\
		\textbf{STEP 5: Check for Convergence}
		%\\ \hrulefill \\
		\STATE If the magnitude of the gradients $g^l_W$ and $g_\theta$ is less than a specified tolerance, the algorithm has converged.
		\\ \hrulefill \\
		\STATE set $n=n+1$
		\ENDWHILE
	\end{algorithmic}
\end{algorithm}
Therefore, we store for each unit the maximum value of its $\theta$ during training and if the current value has dropped by at least by a percentage $\theta_{per}$, a user-defined parameter, the unit is removed from the network. This rule may prune prematurely some units during the early stages of training.
Therefore, it is applied only after the first $n_0$ training iterations, where $n_0$ is also a user-defined parameter. In this manner, this rule does not interfere with the principle that all units should have the opportunity to adapt, which is made possible by our careful selection of the hyper-prior as discussed in Subsection~\ref{sec:dishyp}.

Finally in Step~5, if the algorithm has converged i.e., the gradients of the objective \eqref{eq:Main_obj} with respect to $W$ and $\Theta$ are small, or a maximum number of iterations has been reached, we exit the training process. Otherwise, we set $n=n+1$ and repeat all five steps.

We remark that our algorithm is applicable to both, fully connected and convolutional networks. Although, we present details for the fully connected case for reasons of brevity, extension of our algorithm to the case of a convolutional layer simply entails the introduction of a Bernoulli random variable $\xi$ for each filter matrix. Then, $\xi$ multiplies all elements of the filter and if $\xi$ is zero, the corresponding filter is inactive. Only minor modifications are required for the gradient computation of the performance objective with respect to the variational parameters; these gradients are used for learning the posterior distributions over the random variables $\xi$ and in turn for selecting which filters of the convolutional layer to prune and which to keep. In Section~\ref{sec:simulations},  we apply the algorithm to the LeNet5 convolutional neural network and compare its performance with competing methods.

\section{Simulation Experiments}\label{sec:simulations}
We evaluate our simultaneous learning and pruning algorithm on the MNIST data set \cite{lecun-mnisthandwrittendigit-2010} and on the CIFAR-10 data set \cite{Krizhevsky_09} starting from the commonly used neural network architectures LeNet300-100, LeNet5 and VGG16, respectively. The goal is to learn the size of all hidden layers simultaneously with the network's weights and obtain significantly smaller networks having performance on par with that of the trained unpruned networks. We compare 3 versions of our method combining (i) Flattening hyper-prior with Taylor approximation of $C_0-C_1$, (ii) Flattening hyper-prior with CONCRETE approximation of $C_0-C_1$, and (iii) Beta hyper-prior with Taylor approximation of $C_0-C_1$. In addition we train the considered network architectures without any pruning algorithm to obtain a baseline run. Each experiment consists of training on the full training data set and evaluating the found models on the test images. We do not cross-validate during training to select the final network. Each experiment is run 10 times starting from random weight initializations based on the Xavier normal initializer (Glorot normal) \cite{glorot10a_init}, which we  keep the same for all different versions of our method compared. We evaluate the robustness of our method, i.e., its sensitivity with respect to weight initialization, by reporting the mean and standard deviation of the results over the 10 runs. Throughout this experiments section we use leaky ReLU activation functions with leakage parameter $\expnumber{1}{-3}$ in all networks. We initialize all $\theta$ parameters to $0.5$. Unless stated otherwise, we employ pruning condition (i) as defined in Algorithm \ref{alg:learning_algo}, Step 4 with parameter $\theta_{tol}=\expnumber{1}{-3}$ in our experiments. The remaining training parameters vary for the different data sets and are given in the following subsections. The networks resulting after the last epoch of training are saved and evaluated in terms of their structure, accuracy and  pruning ratio, the latter defined as the percentage of pruned weights from the number of total weights in the starting architecture.
All simulations were run with Python using several TensorFlow/KERAS libraries.

Although our code does not utilize the most efficient dropout implementation, it adds only moderately to the training time without the simultaneous pruning. More specifically, for the MNIST experiments, performed on a laptop computer with an 8th Gen Intel\textsuperscript{\tiny\textregistered} Core\texttrademark\ i5 CPU, 8GB RAM and a low-end NVIDIA\textsuperscript{\tiny\textregistered} GeForce\textsuperscript{\tiny\textregistered} MX150 GPU, the average times per epoch during the first couple epochs of the training process for the LeNet5 experiments are compared as follows. Training without our algorithm and without dropout leads to a baseline time per epoch of about 7 seconds. Using our method with the Taylor approximation and the Flattening hyper-prior increases the time per epoch by about 28\% to approximately 9 seconds while using the Beta hyper-prior leads to an increase of about 85\% as in this case gradients are harder to compute. Using the CONCRETE  instead of the Taylor approximation adds an additional 11\% to the corresponding times. We remark that these times were obtained without removing units/filters from the network once their $\theta$ is less than the pruning threshold to maintain  a fair comparison of the computational effort involved in the different versions of our method.

However, by neglecting the computations involving activation functions as a small fraction of the total computational load, we can estimate the computational load of the baseline network per iteration by $\rho\cdot\sum_{l=1}^{L} \rho^l n^{l}n^{l+1}$ where $n^l$ is the number of units on the $l$th layer and $\rho$ is proportionality constant; also for a fully connected layer $\rho^l=1$  and for a convolutional layer $\rho^l = d_F^2\cdot d_W \cdot d_H$, where $d_F$ is the size of a square filter (kernel) matrix and $d_W \times d_H$ is the dimension of the resulting 2-D feature map on the $l$th layer.

For the dropout networks used in our algorithm, the expected computational load per iteration is upper-bounded by $\rho\cdot\sum_{l=1}^{L} \rho^l n^{l}n^{l+1}$ where $n^l$ is the current number of units on the $l$th layer. This number will decrease as we train and prune the network and hence the computational load per iteration decreases.
On the other hand, the computational load for the dropout network is lower-bounded by $\rho\cdot\sum_{l=1}^{L} \left(\rho^l \sum_{j} \theta_j^{l} \sum_{j}\theta_j^{l+1}\right)$, where $\theta^l$ is the vector of dropout probabilities for the units of the $l$th layer.
Using these formulas with the data obtained from our MNIST experiments, we can estimate that training the baseline LeNet300-100 \cite{le_cun_1998_doc_rec} for 32,830 iterations (35 epochs) is between $2.05$ and $2.75$ times or between $2.49$ and $2.96$ times computationally more expensive than training with our algorithm for 46,900 iterations (50 epochs) and fine-tuning the resulting networks for an additional 9,380 iterations (10 epochs), when pruning conditions (i) and (ii) are used, respectively. For the convolutional LeNet5 \cite{le_cun_1998_doc_rec} network, the baseline network is between $1.15$ and $1.33$ times or between $1.31$ and $1.42$ times more expensive than our algorithm for pruning conditions (i) and (ii), respectively.
When training VGG16 \cite{simonyan_deep_2015} for 300 epochs on the CIFAR-10 data set, the baseline network is between $3.38$ and $4.17$ times more expensive than our algorithm using pruning condition (i).

\subsection{MNIST Experiments}
We use the standard fully connected LeNet300-100 and convolutional LeNet5 architectures \cite{le_cun_1998_doc_rec} as the starting networks for the MNIST data set, which was designed for character recognition of handwritten digits (0-9) and consists of $60000$ 28x28 grayscale images for training and an additional $10000$ alike images for evaluation.
Each experiment consists of training on the full 60000 training images of the MNIST data set and evaluating the found models on the 10000 test images. We use for all experiments the following training and hyper-parameter values: Data set size $N=60000$, Mini-batch size $B=64$ and weight $\mathcal{L}_2$-Regularization parameter $\lambda=20$ (Note that this corresponds to a typically reported regularization parameter of $\frac{\lambda}{N}=\expnumber{3.\bar3}{-4}$). We train using the Adam optimizer \cite{kingma_adam:_2014} with a learning rate of $\expnumber{1}{-3}$ for 50 epochs and follow-up with a fine-tuning phase by training the pruned network for an additional 10 epochs with a learning rate $\expnumber{1}{-4}$. Prior to the fine-tuning phase, all $\theta$ values less than $\expnumber{1}{-3}$ are set to $0$ and all other to $1$, thus specifying the final deterministic network architecture. The networks resulting after the fine-tuning phase are saved and evaluated in terms of their structure, accuracy and  pruning ratio, the latter defined as the percentage of pruned weights from the number of total weights in the starting architecture.
\subsubsection{LeNet300-100}
LeNet300-100 is a fully connected network with 300 units and 100 units in the first and second hidden layer, respectively.  We choose $\log(\gamma)=-25$ and $\alpha=0.9$, $\beta=\expnumber{1}{10}$ for the Flattening and Beta hyper-priors, respectively as described in Section \ref{sec:hyper_prior_selection}. These values are chosen such that the flat parts of both regularization curves lie approximately on the same level (see Fig.~\ref{fig:regu_theta_all_three})  and which reflects that a unit needs to achieve a difference $C_0-C_1>25$ (over the $N=60,000$ samples) at the converged state for it to remain in the network.
In the case of LeNet300-100, in addition to the structured pruning performed by Algorithm \ref{alg:learning_algo} we set weights of the input layer (weights of $W^1$) to zero if their absolute value is less than $\expnumber{1}{-4}$ after training.

Table~\ref{tab:LeNet300-100} summarizes the found architectures and corresponding test accuracies and pruning ratios. Both versions of our method using the Taylor approximation with the Flattening or the Beta-hyper-prior perform similarly well achieving accuracy of over $98.1\%$ while pruning the network to only about 80 total hidden units or pruning about $87.5\%$ of the network's weights. The CONCRETE approximation generates less aggressive pruning but a slightly higher test accuracy.
Using pruning condition (ii) in step 4 of Algorithm \ref{alg:learning_algo} with $\theta_{per}=0.1$ and $n_0=2814$ (3 epochs) produces a network with similar accuracy, pruning ratio and learned architecture as using pruning condition (i). All reported standard deviations are small indicating robust pruning of the network to a consistently small size, independent of weight initialization.
\begin{table}[!htbp]
	\begin{center}
		\begin{tabular}{lclclclc}
			Method & Learned Architecture &Test Accuracy [\%]& Pruning Ratio [\%] \\
			\hline
			Baseline & $300-100$ & $98.46\pm0.09$ & -\\
			\hline
			Flattening: &&&\\
			Taylor & $49.5$\mbox{\tiny$\pm1.86$}$-29.5$\mbox{\tiny$\pm1.50$} & $98.13\pm 0.07$ & $87.59\pm0.43$\\
			Taylor, Condition (ii) & $49.9$\mbox{\tiny$\pm1.92$}$-30.0$\mbox{\tiny$\pm1.00$} & $98.17\pm 0.06$ & $87.48\pm0.46$\\
			CONCRETE  & $59.9$\mbox{\tiny$\pm2.43$}$-38.8$\mbox{\tiny$\pm2.68$} & $98.25\pm 0.07$ & $84.84\pm0.59$\\
			\hline
			Beta, Taylor  & $49.5$\mbox{\tiny$\pm1.75$}$-27.8$\mbox{\tiny$\pm1.47$} & $98.13\pm 0.08$ & $87.62\pm0.41$\\
			%\red{SAL}, $\theta_{init}=0.08$ & $66.3$\mbox{\tiny$\pm9.12$}$-54.1$\mbox{\tiny$\pm7.31$} & $98.20\pm 0.09$ & $82.93\pm2.25$\\
			\hline
		\end{tabular}
	\end{center}
\caption{Resulting architecture and test accuracy of the learning/pruning on the LeNet300-100 architecture using different forms of hyper-prior and methods to approximate the difference $C_0-C_1$.\label{tab:LeNet300-100}}
\end{table}
In Figure \ref{fig:results_FCNN_training}, we depict the evolution of the mean values of total number of hidden units in the network (Fig.~\ref{fig:results_FCNN_units}), the training loss, where to compare convergence rates more accurately, each curve is shifted by the value it converges to  (Fig.~\ref{fig:results_FCNN_loss}), and the test accuracy (Fig.~\ref{fig:results_FCNN_tacc}) during the 50 epochs of training. Plots for the fine tuning phase are not provided.
It takes about 25 epochs for the baseline network without pruning and without dropout to converge with respect to the test accuracy.
Our method prunes only very few units during the first 5 and 10 epochs in the case of the Taylor and CONCRETE approximations, respectively.  This initial period acts as a grace period and provides a chance to the vast majority of the units and weights in the network to learn useful behavior. It serves the purpose of decoupling the pruning process from weight initialization and is crucial for the robustness properties of our algorithm.
After these first few iterations, the most important units in the network have asserted themselves and the pruning process speeds up drastically, significantly reducing the size of the network early during training. Much of the pruning is done by epoch 15 in case of the Taylor approximation, leaving a small network to train for the remaining epochs. The network reaches its final size after about 35 epochs.
Both, the Flattening and the Beta hyper-prior perform well in this experiment. The Taylor approximation enjoys a faster convergence rate when compared to the CONCRETE approximation. When using pruning condition (ii), the pruning process is sped up by about 7 epochs.
%The resulting network is similar sized and performs equally to the one obtained with pruning condition (i).
\begin{figure}[t]%!htb
	\centering
	\begin{subfigure}{0.32\textwidth}
		\resizebox{\linewidth}{!}{\includegraphics{./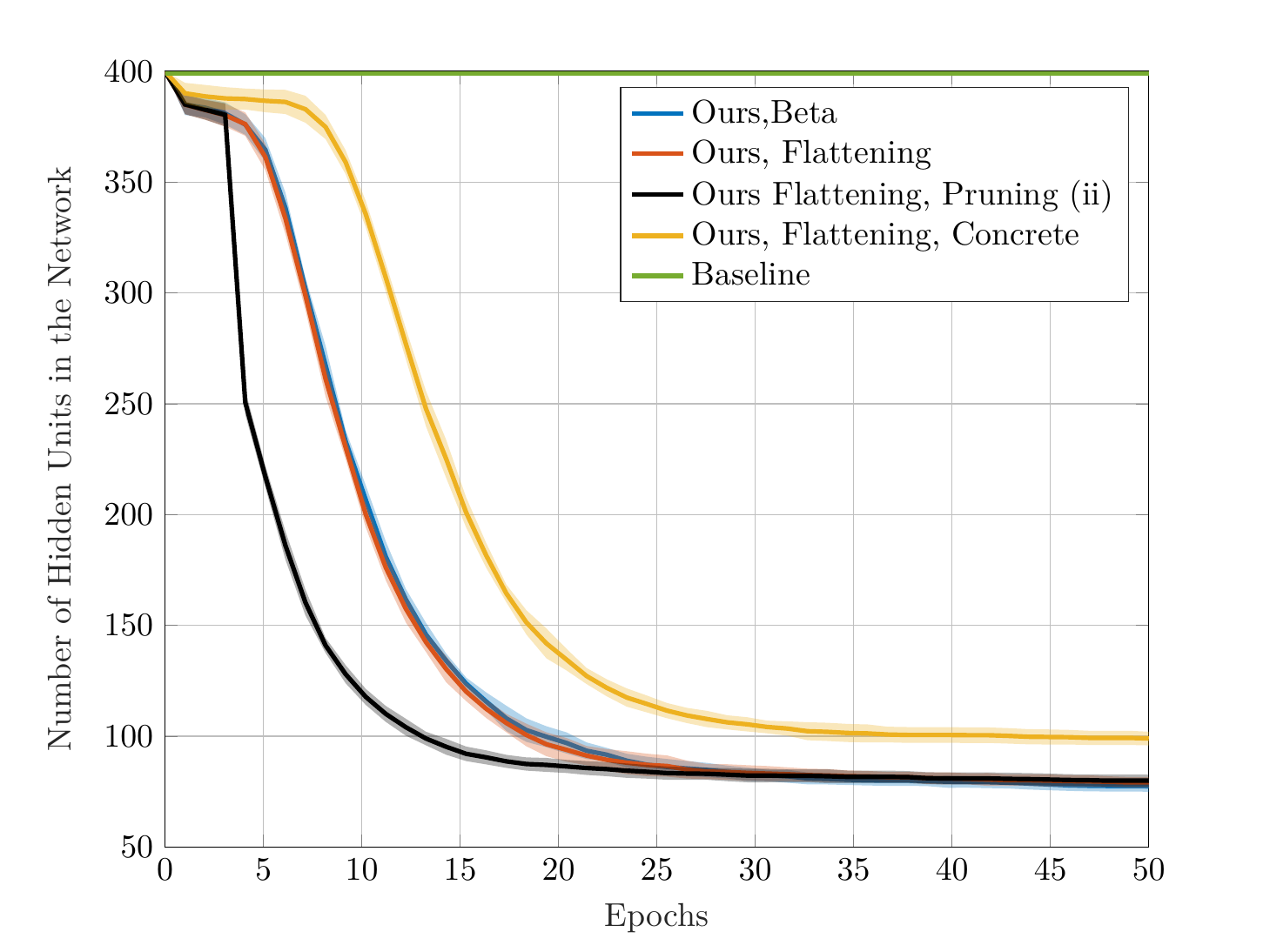}}
		\caption{Number of Units} \label{fig:results_FCNN_units}
	\end{subfigure}
	\begin{subfigure}{0.32\textwidth}
		\resizebox{\linewidth}{!}{\includegraphics{./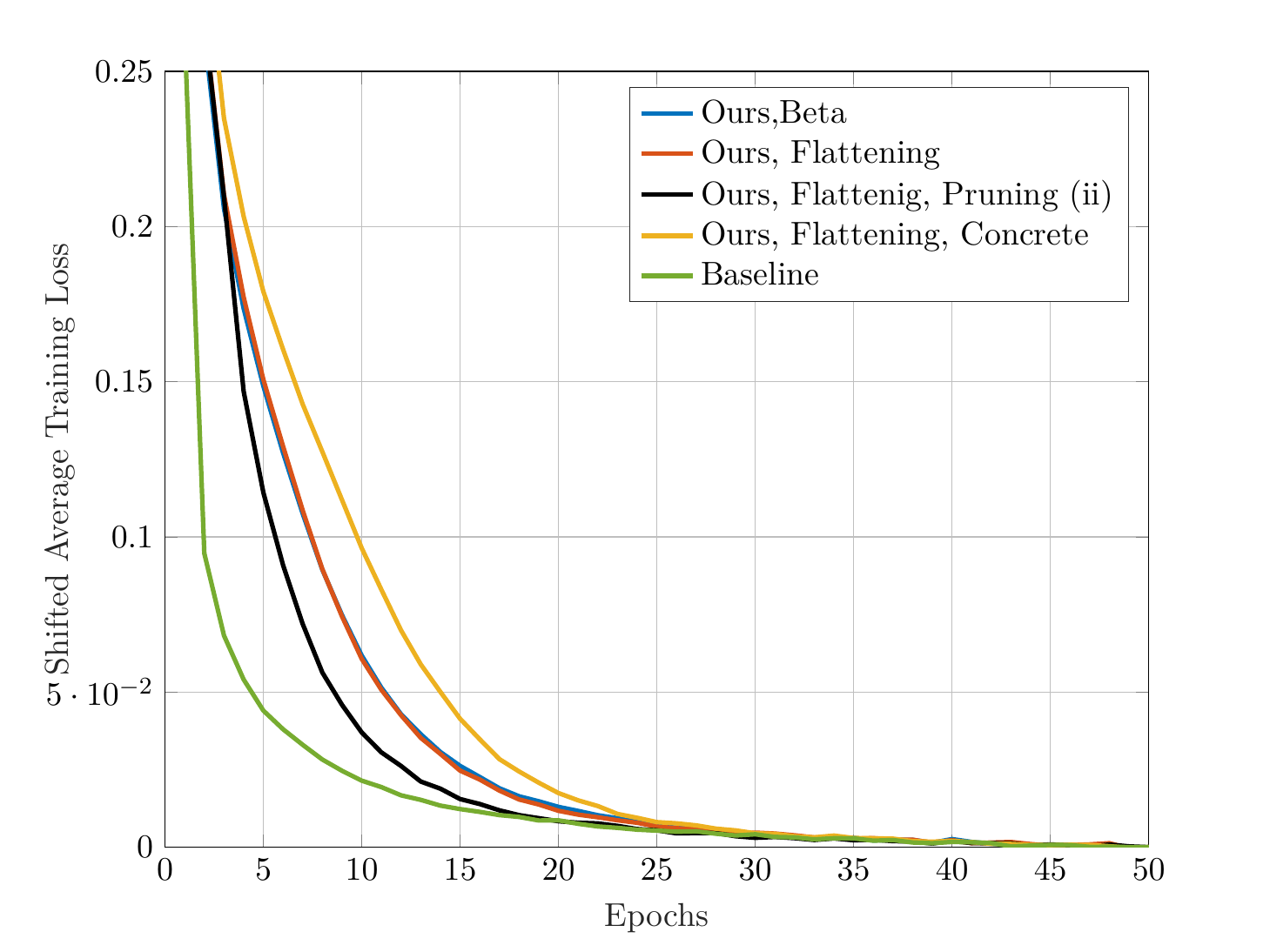}}
		\caption{Loss} \label{fig:results_FCNN_loss}
	\end{subfigure}
	\begin{subfigure}{0.32\textwidth}
		\resizebox{\linewidth}{!}{\includegraphics{./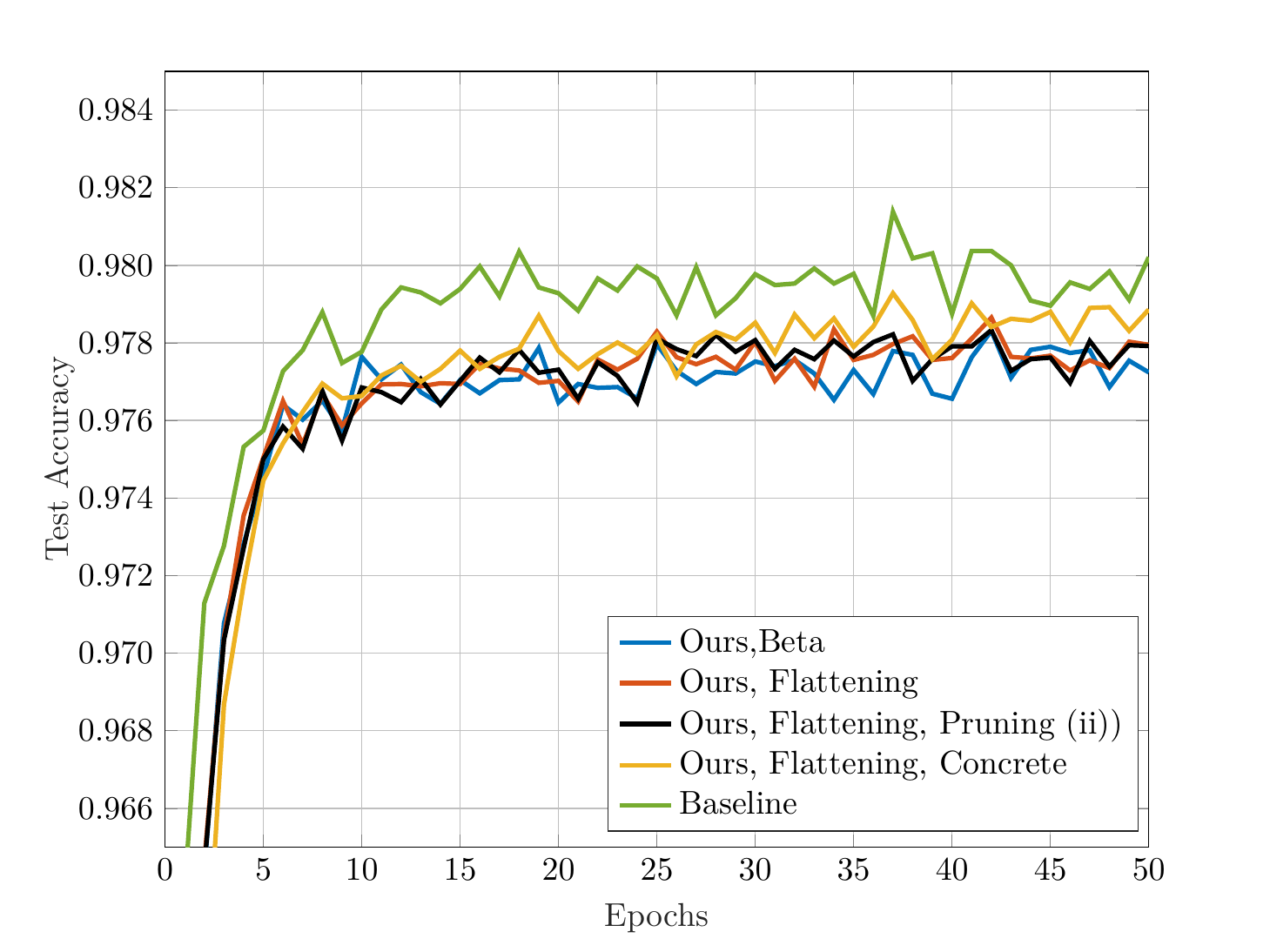}}
		\caption{Accuracy} \label{fig:results_FCNN_tacc}
	\end{subfigure}
	\begin{subfigure}{0.32\textwidth}
		\resizebox{\linewidth}{!}{\includegraphics{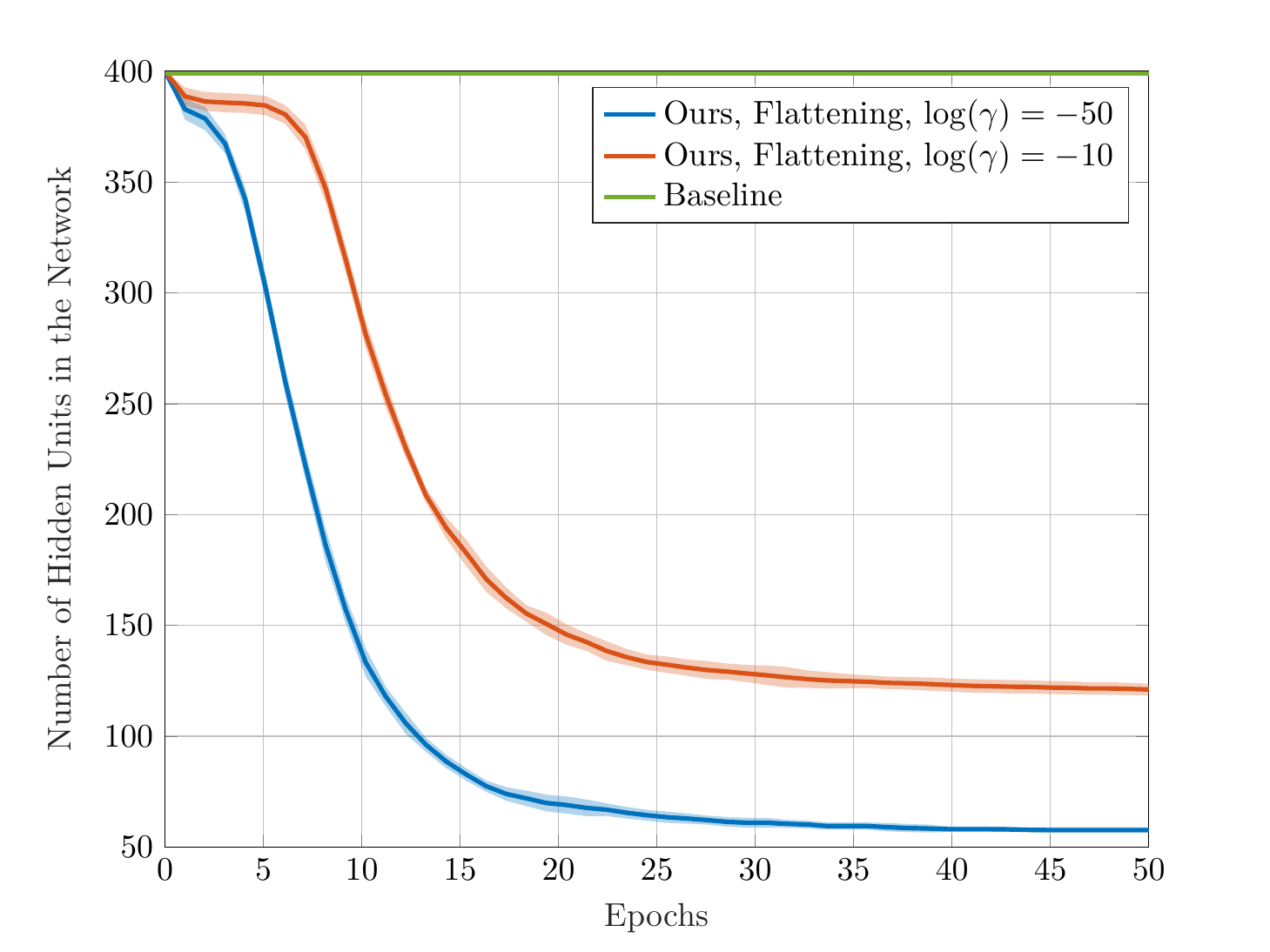}}
		\caption{Number of Units} \label{fig:results_FCNN_units_vary}
	\end{subfigure}
	\begin{subfigure}{0.32\textwidth}
		\resizebox{\linewidth}{!}{\includegraphics{./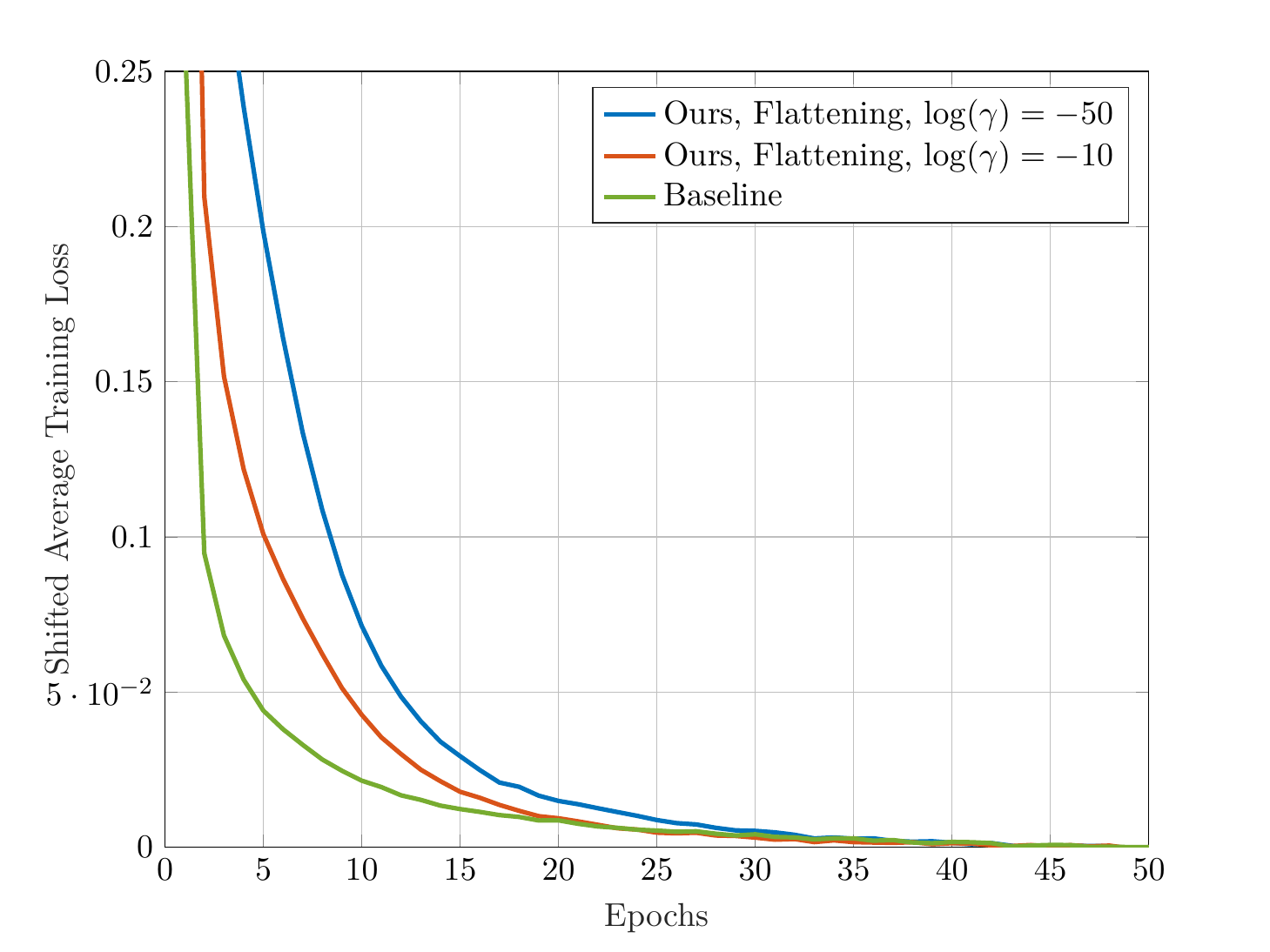}}
		\caption{Loss} \label{fig:results_FCNN_loss_vary}
	\end{subfigure}
	\begin{subfigure}{0.32\textwidth}
		\resizebox{\linewidth}{!}{\includegraphics{./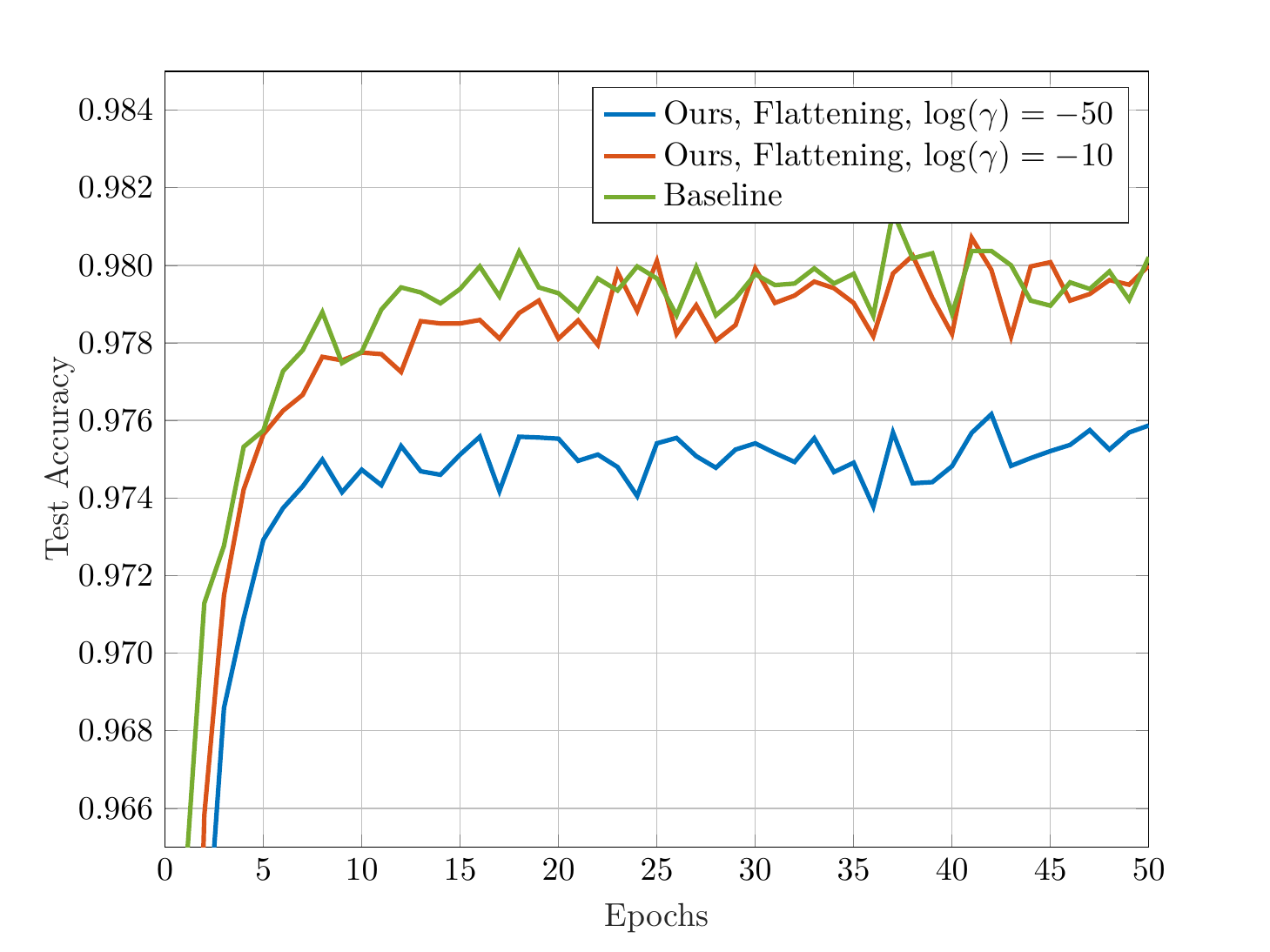}}
		\caption{Accuracy} \label{fig:results_FCNN_tacc_vary}
	\end{subfigure}
	\caption{Total number of hidden units in the network (a),(d), training loss (b),(e) and test accuracy (c),(f) during the 50 epochs of training/pruning on LeNet300-100. Upper row: comparison of the different approximation methods and forms of hyper-prior. Lower row: different choice of hyper-prior parameter.}\label{fig:results_FCNN_training}
\end{figure}
Figures~(\ref{fig:results_FCNN_units_vary}), (\ref{fig:results_FCNN_loss_vary}) and (\ref{fig:results_FCNN_tacc_vary}) show average results over 10 runs for the Flattening hyper-prior using parameter values $\log(\gamma)=-50,-10$ and the Taylor approximation.
Using smaller values for $\log(\gamma)$ leads to networks being pruned more aggressively but our method always keeps an initial phase during which all units receive training, albeit shorter than with higher values of $\log(\gamma)$. Thus, the choice of $\log(\gamma)$ can be used effectively to control/trade-off the resulting network size and prediction accuracy.

\begin{figure}[t]%!htb
	\centering
	%include as pdf
	\resizebox{.49\linewidth}{!}{\includegraphics{./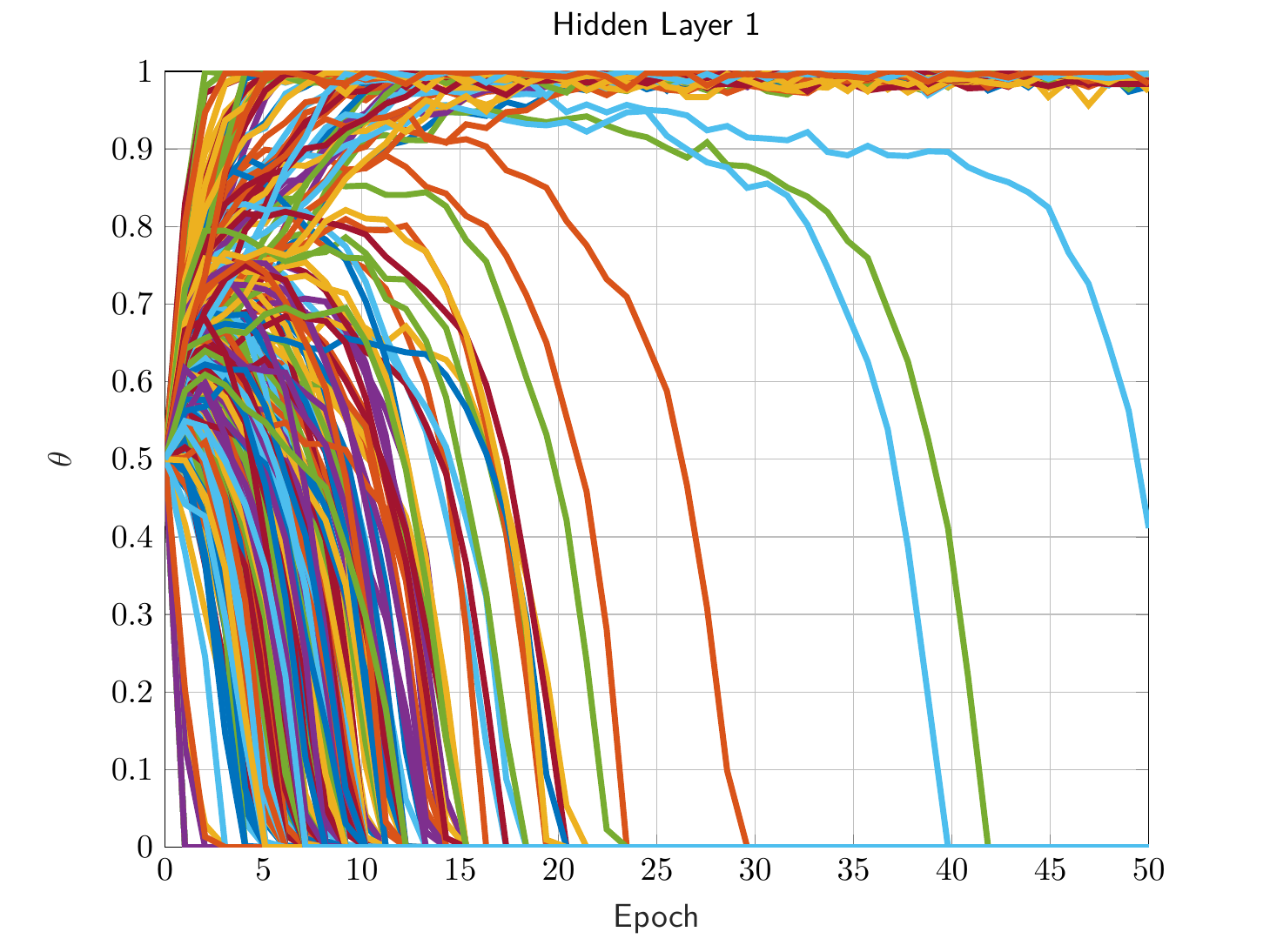}}
	\resizebox{.49\linewidth}{!}{\includegraphics{./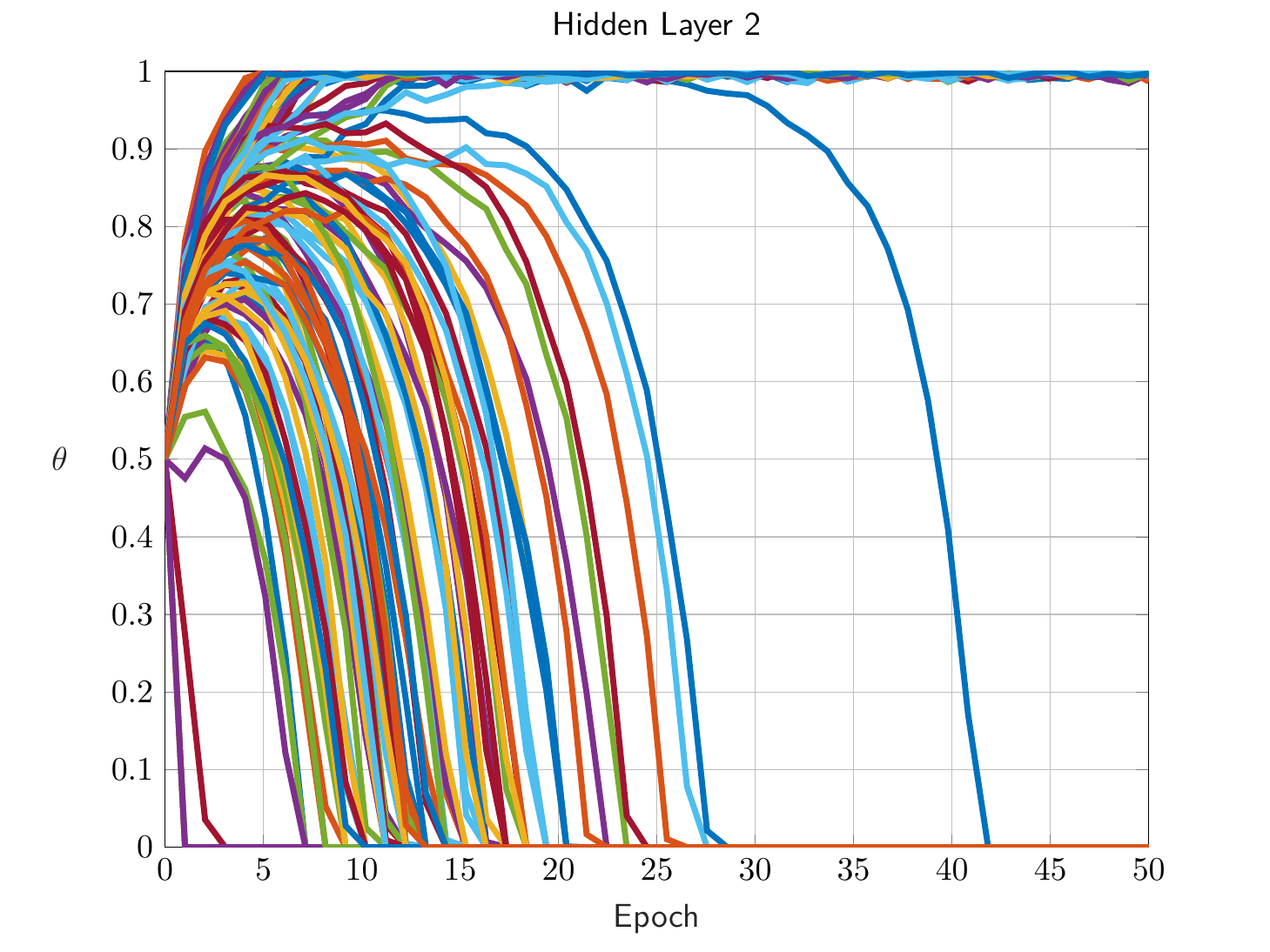}}
	\caption{Evolution of the $\theta$ values in both hidden layers during training/pruning on LeNet300-100. Most values increase initially, making the units active more often and providing them with a chance to adapt. }\label{fig:results_FCNN_thetas}
\end{figure}

In Figure~\ref{fig:results_FCNN_thetas}, we show the typical behavior of the $\theta$ values in the first and second hidden layer of the network during training. It is worth noticing that for most units and therefore, even the ones with an unfavorable weight initialization, $\theta$ increases at first indicating that the weights of their units learn to become useful. The optimal selection of $\pi^\star$ together with the careful choice of hyper-prior distribution allows units to recover after an unfavorable initialization, rather than being pruned prematurely. Eventually, as more units emerge to represent the data and the backpropagation error gets smaller, the rate of adaptation slows down and units with $C_0-C_1<-\log\gamma$  experience their $\theta$ converging to $0$ leading to their pruning.

Next, we run our algorithm with different starting hidden layer widths to evaluate the effect of initial network overparametrization to the size of the resulting pruned network.
All training and other hyper-parameters remain unchanged during this experiment.
Table \ref{tab:archs_FCNN} compares the resulting pruned network sizes and their test accuracy for stating sizes 150-50 and 50-30 with the LeNet300-100 network. Our method is able to robustly prune the networks to about the same size and accuracy with little dependence on the initial size of its layer.
This consistency demonstrates again the robustness of our approach in its ability to find the appropriate size network for a given desired accuracy.
In contrast, SAL fails to prune the network to consistent sizes as can be seen by the much higher standard deviations and the vastly varying mean values for different initial network sizes. Since SAL does not decouple the pruning process from initialization of weights, $\theta$ values and the starting architecture well, we see that the resulting networks are smaller when the initial architecture was smaller. In the extreme case of the $50 - 30$ network this leads to a poorly performing final network.
\begin{table}[h!]%h!
	%\vspace{0.25cm}
	\centering
	\begin{tabular}{lclclclc}
			Method & \hspace{-0.05cm}Start Architecture & \hspace{-0.15cm}End Architecture & \hspace{-0.25cm}Test Accuracy [\%]& \hspace{-0.15cm}Pruning Ratio [\%] \\
		\hline\parbox[t]{0mm}{\multirow{2}{*}{\rotatebox[origin=c]{0}{Baseline}}} &$300-100$  & $300-100$& $98.46\pm0.09$& -\Tstrut\\
		&$50-30$ &\centering$50-30$ & $97.99 \pm 0.08$& -\Bstrut\\
		\hline
		{\multirow{3}{*}{\parbox{0cm}{Ours,\\(Flattening,\\Taylor)}}}
        &$300-100$&	$49.5$\mbox{\tiny$\pm1.86$}$-29.5$\mbox{\tiny$\pm1.50$}&$98.13\pm 0.07$&$87.59\pm0.43$ \\
        & $150-50$  &$49.5$\mbox{\tiny$\pm0.67$}$-30.4$\mbox{\tiny$\pm1.28$}&$98.11 \pm 0.08$&$73.55\pm0.37$\\
		&$50-30$ & $45.6$\mbox{\tiny$\pm1.20$}$-26.8$\mbox{\tiny$\pm1.33$}& $97.98 \pm 0.08$&$24.85\pm2.02$\\
		\hline
		\parbox[t]{8mm}{\multirow{3}{*}{\rotatebox[origin=c]{0}{SAL}}}&$300-100$  & $66.3$\mbox{\tiny$\pm9.12$}$-54.1$\mbox{\tiny$\pm7.31$}&$98.20\pm0.09$&$82.93\pm2.25$ \Tstrut\\
		&$150-50$  &$44.2$\mbox{\tiny$\pm10.88$}$-33.1$\mbox{\tiny$\pm3.36$}&$97.83\pm0.21$&$76.45\pm7.18$\\
		&$50-30$ &$22.0$\mbox{\tiny$\pm3.19$}$-17.8$\mbox{\tiny$\pm1.94$}& $96.96\pm0.22$&$72.80\pm6.31$\\
		\hline
	\end{tabular}
	\centering
	\caption{Resulting architectures and test accuracy of the learning/pruning method starting from different sized initial networks similar to LeNet300-100.\label{tab:archs_FCNN}}
\end{table}

\subsubsection{LeNet5}
LeNet5 is a convolutional neural network consisting of two convolutional layers with 6 and 16 5-by-5 filters, respectively, pooling layers and two fully connected layers with 120 and 84 units \cite{le_cun_1998_doc_rec}.
We choose $\log(\gamma)=-100$ in case of the Flattening hyper-prior and $\alpha=0.9, \beta=\expnumber{1}{33}$ in case of the Beta hyper-prior in all layers of the network.
Table \ref{tab:LeNet5} summarizes the resulting networks, their test accuracies and pruning ratios. Again, our method is able to robustly prune the network to virtually the same size starting from different weight initializations. All reported standard deviations are small. We achieve a pruning ratio of over $92\%$ while maintaining test accuracy of around $99\%$, which is only a $0.2\%$ decrease from the accuracy of the full LeNet5 architecture. The Flattening hyper-prior with $\log(\gamma)=-100$ prunes the network to smaller sizes as compared to the Beta hyper-prior with $\beta=\expnumber{1}{33}$. Indeed, this value for $\beta$ is the highest we can tolerate before encountering numerical problems when evaluating gradients in our implementation. Higher values of the $\beta$ hyper-parameter are necessary for the Beta hyper-prior to better approximate the ``flat'' character of the Flattening hyper-prior and prevent aggressive pruning initially. The difficulty in achieving the required level for $\beta$ becomes more severe as the data set size increases and the level of the regularization curve in Figure \ref{fig:regu_theta_all_three} needs to decrease. Therefore, the effectiveness of the Beta hyper-prior could be limited in certain problems. We also observe in the LeNet5 experiment that the CONCRETE approximation to the difference $C_0-C_1$ is showing less aggressive pruning when compared to the Taylor approximation. Pruning condition (ii) in Algorithm \ref{alg:learning_algo} with $\theta_{per}=0.1$ and $n_0=2814$ (3 Epochs) leads to slightly more aggressive pruning as compared to pruning condition (i). Again, SAL leads to much higher standard deviations and fails to prune especially the last layer to minimal size, while lacking considerably in test accuracy.
\begin{table}[!thb] %!thb
	\begin{center}
		\begin{tabular}{lclclclc}
			Method & \hspace{-0.25cm}Learned Architecture & \hspace{-0.5cm}Test Accuracy [\%]& \hspace{-0.2cm}Pruning Ratio [\%] \\
			\hline
			Baseline & $6-16-120-84$ & $99.20\pm0.08$ & -\\
			\hline
			Flattening:&&&\\
			Condition (i) &  $3.7$\mbox{\tiny$\pm0.64$}$-8.8$\mbox{\tiny$\pm1.16$}$-15.8$\mbox{\tiny$\pm1.16$}$-9.5$\mbox{\tiny$\pm0.50$} & $98.97\pm 0.07$ & $92.44\pm0.93$\\
			Condition (ii) &  $3.5$\mbox{\tiny$\pm0.67$}$-8.4$\mbox{\tiny$\pm0.92$}$-14.6$\mbox{\tiny$\pm0.92$}$-9.5$\mbox{\tiny$\pm0.67$} & $98.93\pm 0.09$ & $93.23\pm0.63$\\
			CONCRETE &  \hspace{-0.3cm}$4.1$\mbox{\tiny$\pm0.54$}$-10.8$\mbox{\tiny$\pm0.98$}$-16.2$\mbox{\tiny$\pm0.98$}$-10.3$\mbox{\tiny$\pm0.90$} & $98.98\pm 0.07$ & $90.42\pm1.05$\\
			\hline
			Beta &  $4.0$\mbox{\tiny$\pm0.63$}$-9.7$\mbox{\tiny$\pm1.01$}$-16.3$\mbox{\tiny$\pm1.42$}$-9.8$\mbox{\tiny$\pm0.60$} & $98.99\pm 0.06$ & $91.39\pm0.63$\\
			SAL &\hspace{-1.5cm} $5.43$\mbox{\tiny$\pm0.49$}$-10.14$\mbox{\tiny$\pm3.18$}$-18.71$\mbox{\tiny$\pm13.20$}$-22.71$\mbox{\tiny$\pm13.55$} & $92.95\pm 14.67$ & $87.33\pm8.63$\\% $\theta_{init}=0.9, 0.16$
			\hline
		\end{tabular}
	\end{center}
\caption{Resulting architecture and test accuracy of the learning/pruning on the LeNet5 architecture using different forms of hyper-prior and methods to approximate the difference $C_0-C_1$.\label{tab:LeNet5}}
\end{table}
Next, Figures~(\ref{fig:results_CNN_units}), (\ref{fig:results_CNN_loss}) and (\ref{fig:results_CNN_tacc}) show how the total number of hidden units/features, the training loss and the test accuracy evolve during training. The left most plot shows again that our method maintains a high number of units/features for about 4 epochs. During this time, these network structures are allowed to adapt and become useful. In the case of LeNet5, where each layer carries a relatively small number of units, the need to provide poorly initialized units a grace period to adapt and not prune them immediately becomes especially critical. Otherwise, the process becomes unstable and good networks can not be found consistently. Our method is able to reduce the total number of hidden units and features of the network by a factor of 6 reliably while maintaining high test accuracy. The majority of units/filters have been pruned after about 15 epochs when using pruning condition~(i) or about 8 epochs when using pruning condition~(ii), leaving only a small network to train on for about 20 more epochs until convergence. In comparison, training of the baseline network (no pruning) takes about 25 epochs to converge.\\
\begin{figure}[t]%!htb
	\centering
	\begin{subfigure}{0.32\textwidth}
		\resizebox{\linewidth}{!}{\includegraphics{./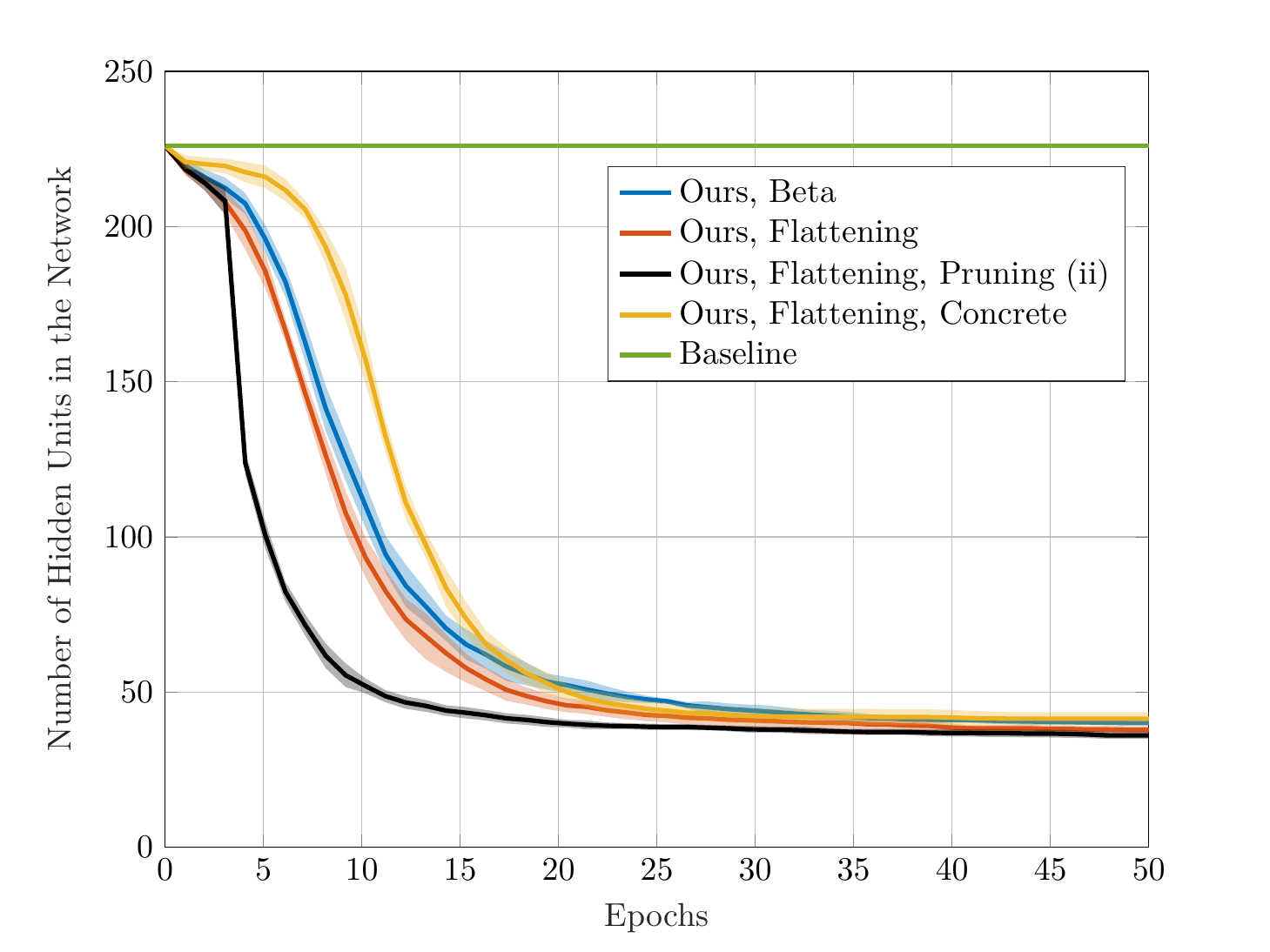}}
		\caption{Number of Units} \label{fig:results_CNN_units}
	\end{subfigure}
	\begin{subfigure}{0.32\textwidth}
		\resizebox{\linewidth}{!}{\includegraphics{./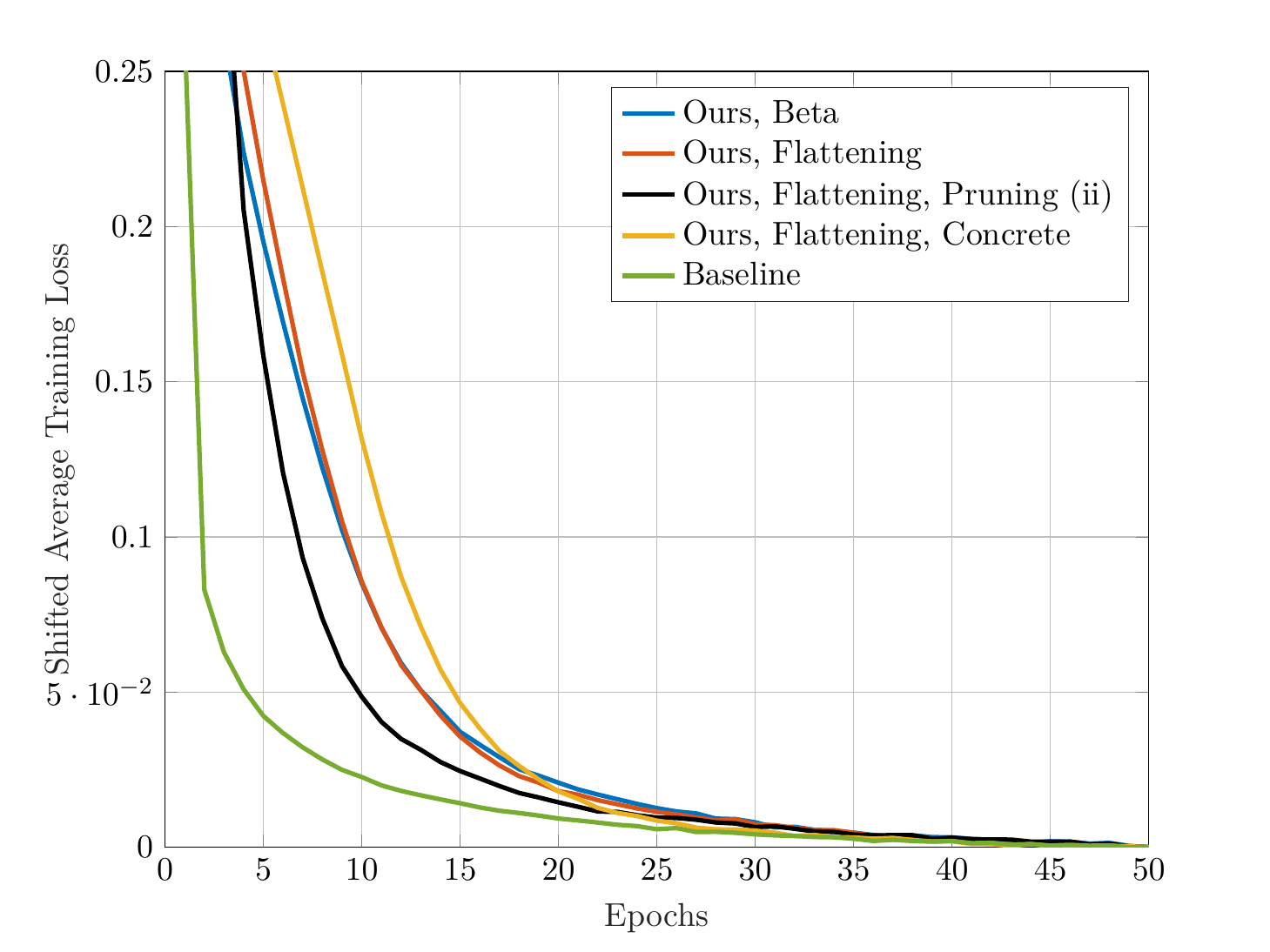}}
		\caption{Loss} \label{fig:results_CNN_loss}
	\end{subfigure}
	\begin{subfigure}{0.32\textwidth}
		\resizebox{\linewidth}{!}{\includegraphics{./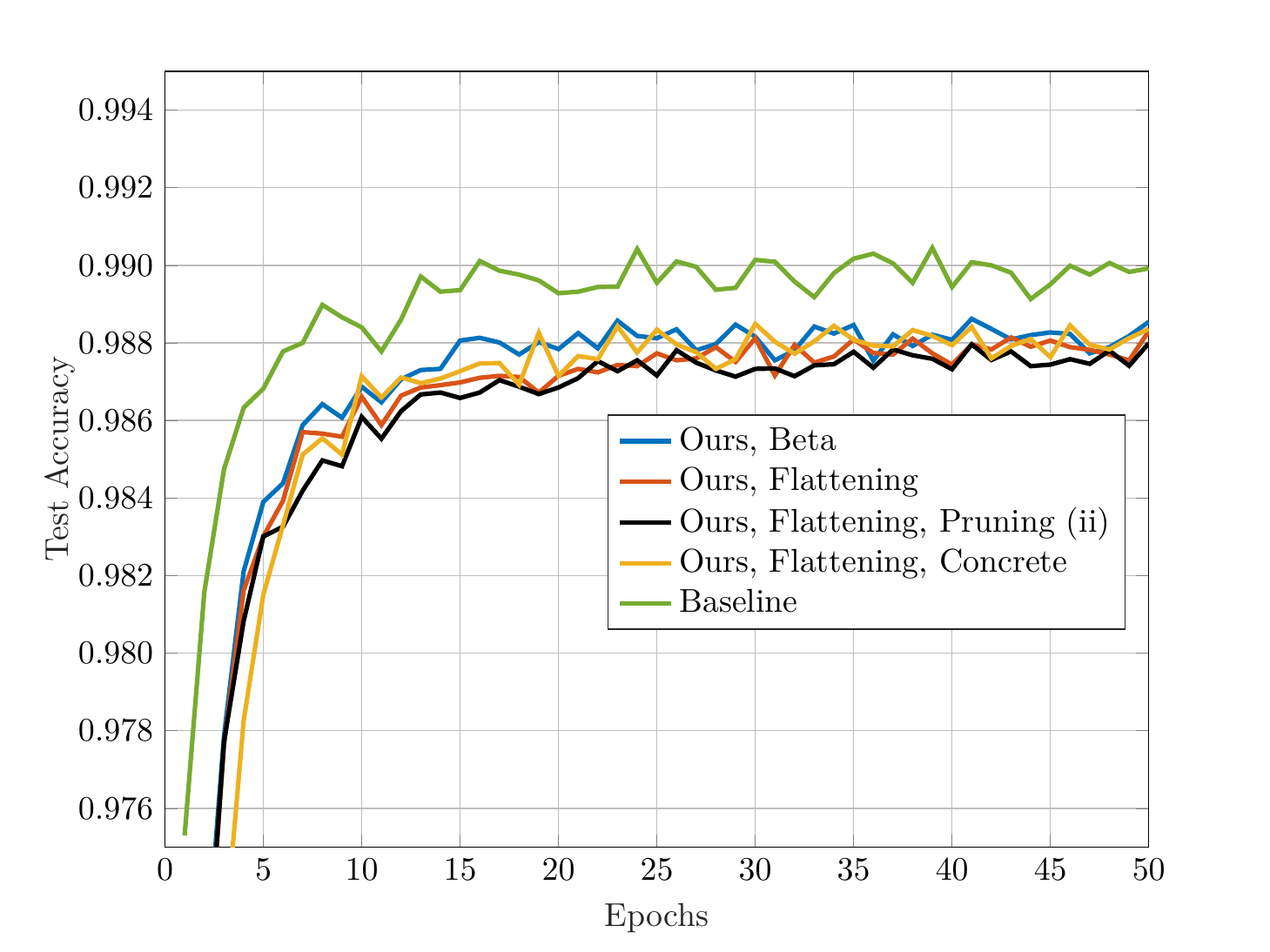}}
		\caption{Accuracy} \label{fig:results_CNN_tacc}
	\end{subfigure}
	\caption{Total number of hidden units in the network (a), training loss (b) and test accuracy (c) during the 50 epochs of training/pruning on LeNet5.}\label{fig:results_CNN_training}
\end{figure}

\subsubsection{Comparison with state-of-the-art methods}
In this section, we compare our approach with some recent, state-of-the-art methods for structured pruning on the MNIST data set. More specifically, we evaluate our method against the Stochastic Architecture Learning (SAL) in \cite{srinivas_generalized_2016}, Provable Filter Pruning (PFP) in \cite{liebenwein_pruning_19}, Filter Thresholding (FT) in \cite{li_pruning_2017} and SoftNet in \cite{he_soft_2018}. The comparison results  are shown in Table~\ref{tab:CNN_FCNN_pruning_results}. Results for PFP, FT and SoftNet are listed as reported in \cite{liebenwein_pruning_19}, where no standard deviations were given. To our knowledge, results for SAL were not available in the literature for the network structures considered and were obtained from our own implementation of SAL.  More specifically, SAL is also based on a variational approach to adapt the dropout probabilities $\theta$ and utilizes a Beta hyper-prior. However, a key difference between our approach (when using the Beta hyper-prior) and SAL is how the $\pi$ parameters are selected; instead of solving for the optimal $\pi^*$ as in Section~\ref{sec:beta_hyper_prior}, SAL takes the arbitrary choice $\pi^*=\theta$. In our implementation of SAL, besides adhering to this choice, we took  $\alpha=0.099$ and $\beta=10\alpha$ as the hyper-parameters of the Beta distribution satisfying $\beta/\alpha=10$ as recommended in \cite{srinivas_generalized_2016}. Furthermore, to achieve competitive pruning levels with SAL, we initialized $\theta=0.08$ for the LeNet300-100 network and $\theta=0.9$ in the first convolutional layer and $\theta=0.16$ in all other layers for the LeNet5 network.
We observed that the performance of SAL was sensitive to initialization resulting in uncompetitive results for a variation of less than $0.05$ in the above initialization values of $\theta$. We also observed that SAL was sensitive to weight initialization and for the LeNet5 network it could find successful networks only in 7 out of the 10 runs (using the same initial weights in each run as for our method.) Thus,  the reported statistics for SAL in Table~\ref{tab:CNN_FCNN_pruning_results} come from the 7 successful runs.

Table~\ref{tab:CNN_FCNN_pruning_results} shows that our method is able to robustly prune the network to smaller sizes while maintaining higher accuracy compared to the best of previous methods for both, the fully connected LeNet300-100 and the convolutional LeNet5 architectures.
We emphasize that a key characteristic of our simultaneous pruning and training approach is its robustness, i.e., its ability to deliver consistent results in pruning effectiveness and accuracy regardless of the starting architecture (number of units per layer) and initial values of weights. SAL in \cite{srinivas_generalized_2016}, while also being based on Bayesian principles and relying on variational inference techniques to unify the learning and pruning process, does not provide this robustness in our experiments as can be seen by the high standard deviations reported in Table~\ref{tab:CNN_FCNN_pruning_results}. Unfortunately, we do not have data to judge the robustness properties of the other methods.
We do not report results for BC-GNJ and BC-GHS from \cite{Louizos_2017} here, because in their experiments with LeNet-300-100 the input layer was also subject to pruning in addition to the hidden layers, and with LeNet-5 a different starting architecture of $20-50-800-500$ filters/units is used, making their MNIST results not comparable with ours. Also
in \cite{liebenwein_lost_2021}, the architectures resulting after pruning for PFP, FT and SoftNet are not reported.
\begin{table}[h]
	%\vspace{0.25cm}
	\centering
	\begin{tabular}{c| c |c c}
		[\%] & Method & Accuracy & Pruning Ratio\\
		\hline
		\parbox[t]{3mm}{\multirow{5}{*}{\rotatebox[origin=c]{90}{LeNet-300-100}}} &Unpruned  & $98.46\pm0.09$& -\Tstrut\\
		&Ours  &$98.13\pm0.07$ &$87.59 \pm 0.43$\\
		&SAL &$98.20\pm0.09$ &$82.93\pm2.25$\\
		&PFP & $98.00$&$84.32$\\
		&FT & $98.06$ &$81.68$\\
		&SoftNet &$98.00$&$81.69$\Bstrut\\
		\hline
		\parbox[t]{3mm}{\multirow{5}{*}{\rotatebox[origin=c]{90}{LeNet-5}}}&Unpruned  &$99.20\pm 0.08$ & -\Tstrut\\
		&Ours  & $98.97 \pm 0.07$ & $92.44 \pm 0.93$\\
		&SAL &$\,\,\,92.95\pm14.67$ &$87.33\pm8.63$\\
		&PFP& $98.93$& $92.37$\\
		&FT &  $98.81$ &$85.04$\\
		&SoftNet  & $98.88$ & $80.57$ \\
	\end{tabular}
	\centering
\caption{Accuracy and pruning ratio of different state of the art techniques for structured pruning evaluated on the MNIST digit-classification problem. All quantities are in [\%], higher is better. Comparing our method using the Flattening hyper-prior, Taylor approximation and pruning condition (i) from Algorithm \ref{alg:learning_algo} to SAL \cite{srinivas_generalized_2016}, PFP \cite{liebenwein_pruning_19}, FT \cite{li_pruning_2017} and SoftNet \cite{he_soft_2018}. The results for PFP, FT, and SoftNet are quoted from \cite{liebenwein_pruning_19}.} \label{tab:CNN_FCNN_pruning_results}
\end{table}

\subsection{VGG16 on CIFAR-10 Results}
We use our algorithm to train and prune the VGG16 network structure \cite{simonyan_deep_2015}, which consists of 13 convolutional and 3 dense layers, summing up to over $15$ million parameters for the CIFAR-10 32x32 color images. We also use batch normalization layers with the ``center=false'' option between each two layers.
All $50000$ training images are used during training and the network is evaluated on the $10000$ test images of the 10-class data set. Here, our method is only used with the flattening hyper-prior with Taylor approximation of $C_0-C_1$ and pruning condition (i) with parameter $\theta_{tol}=\expnumber{1}{-3}$. The following training parameters and hyper-parameters are used: Data set size $N=50000$, Mini-batch size $B=128$, $\mathcal{L}_2$-Regularization parameter $\lambda= 25$. For our method, we initialize all $\theta$ parameters to $0.5$ and for SAL, we initialize all $\theta$ parameters to $0.1$. We train the network for 300 epochs using stochastic gradient descent with momentum (with momentum parameter of $0.9$), starting at a learning rate of $0.05$ and reducing it every 30 epochs by a factor of $2$. The networks obtained after 300 epochs are saved and evaluated on the test data set in terms of their structure, accuracy and pruning ratio.

Table \ref{tab:CIFAR10_pruning_results} shows the training and pruning results of our method, SAL and several other state-of-the-art methods for structured pruning: Provable Filter Pruning (PFP) in \cite{liebenwein_pruning_19}, Filter Thresholding (FT) in \cite{li_pruning_2017}, SoftNet in \cite{he_soft_2018} as well as BC-GNJ and BC-GHS in \cite{Louizos_2017}.
Our method is able to robustly prune the network to smaller sizes while maintaining higher accuracy compared to the best of previous methods for VGG16.
Again, we emphasize that our robust method yields 4 times smaller standard deviation for the pruning ratio when compared to SAL. Here one should note that a network with pruning ratio $95.5\%$ has only half the number of parameters as a network with a pruning ratio $91\%$. Our method finds networks consisting of about $700000$ parameters after 300 epochs of training. However, after the first 30 epochs of training the network was already pruned to about one tenth of its initial size or to about $1.5$ million parameters, leading to large computational savings in all following training iterations and of course during inference.
\begin{table}[h]
	%\vspace{0.25cm}
	\centering
	\begin{tabular}{c| c |c |c |c}
		[\%] & Method & Accuracy & Pruning Ratio& Baseline Accuracy\\
		\hline
		\parbox[t]{3mm}{\multirow{5}{*}{\rotatebox[origin=c]{90}{VGG16}}}
		%&Unpruned  &$92.78\pm 0.25$ & -\Tstrut\\
		&Ours  & $92.66 \pm 0.24$ & $95.63 \pm 0.15$ & $92.92\pm 0.25$\Tstrut\\
		&SAL &$90.30\pm0.34$ &$91.05\pm0.61$ & $92.92\pm 0.25$\\
		&PFP& $92.39$& $94.32$& $92.89$\\
		&FT &  $91.78$ &$80.09$& $92.89$\\
		&SoftNet  & $92.08$ & $63.95$& $92.89$ \\
		&BC-GNJ  & $91.40$ & $93.30$ & $91.60$\\
		&BC-GHS  & $91.00$ & $94.50$ & $91.60$
	\end{tabular}
	\centering
	\caption{Accuracy and pruning ratio of different state of the art techniques for structured pruning evaluated on the CIFAR-10 image-classification problem using the VGG16 network architecture \cite{simonyan_deep_2015}. All quantities are in [\%], higher is better. Comparing our method using the Flattening hyper-prior and pruning condition (i) from Algorithm \ref{alg:learning_algo} to SAL \cite{srinivas_generalized_2016}, PFP \cite{liebenwein_pruning_19}, FT \cite{li_pruning_2017}, SoftNet \cite{he_soft_2018} and BC-GNJ as well as BC-GHS \cite{Louizos_2017}. The results for PFP, FT, and SoftNet are quoted from \cite{liebenwein_pruning_19}. The Results for BC-GNJ and BC-GHS are quoted from \cite{Louizos_2017}.} \label{tab:CIFAR10_pruning_results}
\end{table}

\newpage
Table \ref{tab:CIFAR10_arch} shows the resulting networks from of our method, SAL, as well as the BC-GNJ and BC-GHS methods in \cite{Louizos_2017}. Our network is able to prune each layer to appropriate size independent of its starting size. The resulting network structure is narrower towards both ends and wider in the intermediate layers. Using SAL, we find that the size of each layer is dependent on the initial size and roughly pruned to one third of it. Due to the careful choice of the flattening hyper-prior, the need to initialize $\theta$ at particular values (e.g. at $\theta=0.1$ for SAL) to achieve good pruning ratios, is eliminated and the resulting algorithm decouples the pruning from the initial value of $\theta$ and the initial layer sizes. BC-GNJ and BC-GHS yield networks with tails heavily pruned while the first few layers, which are responsible for the bulk of the computational load, are not narrowed much. As a consequence, although the network architecture using BC-GHS has $24\%$ more parameters than ours, the computational load to forward-propagate through this network is $71\%$ higher than with our network.

\begin{table}[!thb] %!thb
	\begin{center}
		\begin{tabular}{lclclclc}
			Method & \hspace{-0.25cm}Learned Architecture  \\
			\hline
			Baseline &\hspace{-0.6cm} $64\,\text{-}\,64\,\text{-}\,128\,\text{-}\,128\,\text{-}\,256\,\text{-}\,256\,\text{-}\,256\,\text{-}\,512\,\text{-}\,512\,\text{-}\,512\,\text{-}\,512\,\text{-}\,512\,\text{-}\,512\,\text{-}\,512\,\text{-}\,512$ \\
			\hline
			Ours &\hspace{-0.6cm}
			$16.7\mbox{\tiny$\pm1.49$}\,\text{-}\,47.9\mbox{\tiny$\pm3.35$}\,\text{-}\,99.6\mbox{\tiny$\pm5.74$}\,\text{-}\,104.6\mbox{\tiny$\pm6.42$}\,\text{-}\,160.6\mbox{\tiny$\pm3.95$}\,\text{-}\,123.2\mbox{\tiny$\pm4.10$}\,\text{-}\,79.3\mbox{\tiny$\pm4.72$}\,\text{-}\,73.4\mbox{\tiny$\pm4.67$}\,\text{-}\,$\Tstrut\\
			&$\,\text{-}\,41.3\mbox{\tiny$\pm2.75$}\,\text{-}\,22.7\mbox{\tiny$\pm1.57$}\,\text{-}\,25.3\mbox{\tiny$\pm2.67$}\,\text{-}\,18.7\mbox{\tiny$\pm1.42$}\,\text{-}\,20.0\mbox{\tiny$\pm1.41$}\,\text{-}\,23.0\mbox{\tiny$\pm2.00$}\,\text{-}\,30.1\mbox{\tiny$\pm1.60$}$ \\
			\hline
			SAL &\hspace{-0.6cm}
			$23.8\mbox{\tiny$\pm3.66$}\,\text{-}\,19.5\mbox{\tiny$\pm4.58$}\,\text{-}\,44.5\mbox{\tiny$\pm5.25$}\,\text{-}\,40.8\mbox{\tiny$\pm9.26$}\,\text{-}\,85.7\mbox{\tiny$\pm12.37$}\,\text{-}\,77.9\mbox{\tiny$\pm6.31$}\,\text{-}\,79.8\mbox{\tiny$\pm7.81$}\,\text{-}\,153.4\mbox{\tiny$\pm19.02$}\,\text{-}\,$\Tstrut\\
			&$\,\text{-}\,145.6\mbox{\tiny$\pm12.76$}\,\text{-}\,145.2\mbox{\tiny$\pm12.26$}\,\text{-}\,154.6\mbox{\tiny$\pm8.98$}\,\text{-}\,156.2\mbox{\tiny$\pm14.43$}\,\text{-}\,140.0\mbox{\tiny$\pm8.18$}\,\text{-}\,169.4\mbox{\tiny$\pm9.50$}\,\text{-}\,227.9\mbox{\tiny$\pm9.99$}$ \\
			\hline
			BC-GNJ &\hspace{-0.6cm} $63\,\text{-}\,64\,\text{-}\,128\,\text{-}\,128\,\text{-}\,245\,\text{-}\,155\,\text{-}\,63\,\text{-}\,26\,\text{-}\,24\,\text{-}\,20\,\text{-}\,14\,\text{-}\,12\,\text{-}\,11\,\text{-}\,11\,\text{-}\,15$ \\
			BC-GHS &\hspace{-0.6cm} $51\,\text{-}\,62\,\text{-}\,125\,\text{-}\,128\,\text{-}\,228\,\text{-}\,129\,\text{-}\,38\,\text{-}\,13\,\text{-}\,9\,\text{-}\,6\,\text{-}\,5\,\text{-}\,6\,\text{-}\,6\,\text{-}\,6\,\text{-}\,20$ \\
			\hline
		\end{tabular}
	\end{center}
	\caption{Resulting architecture and test accuracy of the learning/pruning on the VGG16 architecture for our method using the Flattening hyper-prior and pruning condition (i) and SAL. The Results for BC-GNJ and BC-GHS are quoted from \cite{Louizos_2017}.\label{tab:CIFAR10_arch}}
\end{table}

\section{Conclusions}\label{sec:conclusions}
Deep Neural Networks often require excessive computational requirements during training and inference. To address this issue, we have proposed a novel structured pruning algorithm that operates simultaneously with the weight learning process. Based on Bayesian variational inference peinciples, our method learns the distributions over Bernoulli random variables multiplying structures such as units in fully connected or filters in convolutional networks and acting like unit-wise adaptive dropout. In this way, automatic pruning is effected during the training phase and  is signaled by the parameters of the variational Bernoulli distribution converging to $0$ rendering the corresponding unit/filter permanently inactive. The Bernoulli parameters of surviving structures converge to $1$ resulting is a smaller, deterministic network.

An important consideration for the pruning and prediction accuracy performance is that these parameters do not converge prematurely, e.g.,  due to an unfavorable initialization of weights while at the same time it is desirable to prune irrelevant structures from the network as early as appropriate to save computational effort in future training iterations.
To this end, we establish desired properties of the hyper-prior distributions over the parameters controlling the prior distributions of the Bernoulli random variables based on the dynamics of the learning process and analytically derive a novel ``Flattening''  hyper-prior distribution possessing these properties; this hyper-prior has only one parameter that can be transparently used for trading-off pruning levels vs. prediction accuracy.
In this manner, consistent pruning results are achieved regardless of the initialization of network weights and the level of overparametrization in the starting network.

We showed that the additional gradients needed to learn the variational parameters can be calculated or approximated efficiently using backpropagation. We analyzed the underlying ODE system of the resulting stochastic gradient descent algorithm using Lyapunov stability theory in Theorem~\ref{thm:stability} and obtained theoretical conditions under which a variational parameter corresponding to a unit/filter and its corresponding weights converge to $0$. These results were tied to the proposed learning/pruning algorithm using stochastic approximation theory in Theorem~\ref{thm:Alg_conv} and then used to suggest practical pruning conditions as part of our algorithm.

We evaluated the proposed learning/pruning algorithm on the MNIST and CIFAR-10 data sets using common LeNet and the VGG16 architectures. Our structured pruning method is able to reduce the total number of weights to a level on par or better than competing state-of-the-art methods while achieving higher test-accuracy. Most importantly, our experiments confirm that this performance is achieved in a robust way with respect to weight initialization and initial architecture size and that our algorithm can identify and prune irrelevant structures of the network during the early stages of training. Thus, significant computational load during the remainder of the training as well as during inference can be saved.
%number only refered eqs (in appendix), comment out to show all numbers in appendix
\mathtoolsset{showonlyrefs,showmanualtags}

\appendix
%renewcomand, but need to renew again before \subsection
%careful! now we refer by using just \ref{app:A} insead  of Appendix \ref... but not for the subsections in appendix
\renewcommand\thesection{\appendixname\ \Alph{section}}
\section{Conditions (\ref{eq:kbnd}) and (\ref{eq:ebnd}) in Theorem~\ref{thm:stability}}\label{app:A} %\eqref{eq:kbnd} and \eqref{eq:ebnd}
%renew again
\renewcommand\thesection{\Alph{section}}
In this Appendix,  we establish the existence of constants $\kappa$ and $\eta$ such that
\begin{align}\label{eq:bnds}
&|C_1-C_0| \leq \kappa\cdot \phi\quad{\rm and}\quad
%\bar \sigma\left(\E\left[ \delta_l z_b^\top \right]\right)\leq \eta
\bar\sigma( M_1+ M_2^\top)\leq \eta
\end{align}
%with $M_1$ and $M_2$ defined in (\ref{eq:Sys_GrDes})
hold as required by the assumptions of Theorem~\ref{thm:stability}. We assume that all activation functions $a_l(\cdot)$ of the network have derivatives bounded by 1, i.e., $|a_l'(\cdot)|\leq 1$ and satisfy $a_l(0)=0$. Typical activation functions such as ReLU and the hyperbolic tangent conform with this assumption. We also assume that the given data has bounded moments as follows: $\E_{x\sim p(x)}\norm{x}^k \leq S_{xk} < \infty$, $k=1,2,3,4$ and  $\E_{y\sim p(y|x)}\norm{y}^k\leq S_{yk} < \infty$, $k=1,2$.

\subsection{$|C_1-C_0| \leq \kappa\cdot \phi$ Bound}\label{sec:kappabound}
First, note that under the assumption that the gradients of all activation functions in the network are bounded by $1$ in absolute value, the Lipschitz constant $\mathcal{L}_{NN}$ of a neural network with weights ${W^l},\, {1\leq l\leq L}$ with $L$ layers exists and can be bounded by (\cite{szegedy_intriguing_2014, scaman_lipschitz_2019})
	\begin{align} \label{eq:NN_lipschitz}
	\mathcal{L}_{NN} \leq \left(\frac{1}{L}\sum_{l=1}^{L} \norm{W^l}_F^2\right)^{\frac{L}{2}}.
	\end{align}
%\end{lemma}
Next, consider the unit $z$ in layer $l$ of the neural network as depicted in Figure~\ref{fig:NN_struct}. The
activation signal in the last layer, i.e., the network's output before applying the output activation function can be thought of as the output of a neural network consisting of the last $L-l$ layers (and without output activation) of the full network, specifically
\begin{align}
\zeta^L(\xi) =NN_f\left(\bar W_f \left(\bar z\odot\bar \xi\right) + w_f\cdot z \cdot \xi\right),
\end{align}
with $z=a_{l-1}(w_b^\top z_b)$, $\bar W_f$ consisting of the columns in $W^l$ except $w_f$ and similarly $\bar z$ consisting of the elements in $z^l$ except $z$ and $\bar\xi$ the RV multiplying $\bar z$.
 Similarly $z_b=NN_b([x ; 1])$ can be thought of as the output of a neural network $NN_b$ consisting of the first $l-1$ layers of the full network with input $[x ; 1]$.
Therefore,  we can bound
\begin{align}
\norm{\zeta^L(\xi=1, \bar \Xi )-\zeta^L(\xi=0 , \bar \Xi )} \leq \mathcal{L}_{NN_f} \norm{w_f \cdot z}
\end{align}
and also
\begin{align}\label{eq:zb_bound}
\norm{z_b-NN_b([0;1])}
\leq \mathcal{L}_{NN_b}\cdot \norm{x}\ \Rightarrow \norm{z_b}\leq \mathcal{L}_{NN_b}\cdot \norm{x} + B_1
\end{align}
where $B_1$ is a uniform bound on $\norm{NN_b([0;1])}$ for all realizations of the RVs $\Xi$ and bounded weights $W^l$.
Furthermore,
\begin{align}
\norm{w_f\cdot z} = \norm{w_f}\cdot |z| \leq \norm{z_b}\cdot \norm{w_b}\cdot \norm{w_f} \leq \norm{z_b} \cdot\underbrace{\frac{1}{2}(\|w_b\|^2+\|w_f\|^2)}_{\equalhat\phi}
\end{align}
and combining the previous bounds yields
\begin{align}\label{eq:phibnd1}
\norm{\zeta^L(\xi=1, \bar \Xi )-\zeta^L(\xi=0 , \bar \Xi )} \leq \mathcal{L}_{NN_f}\left(\mathcal{L}_{NN_b} \norm{x}+B_1\right) \cdot \phi.
\end{align}
We can also bound the output of the full network when the input is $x$ as follows:
\begin{align}\label{eq:yhat_bound}
	\norm{\hat y(x,\Xi)} \leq \mathcal{L}_{NN}\cdot\norm{x} + B_2
\end{align}
where $B_2$ is a uniform bound on $\norm{NN([0;1])}$ for all realizations of the RVs $\Xi$ and bounded weights $W^l$. Indeed, for the regression case with linear output activation $\hat y\equiv\zeta^L$ and \eqref{eq:yhat_bound} follows in a similar manner with \eqref{eq:zb_bound}. For the $K$-class classification case with softmax output activation $\norm{\hat y}\leq K$, which implies a fortiori \eqref{eq:yhat_bound} by setting $B_2=\max\{B_2,K\}$.

Next, we show that the network output activation together with the loss function $l(y,\hat y)=-\log p(y \mid \hat y)$ satisfy for both the regression and $K$-class classification problems considered:
\begin{align}\label{eq:lbnd}
	\left|l_1-l_0\right| &= \left|l\left(y,\hat y_1\right) - l\left(y,\hat y_0\right)\right|\leq \left(\mathcal{L}_{NN}\|x\|+\|y\|+B_2\right)\cdot\norm{\zeta_1^L - \zeta_0^L},
\end{align}
where we defined $\hat y_1\equalhat y(x, \xi=1, \bar \Xi)$, $\hat y_0\equalhat y(x, \xi=0, \bar \Xi)$
and $\zeta_1^L\equalhat \zeta^L(x, \xi=1, \bar \Xi)$, $\zeta_0^L\equalhat \zeta^L(x, \xi=0, \bar \Xi)$ for brevity.  More specifically, in the regression case, we assume linear output activation and $p(y \mid \hat y) \sim \mathcal{N}(y;\hat y, I)$.
Then,
\begin{align}
	\left|l_1-l_0\right| = \frac{1}{2}\left|\norm{y-\hat y_1}^2- \norm{y-\hat y_0}^2\right|
\leq \frac{1}{2}\norm{\hat y_1+\hat y_0-2y}\cdot\norm{\hat y_1 - \hat y_0}
\end{align}
and \eqref{eq:lbnd} follows using \eqref{eq:yhat_bound} and since in this case $\hat y\equiv\zeta^L$. In the $K$-class classification case, we consider the softmax output activation and $p(y \mid \hat y) \sim \prod_{k=1}^K \hat y^{y{i,k}}$. Then, viewing the loss function as a function of $\zeta^L$ allows to write
\begin{align}
	\left|l_1-l_0\right| \leq \max_{\zeta^L}\norm{\nabla l(\zeta^L)}\cdot\norm{\zeta_1^L-\zeta_0^L}
\end{align}
and using the well known expression $\nabla l(\zeta^L)=y-\hat y$  and \eqref{eq:yhat_bound} gives \eqref{eq:lbnd}.

Next, by substituting \eqref{eq:phibnd1} in \eqref{eq:lbnd}, we obtain
\begin{align}
	|l_1-l_0| \leq  \left(\mathcal{L}_{NN}\|x\|+\|y\|+B_2\right)
\mathcal{L}_{NN_f}\left(\mathcal{L}_{NN_b} \norm{x}+B_1\right) \cdot \phi
\end{align}
and further (see \eqref{eq:Chats_sampling}):
\begin{align}\label{eq:Kappa_instance}
\begin{split}
%|\hat C_1- \hat C_0| = \left|\frac{N}{B}\sum_{i=1}^B l_1 - \frac{N}{B}\sum_{i=1}^B l_0\right|\leq N \left(\mathcal{L}_{NN}\|x\|+\|y\|+B_2\right)
%\mathcal{L}_{NN_f}\left(\mathcal{L}_{NN_b} \norm{x}+B_1\right) \cdot \phi,
\hspace{-0.5cm}|\hat C_1- \hat C_0| = \left|\frac{N}{B}\sum_{i=1}^B \left(l_1-l_0\right)\right|\leq N \left(\mathcal{L}_{NN}\|x\|+\|y\|+B_2\right)
\mathcal{L}_{NN_f}\left(\mathcal{L}_{NN_b} \norm{x}+B_1\right) \cdot \phi,
\end{split}
\end{align}
which holds for any instance of $\bar \Xi$ and data-point $x,y$.
Finally, taking expectation with respect to the RV $\bar \Xi$ and the data $\mathcal{D}$
gives
\begin{align}\label{eq:Kappa_ex}
|C_1-C_0|=\left|E_{\bar\Xi,\mathcal{D}}[\hat C_1-\hat C_0]\right|\leq E_{\bar\Xi,\mathcal{D}}[|\hat C_1-\hat C_0|] \leq \underbrace{N\mathcal{L}_{NN_f}\cdot\gamma_0}_\kappa\cdot \phi
\end{align}
as required, where we defined
\begin{align}\label{eq:gam0}
\gamma_0&\equalhat E_{\mathcal{D}}[\left(\mathcal{L}_{NN}\|x\|+\|y\|+B_2\right)
\left(\mathcal{L}_{NN_b} \norm{x}+B_1\right)]<\infty
%\\&=\mathcal{L}_{NN}S_x +\mathcal{L}_{NN_b}\mu_x\mu_y+(B_1\mathcal{L}_{NN}+B_2\mathcal{L}_{NN_b})\mu_x+B_1\mu_y+B_1B_2.
 \end{align}
given our assumption on the moments of $x$ and $y$.

\subsection{$\bar \sigma(M_1+M_2^\top)\leq \eta$ Bound}\label{sec:deltabound}
Consider $\hat M_1(\xi=1, \bar \Xi)=A_1 \delta_f z_b^\top$,  $\hat M_2(\xi=1, \bar \Xi)=A_2 \delta_f z_b^\top$ with $A_1 = \frac{a_{l-1}(w_b^\top z_b)}{w_b^\top z_b}\leq \gamma_1$, $A_2 =a_{l-1}'(w_b^\top z_b)~\leq~\gamma_2$ being sector-bounded and positive for the choices of activation functions under consideration.
%First, recall the neural network equations in \eqref{eq:NN_def}
%as well as the back-propagated error $\delta_f=\delta^l$ and notice that $\delta^{L+1}=\hat y-y$ for the output loss functions and activations considered. Then, we can bound the norm of the back-propagated error in layer $l$ as follows
From the back-propagation equations
\begin{align}\label{eq:bp}
\delta^l={\rm diag}\{a_l'(\zeta^l)\}\cdot\left(W^{l+1}\right)^\top\cdot\left(\delta^{l+1}\odot\xi^{l+1}\right)
\end{align}
and since $\delta^{L+1}=\hat y-y$ for the output loss and activation functions considered, we can bound the norm of the back-propagated error $\delta_f=\delta^l$ in layer $l$ as follows
\begin{align}
\norm{\delta_f(\xi=1, \bar \Xi)} \leq \norm{W^{l+1}}_F\cdot \norm{W^{l+2}}_F \dots \norm{W^{L}}_F\cdot \norm{\hat y(x,\xi=1,\bar \Xi)-y}
\end{align}
by recursively applying \eqref{eq:bp} and using the submultiplicativity of the Frobenius norm.
We then obtain
\begin{align}
\begin{split}
%\bar \sigma \left(   \delta_f(\xi=1, \bar \Xi) |z_b|^\top   \right) &\leq
%%\norm{\E\left[\norm{\delta_f\Bigr|_{(\xi=1)}}_F \norm{z_b}\right]}_F\\
\bar \sigma\left(\delta_f(\xi=1, \bar \Xi)z_b^\top\right) = \norm{\delta_f(\xi=1, \bar \Xi)}\cdot\norm{z_b}
&\leq 	\prod_{l+1}^{L}\norm{W^l}_F \cdot\norm{\hat{y}(x,\xi=1,\bar \Xi)-y}\cdot \norm{z_b}\\
&\leq   \gamma_3\cdot\left(\norm{\hat{y}(x,\xi=1,\bar \Xi)}+\norm y\right)\cdot
 \left(\mathcal{L}_{NN_b}\norm{x}+B_1\right)
\end{split}
\end{align}
using \eqref{eq:zb_bound} and where $\gamma_3$ bounds $\prod_{l+1}^{L}\norm{W^l}_F$.
Further, the sector-bounds on the activation function give
\begin{align}
	\bar \sigma(\hat M_1) &= \bar \sigma \left(     A_1\delta_f  z_b^\top          \right) \leq
	\gamma_1 \bar \sigma\left(\delta_f z_b^\top\right)
	 \quad \text{and}\quad
	 \bar \sigma(\hat M_2) = \bar \sigma \left(     A_2\delta_f  z_b^\top          \right) \leq
	 \gamma_2 \bar \sigma\left(\delta_f z_b^\top\right)
\end{align}
and combining these results yields
\begin{align}\label{eq:barsig_bnd}
		\bar \sigma(\hat M_k) &\leq  \gamma_k
\gamma_3\cdot\left(\norm{\hat{y}(x,\xi=1,\bar \Xi)}+\norm y\right)\cdot
 \left(\mathcal{L}_{NN_b}\norm{x}+B_1\right)\nonumber\\
 &\leq \gamma_k
\gamma_3\cdot\left(\mathcal{L}_{NN}\norm{x}+\norm{y}+B_2\right)\cdot
 \left(\mathcal{L}_{NN_b}\norm{x}+B_1\right),\quad k=1,2
\end{align}
using \eqref{eq:yhat_bound}.
Next, taking expectation with respect to the RV $\bar \Xi$ and the data $\mathcal{D}=\lbrace (x_i,y_i)\rbrace_{i=1}^N$ gives
\begin{align}\label{eq:M12h_bnd}
 \E_{\bar \Xi, \mathcal{D}}\left[\bar\sigma(\hat M_k)\right] \leq \gamma_0\gamma_k\gamma_3 \equalhat \eta_k,\quad k=1,2,
\end{align}
with $\gamma_0\equalhat E_{\mathcal D}[\left(\mathcal{L}_{NN}\norm{x}+\norm{y}+B_2\right)\cdot
 \left(\mathcal{L}_{NN_b}\norm{x}+B_1\right)]$ from \eqref{eq:gam0} and further
\begin{align}\label{eq:Esig_bnd}
 \bar \sigma(M_k) &= \bar \sigma\left(\E_{\bar \Xi, \mathcal{D}}\left[\hat M_k\right]\right)\leq \E_{\bar \Xi, \mathcal{D}}\left[\bar\sigma(\hat M_k)\right]\leq \eta_k,\quad k=1,2.
\end{align}
Finally, we arrive at
\begin{align}\label{eq:M12_bnd}
 \bar \sigma(M_1+M_2^\top) \leq \bar\sigma(M_1)+\bar\sigma(M_2)\leq\eta_1+\eta_2\equalhat\eta,
\end{align}
as required.
\renewcommand\thesection{\appendixname\ \Alph{section}}
\section{Proof of Theorem~\ref{thm:Alg_conv}}\label{app:B}
\begin{proof}
Consider the update rule from Algorithm~\ref{alg:learning_algo} for the fan-out weights $w_b(n)$, fan-in weights $w_b(n)$ and their update rate $\theta(n)$, expressed as
\begin{align}\label{eq:alg}
	x({n+1}) =  x(n)+a(n)g(n),\ n\geq 0
\end{align}
where $x(n)\equalhat [w_f(n)^T,\ w_b(n)^T, \theta(n)]^T$ as in the statement of Theorem~\ref{thm:Alg_conv},
\begin{align}
	g(n) = \begin{bmatrix}
	-\hat \xi\cdot\left(\frac{N}{B}\sum_{i=1}^{B} \frac{a_l(z_{b,i}^\top w_b)}{z_{b,i}^\top w_b}\delta_{f,i}z_{b,i}^\top\right) w_b - \lambda w_f \\[2ex] -\hat \xi\cdot\left(\frac{N}{B}\sum_{i=1}^{B}  a_l'(z_{b,i}^\top w_b)z_{b,i}\delta_{f,i}^\top\right) w_f - \lambda w_b \\[2ex]
	-\frac{N}{B} \sum_{i=1}^B \log \left(\frac{p(y_i \mid x_i,W,\hat \xi=0, \hat{\bar\Xi})}{p(y_i \mid x_i,W,\hat \xi=1, \hat{\bar\Xi})}\right)-\log \left(\frac{\theta(1-\pi^\star)}{(1-\theta)\pi^\star}\right)
	\end{bmatrix},
\end{align}
is a realization of the negative gradient of $L(W,\Theta)$ with respect to $x(n)$  and $a(n)$ is the stepsize. Notice that we omitted the iteration index $n$ in the notation for the weights $w_b(n), w_f(n)$ and parameters $\theta(n), \pi^\star(n)$ in the above equation for $g(n)$ for the sake of brevity. Also, notice that $z_{b,i}$ and $\delta_{f,i}$ are functions of the samples $\hat{\bar\Xi}$ and the data, and $\delta_{f,i}$ is computed with $\xi=1$.
We express \eqref{eq:alg} equivalently as
\begin{align}\label{eq:SA_borkar}
	x({n+1}) =  x(n)+a(n)\left(h(x({n})) +  M({n+1})\right),\ n\geq 0
\end{align}
where
\begin{align}\label{eq:happ}
	h(x)={\begin{bmatrix}
	-\left(\frac{\partial L}{\partial w_f}\right)^\top, & -\left(\frac{\partial L}{\partial w_b}\right)^\top,
    & -\left(\frac{\partial L}{\partial \theta}\right)^\top
	\end{bmatrix}}^\top
\end{align}
and we have implicitly defined
\begin{align}
	M({n+1})\equalhat g(n) - h(x).
\end{align}
Then, Theorem~\ref{thm:Alg_conv} follows immediately  from a result in \cite{borkar2009stochastic}. p.15  for the stochastic recursion (\ref{eq:SA_borkar}). First, we quote a set of assumptions for this result to hold from \cite{borkar2009stochastic}, pp.10-11:
\begin{itemize}
	\item[A1.] The map $h: \mathbb{R}^d \rightarrow \mathbb{R}^d$ is Lipschitz: $\norm{h(x)-h(y)}\leq L\norm{x-y}$ for some $0<L<\infty$.
	\item[A2.] Stepsizes $\{a(n)\}$ are positive scalars satisfying \begin{align}
		\sum_n a(n)=\infty, \quad \sum_n a(n)^2 < \infty.
	\end{align}
	\item[A3.] $\{M({n})\}$ is a martingale differene sequence with respect to the increasing family of $\sigma$-fields
	\begin{align}
		\mathcal{F}(n) \equalhat \sigma(x(m), M(m), m\leq n) = \sigma(x(0),M(1),\dots,M(n)), \,n\geq 0.
	\end{align}
	That is,\begin{align}
		\E\left[M({n+1})\rvert \mathcal{F}(n)\right]=0 \,\, a.s., \, n\geq0.
	\end{align}
	Furthermore,  $\{M(n)\}$ are square integrable with
	\begin{align}
		\E\left[\norm{M({n+1})}^2\rvert \mathcal{F}(n)\right] \leq K_B\left(1+\norm{x(n)}^2\right) \, a.s., \, n\geq 0,
	\end{align}
	for some constant $K_B>0$.
	\item[A4.] The iterates of \eqref{eq:SA_borkar} remain bounded $a.s.$, i.e,
	\begin{align}
		\sup_n \norm{x(n)}<\infty, \, a.s.
	\end{align}
\end{itemize}
Next, we state from \cite{borkar2009stochastic}, p.15:
\begin{theorem}{[Theorem 2 in \cite{borkar2009stochastic}, p.15}]\label{thm:bokar_convergence}
Assume that Conditions A1-A4 hold. Almost surely, the sequence $\{x(n)\}$ generated by \eqref{eq:SA_borkar} converges to a (possibly sample path dependent) compact connected internally chain transitive invariant set of the ODE:
\begin{align}\label{eq:ODE_borkar1}
	\mathit{\dot x(t) = h(x(t)),\ \ t\geq 0.}
\end{align}
%\hspace*{\fill} $\blacksquare$
\end{theorem}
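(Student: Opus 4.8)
The plan is to prove the theorem by the \emph{ODE method} for stochastic approximation: I would show that the linearly interpolated iterates constitute an asymptotic pseudotrajectory of the flow $\Phi$ induced by \eqref{eq:ODE_borkar1}, and then characterize their limit set accordingly. First I would introduce the rescaled time axis $t(0)=0$, $t(n)=\sum_{m=0}^{n-1}a(m)$, which diverges to $\infty$ by the first part of A2, and define the continuous process $\bar x(t)$ by $\bar x(t(n))=x(n)$ with affine interpolation on each $[t(n),t(n+1)]$. Summing the recursion \eqref{eq:SA_borkar} puts it in integrated form $\bar x(t(n))=\bar x(0)+\int_0^{t(n)}h(\bar x(s))\,ds+(\text{discretization error})+\zeta(n)$, where $\zeta(n)\equalhat\sum_{m=0}^{n-1}a(m)M(m+1)$ accumulates the noise.

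The second step controls this noise. Under A3, $\zeta(n)$ is a martingale with respect to $\{\mathcal F(n)\}$ whose increments satisfy $\E[\norm{a(m)M(m+1)}^2\mid\mathcal F(m)]\leq a(m)^2 K_B(1+\norm{x(m)}^2)$. On almost every sample path A4 makes $1+\norm{x(m)}^2$ uniformly bounded, so by the second part of A2 the summed conditional second moments $\sum_m a(m)^2\E[\norm{M(m+1)}^2\mid\mathcal F(m)]$ are finite. The martingale convergence theorem then gives that $\zeta(n)$ converges almost surely, whence the tail differences $\zeta(n)-\zeta(n')$ vanish uniformly over any bounded time window as $n,n'\to\infty$.

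Third, I would prove the tracking lemma. Fix $T>0$ and, on the window $[t(n),t(n)+T]$, compare $\bar x$ with the exact solution $x^{(n)}(\cdot)$ of \eqref{eq:ODE_borkar1} started from $\bar x(t(n))$. Subtracting the two integral equations and taking norms, the Lipschitz bound A1 yields $\norm{\bar x(t)-x^{(n)}(t)}\leq L\int_{t(n)}^{t}\norm{\bar x(s)-x^{(n)}(s)}\,ds+\varepsilon_n$, where $\varepsilon_n$ collects the accumulated noise (which vanishes by the previous step) and the discretization error (which vanishes because $\sup_{m\geq n}a(m)\to0$ while $h(\bar x)$ stays bounded on the compact set containing the bounded trajectory). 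Gr\"onwall's inequality then forces $\sup_{t\in[t(n),t(n)+T]}\norm{\bar x(t)-x^{(n)}(t)}\leq\varepsilon_n e^{LT}\to0$, which is exactly the assertion that $\bar x$ is an asymptotic pseudotrajectory of $\Phi$.

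Finally, I would invoke Bena\"im's characterization: the $\omega$-limit set of a bounded asymptotic pseudotrajectory is nonempty, compact, connected, invariant under $\Phi$, and internally chain transitive; since the limit set of $\{x(n)\}$ coincides with $\bigcap_{s\geq0}\overline{\{\bar x(t):t\geq s\}}$, the conclusion follows. I expect the main obstacle to be this last step, namely establishing \emph{internal chain transitivity}: this is the genuinely nontrivial content and the reason one appeals to the pseudotrajectory machinery rather than a bare Gr\"onwall estimate, since one must chain together finite-window ODE approximations with arbitrarily small jumps to produce the required $(\delta,T)$-chains. The tracking lemma is comparatively routine, and the interpolation set-up and martingale noise control are standard once A1--A4 are in force.
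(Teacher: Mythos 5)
The paper does not actually prove this statement: it is quoted verbatim as Theorem 2, p.~15 of Borkar's monograph \cite{borkar2009stochastic} and used as an imported, black-box result inside the proof of Theorem~\ref{thm:Alg_conv}, so there is no in-paper proof to compare against. Your outline is essentially the proof given in that cited source (and in Bena\"im's work on which it rests): time-rescaled affine interpolation of the iterates, almost-sure convergence of the accumulated martingale noise from A2--A4 via the martingale convergence theorem (with the routine localization on the sample-path-dependent bound from A4), a Gr\"onwall tracking lemma over finite windows using the Lipschitz condition A1, and the identification of the limit set of a bounded asymptotic pseudotrajectory as a compact connected internally chain transitive invariant set. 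The one caveat is that your last step \emph{invokes} Bena\"im's characterization rather than proving it; as you yourself note, that is where the genuine content lies, and it is precisely what Borkar's Chapter~2 argument establishes directly from the tracking lemma by an explicit $(\delta,T)$-chain construction, so a fully self-contained write-up would need to reproduce that construction rather than cite it.
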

First, notice that with $h(x)$ defined as in \eqref{eq:happ}, ODE \eqref{eq:ODE_borkar1}
%$\dot x(t) = h(x(t)), t\geq 0$ in the above theorem
matches the ODE system considered and analyzed in Theorem~\ref{thm:stability} (compare with \eqref{eq:dyn_sys}). %with region of attraction $\mathcal{A}$ as in \eqref{eq:RoA}.
We note further that under the assumption in Theorem~\ref{thm:Alg_conv} that the sequence $x(n)$ enters and remains within a region of attraction of the asymptotically stable equilibrium point $x^*=[w_f=0, w_b=0, \theta=\epsilon_1]^T$ of \eqref{eq:ODE_borkar1}, $x^*$ becomes by Theorem~\ref{thm:stability} the only chain transitive invariant set of the ODE in the aforementioned region of attraction. Therefore,  the almost sure convergence of $x(n)$ to $x^*$ is established once the assumptions~\eqref{eq:kbnd} to~\eqref{eq:cond_stability} of Theorem~\ref{thm:stability} and Assumptions A1 to A4 above are shown to hold for our algorithm. We also remark that by assuming the region of attraction to be contained within the projection region, we assure that the projection step does not interfere with the stochastic recursion as stated above.

Assumptions \eqref{eq:kbnd} and \eqref{eq:ebnd} hold for Algorithm~\ref{alg:learning_algo} as shown in \ref{app:A}. Assumption~\eqref{eq:cond_prior} is satisfied for both, the Beta and Flattening hyper-prior by Lemma~\ref{lemma:hyper_prior_characterization}.
Assumption~\eqref{eq:cond_stability} is satisfied by appropriate choice of $\epsilon_1$ in Algorithm~\ref{alg:learning_algo}.
%, in Algorithm \ref{alg:learning_algo}.}\\
Assumption~A2 is satisfied by choosing the stepsize $a(n)$ as required. Also, Assumption~A4 is clearly satisfied since the iterates $x(n)$ are assumed to remain within the bounded region $\phi\leq\phi_{max}$, $\theta\in[\theta_l,\ \theta_h]$. A close examination of the proof of Theorem~\ref{thm:bokar_convergence} in \cite{borkar2009stochastic} shows that it is sufficient to establish the Lipschitz property in Assumption~A1 only locally in the bounded region that the iterates $x(n)$ remain. Further, from (\ref{eq:dyn_sys}) we have:
	\begin{align}\label{eq:lip}
    h(x)=
	\begin{bmatrix}
	 \begin{bmatrix}
	-\lambda I &  -\theta M_1 \\ -\theta M_2 & -\lambda I
	\end{bmatrix}\begin{bmatrix}
	w_f \\  w_b
	\end{bmatrix}\\
     -(C_1-C_0) + \log\left[\frac{(1-\theta)\pi^\star}{\theta(1-\pi^\star)}\right]
    \end{bmatrix},
	\end{align}
and since sums of (locally) Lipschitz functions or products of bounded (locally) Lipschitz functions are (locally) Lipshitz, it suffices to show that $M_1$, $M_2$, $C_0$, $C_1$ and $f(\theta)\equalhat\log\left[\frac{(1-\theta)\pi^\star}{\theta(1-\pi^\star)}\right]$ are locally Lipshitz functions of $w_f$, $w_b$, and $\theta$. Clearly, $f(\theta)$ is locally Lipschitz since it has a bounded derivative in $\theta\in[\theta_l,\ \theta_h]$ for both the Beta and Flattening hyper-priors. Also, $M_k=E_{\Xi,\mathcal{D}}[A_k\delta_fz_b^T]$, $k=1,2$ are Lipschitz as the expectation of products of bounded Lipschitz functions. Finally, $C_0$ is independent of $w_f$, $w_b$, and $\theta$, while $\nabla_{w_f}C_1$ and $\nabla_{w_b}C_1$ are bounded functions of $w_f$ and $w_b$, therefore Lipschitz.

In the following, we verify Assumption~A3 in the context of our algorithm.
Let $M({n+1})=[M_f^\top,\ M_b^\top,\ M_\theta]^\top$ where $M_f$, $M_b$, and $M_\theta$ are defined in an obvious way. Then using the law of total expectation,
\begin{align}\label{eq:Mwf}
\begin{split}
&\E_{\xi,\bar\Xi,{\cal D}}[M_f|{\cal F}(n)]= \E_\xi \E_{\bar\Xi,{\cal D}}[M_f\mid\xi,{\cal F}(n)]=\\
&\E_\xi\left[\rule{0cm}{0.75cm}\right.-\xi\cdot\underbrace{ \E_{\bar\Xi,{\cal D}} \left[\frac{N}{B}\sum_{i=1}^{B} \frac{a_l(z_{b,i}^\top w_b)}{z_{b,i}^\top w_b}\delta_{f,i}z_{b,i}^\top\mid\xi,{\cal F}(n)\right]}_{\equalhat M_1}\left.\rule{0cm}{0.75cm}\right] w_b - \lambda w_f-\left(-\theta M_1 w_b-\lambda w_f\right)=\\
&-\underbrace{E[\xi]}_{\equalhat\theta}M_1w_b+\theta M_1 w_b=0.
\end{split}
\end{align}
Also,
\begin{align}
\begin{split}
&\E_{\xi,\bar\Xi,{\cal D}}[\|M_f\|^2\mid{\cal F}(n)]=
E_{\bar\Xi,{\cal D}}\E_\xi [\|M_f\|^2\mid\bar\Xi,{\cal D},{\cal F}(n)]=\\
&\E_{\bar\Xi,{\cal D}}\E_\xi\left[\|\-\xi\hat M_1 w_b - \lambda w_f-\left(-\theta M_1 w_b-\lambda w_f\right)\|^2\mid\bar\Xi,{\cal D},{\cal F}(n)\right]=\\
&\E_{\bar\Xi,{\cal D}}\left[\E_\xi[\xi^2]\|\hat M_1 w_b\|^2 -2\theta \E_\xi[\xi]w_b^\top M_1^\top\hat M_1w_b +\theta^2\|M_1 w_b\|^2\mid{\cal D},{\cal F}(n)\right]=\\
%&\theta \E_{\bar\Xi,{\cal D}}\left[\|\hat M_1 w_b\|^2\mid\bar\Xi,{\cal D},{\cal F}(n)\right] -\theta^2\|M_1 w_b\|^2.
&\theta \E_{\bar\Xi,{\cal D}}\left[\|\hat M_1 w_b\|^2\mid{\cal F}(n)\right] -\theta^2\|M_1 w_b\|^2,
\end{split}
\end{align}
where $\hat M_1$ is a sample of $\frac{a_l(z_{b}^\top w_b)}{z_{b}^\top w_b}\delta_{f}z_{b}^\top$ from the distribution of $\Xi$ and $\cal D$ with $\xi=1$ having expected value equal to $M_1$.
Next, using \eqref{eq:barsig_bnd} and that $\|\hat M_1 w_b\|^2 \leq \bar\sigma(\hat M_1)^2\norm{w_b}^2$, we bound
%\begin{align}\label{eq:Mfbnd}
%	\E_{\bar\Xi,{\cal D}}\left[\|\hat M_1 w_b\|^2\mid\bar\Xi,{\cal D},{\cal F}(n)\right] &\leq \E_{{\cal D}}\left[(\gamma_1\gamma_3)^2\left(\mathcal{L}_{NN}\norm x +\norm y +B_2\right)^2\left(\mathcal{L}_{NN_b}\norm x+B_1\right)^2\mid{\cal D}\right]\norm{w_b}^2 \\&\leq (\gamma_1\gamma_3)^2\gamma_{00}\norm{w_b}^2,
%\end{align}
\begin{align}\label{eq:Mfbnd}
	\E_{\bar\Xi,{\cal D}}\left[\|\hat M_1 w_b\|^2\mid{\cal F}(n)\right] &\leq \E_{{\cal D}}\left[(\gamma_1\gamma_3)^2\left(\mathcal{L}_{NN}\norm x +\norm y +B_2\right)^2\left(\mathcal{L}_{NN_b}\norm x+B_1\right)^2\right]\norm{w_b}^2\nonumber \\
&\leq (\gamma_1\gamma_3)^2\gamma_{00}\norm{w_b}^2,
\end{align}
where
\begin{align}\label{eq:gamma00}
\gamma_{00}&\equalhat E_{\cal D}\left[\left(\mathcal{L}_{NN}\|x\|+\|y\|+B_2\right)^2
\left(\mathcal{L}_{NN_b} \norm{x}+B_1\right)^2\right]<\infty
\end{align}
given our assumption on the moments of $x$ and $y$.
Then, \eqref{eq:Mfbnd} and \eqref{eq:Esig_bnd} yield
%it holds $E_{\bar\Xi,\mathcal D}\left[\bar\sigma(\hat M_1)\right] <\eta_1$ and also $\bar\sigma(M_1)\leq\eta_1$.  Therefore, we obtain
\begin{equation}
 \E_{\xi,\bar\Xi,{\cal D}}[\|M_f\|^2\mid{\cal F}(n)]\leq \left((\gamma_1\gamma_3)^2\gamma_{00}+\eta_1^2\right)\cdot\|w_b\|^2
\end{equation}
and $M_f$ satisfies Assumption~A3 by taking $K_B= \left((\gamma_1\gamma_3)^2\gamma_{00}+\eta_1^2\right)$. In a parallel manner, it can be shown that
\begin{align}
\begin{split}
&\E_{\xi,\bar\Xi,{\cal D}}[M_b|{\cal F}(n)]=0\\
&\E_{\xi,\bar\Xi,{\cal D}}[\|M_b\|^2\mid{\cal F}(n)]\leq \left((\gamma_2\gamma_3)^2\gamma_{00}+\eta_2^2\right)\cdot\|w_f\|^2
\end{split}
\end{align}
and $M_b$ satisfies Assumption~A3 by taking $K_B= \left((\gamma_1\gamma_2)^2\gamma_{00}+\eta_2^2\right)$. Next,
\begin{align}
\begin{split}
&\E_{\xi,\bar\Xi,{\cal D}}[M_\theta|{\cal F}(n)]=\\
& \E_{\xi,\bar\Xi,{\cal D}}\left[-(\hat C_1 - \hat C_0)+\log \left(\frac{\theta(1-\pi^\star)}{(1-\theta)\pi^\star}\right) - \left(-(C_1 - C_0) + \log \left(\frac{(1-\theta)\pi^\star}{\theta(1-\pi^\star)}\right)\right)\mid {\cal F}(n)\right]=\\
& \E_{\bar\Xi,{\cal D}}[\hat C_0-C_0\mid{\cal F}(n)] -\E_{\bar\Xi,{\cal D}}[\hat C_1-C_1\mid{\cal F}(n)]=0
\end{split}
\end{align}
where $\hat C_0$ and $\hat C_1$ as in  \eqref{eq:Chats_sampling} are samples of $-\log p(y_i \mid x_i,W, \xi=\lbrace 0,1\rbrace, {\bar\Xi})$ from the distribution of $\Xi$ and $\cal D$ when $\xi=0$ and $\xi=1$ and having expected value equal to $C_0$ and $C_1$, respectively.
It follows
\begin{align}
\E_{\xi,\bar\Xi,{\cal D}}[M_\theta^2\mid{\cal F}(n)]&= \text{Var}\left(-(\hat{C_1}-\hat{C_0})\mid \mathcal{F}(n)\right) \\&= \left| \E_{\bar\Xi,{\cal D}}\left[(\hat C_1-\hat C_0)^2\mid{\cal F}(n)\right]-(C_1-C_0)^2\right| \leq (N^2\mathcal{L}_{NN_f}^2\gamma_{00}+\kappa^2)\cdot \phi^2,
\end{align}
from  \eqref{eq:Kappa_instance} and \eqref{eq:Kappa_ex} in \ref{app:A} and using $\gamma_{00}$ from \eqref{eq:gamma00}. Thus, $M_\theta$ also satisfies Assumption~A3 by taking $K_B= (N^2\mathcal{L}_{NN_f}^2\gamma_{00}+\kappa^2)\cdot \phi_{max}^2$ and the proof is complete.
\end{proof}

%references go at the end
%\bibliography{citations}

\begin{thebibliography}{10}

\bibitem{girshick_fast_2015}
R.~Girshick, ``Fast r-cnn,'' in {\em International Conference on Computer
  Vision}, pp.~1440--1448, 2015.

\bibitem{noh_learning_2015}
H.~Noh, S.~Hong, and B.~Han, ``Learning deconvolution network for semantic
  segmentation,'' in {\em IEEE International Conference on Computer Vision},
  pp.~1520--1528, 2015.

\bibitem{silver_mastering_2017}
D.~Silver, J.~Schrittwieser, K.~Simonyan, I.~Antonoglou, A.~Huang, A.~Guez,
  T.~Hubert, L.~Baker, M.~Lai, A.~Bolton, Y.~Chen, T.~Lillicrap, F.~Hui,
  L.~Sifre, G.~van~den Driessche, T.~Graepel, and D.~Hassabis, ``Mastering the
  game of {Go} without human knowledge,'' {\em Nature}, vol.~550, no.~7676,
  pp.~354--359, 2017.

\bibitem{han_learning_2015}
S.~Han, J.~Pool, J.~Tran, and W.~J. Dally, ``Learning both weights and
  connections for efficient neural networks,'' in {\em International Conference
  on Neural Information Processing Systems - Volume 1}, p.~1135–1143, 2015.

\bibitem{blalock_what_2020}
D.~Blalock, J.~J. Gonzalez~Ortiz, J.~Frankle, and J.~Guttag, ``What is the
  state of neural network pruning?,'' in {\em Machine Learning and Systems},
  vol.~2, pp.~129--146, 2020.

\bibitem{guo_dynamic_2016}
Y.~Guo, A.~Yao, and Y.~Chen, ``Dynamic network surgery for efficient {DNNs},''
  in {\em Advances in Neural Information Processing Systems}, pp.~1379--1387,
  2016.

\bibitem{lecun_optimal_1990}
Y.~LeCun, J.~S. Denker, and S.~A. Solla, ``Optimal brain damage,'' in {\em
  Advances in {Neural} {Information} {Processing} {Systems} 2}, pp.~598--605,
  1990.

\bibitem{wen_learning_2016}
W.~Wen, C.~Wu, Y.~Wang, Y.~Chen, and H.~Li, ``Learning structured sparsity in
  deep neural networks,'' {\em Advances in Neural Information Processing
  Systems}, vol.~29, pp.~2082--2090, 2016.

\bibitem{li_pruning_2017}
H.~Li, A.~Kadav, I.~Durdanovic, H.~Samet, and H.~P. Graf, ``Pruning filters for
  efficient convnets,'' {\em arXiv preprint arXiv:1608.08710}, 2016.

\bibitem{he_soft_2018}
Y.~He, G.~Kang, X.~Dong, Y.~Fu, and Y.~Yang, ``Soft filter pruning for
  accelerating deep convolutional neural networks,'' in {\em International
  Joint Conference on Artificial Intelligence}, p.~2234–2240, 2018.

\bibitem{luo_thinet:_2017}
J.-H. Luo, J.~Wu, and W.~Lin, ``Thinet: A filter level pruning method for deep
  neural network compression,'' in {\em IEEE International Conference on
  Computer Vision}, pp.~5058--5066, 2017.

\bibitem{liebenwein_pruning_19}
L.~Liebenwein, C.~Baykal, H.~Lang, D.~Feldman, and D.~Rus, ``Provable filter
  pruning for efficient neural networks,'' in {\em International Conference on
  Learning Representations}, 2020.

\bibitem{baykal_data-dependent_2019}
C.~Baykal, L.~Liebenwein, I.~Gilitschenski, D.~Feldman, and D.~Rus,
  ``Data-dependent coresets for compressing neural networks with applications
  to generalization bounds,'' in {\em International Conference on Learning
  Representations}, 2019.

\bibitem{Nalisnick_2015}
E.~Nalisnick, A.~Anandkumar, and P.~Smyth, ``A scale mixture perspective of
  multiplicative noise in neural networks,'' {\em arXiv preprint
  arXiv:1506.03208}, 2015.

\bibitem{Louizos_2017}
C.~Louizos, K.~Ullrich, and M.~Welling, ``Bayesian compression for deep
  learning,'' in {\em Advances in Neural Information Processing Systems}
  (I.~Guyon, U.~V. Luxburg, S.~Bengio, H.~Wallach, R.~Fergus, S.~Vishwanathan,
  and R.~Garnett, eds.), vol.~30, Curran Associates, Inc., 2017.

\bibitem{molchanov2017variational}
D.~Molchanov, A.~Ashukha, and D.~Vetrov, ``Variational dropout sparsifies deep
  neural networks,'' in {\em International Conference on Machine Learning},
  pp.~2498--2507, 2017.

\bibitem{srinivas_generalized_2016}
S.~Srinivas and R.~V. Babu, ``Generalized dropout,'' {\em arXiv preprint
  arXiv:1611.06791}, 2016.

\bibitem{kingma_variational_2015}
D.~P. Kingma, T.~Salimans, and M.~Welling, ``Variational {Dropout} and the
  {Local} {Reparameterization} {Trick},'' {\em arXiv:1506.02557 [cs, stat]},
  Dec. 2015.
\newblock arXiv: 1506.02557.

\bibitem{Gal2016Uncertainty}
Y.~Gal, {\em Uncertainty in Deep Learning}.
\newblock PhD thesis, University of Cambridge, 2016.

\bibitem{hinton_improving_2012}
G.~E. Hinton, N.~Srivastava, A.~Krizhevsky, I.~Sutskever, and R.~R.
  Salakhutdinov, ``Improving neural networks by preventing co-adaptation of
  feature detectors,'' {\em arXiv:1207.0580 [cs]}, July 2012.
\newblock arXiv: 1207.0580.

\bibitem{bengio_estimating_2013}
Y.~Bengio, N.~L{\'e}onard, and A.~Courville, ``Estimating or propagating
  gradients through stochastic neurons for conditional computation,'' {\em
  arXiv preprint arXiv:1308.3432}, 2013.

\bibitem{simonyan_deep_2015}
K.~Simonyan and A.~Zisserman, ``Very deep convolutional networks for
  large-scale image recognition,'' in {\em International Conference on Learning
  Representations (oral)}, 2015.

\bibitem{Krizhevsky_09}
A.~Krizhevsky, ``Learning multiple layers of features from tiny images,'' tech.
  rep., 2009.

\bibitem{graham2015efficient}
B.~Graham, J.~Reizenstein, and L.~Robinson, ``Efficient batchwise dropout
  training using submatrices,'' {\em arXiv preprint arXiv:1502.02478}, 2015.

\bibitem{Srivastava_2014_Dropout}
N.~Srivastava, G.~Hinton, A.~Krizhevsky, I.~Sutskever, and R.~Salakhutdinov,
  ``Dropout: A simple way to prevent neural networks from overfitting,'' {\em
  Journal of Machine Learning Research}, vol.~15, no.~56, pp.~1929--1958, 2014.

\bibitem{corduneanu_2001_model_sel}
A.~Corduneanu and C.~Bishop, ``Variational bayesian model selection for mixture
  distribution,'' {\em Artificial Intelligence and Statistics}, vol.~18,
  pp.~27--34, 2001.

\bibitem{bishop_pattern_2006}
C.~M. Bishop, {\em Pattern recognition and machine learning}.
\newblock Information science and statistics, New York: Springer, 2006.

\bibitem{Maddison_CONCRETE}
C.~J. Maddison, A.~Mnih, and Y.~W. Teh, ``The concrete distribution: A
  continuous relaxation of discrete random variables,'' in {\em International
  Conference on Learning Representations}, 2017.

\bibitem{gal2017concrete}
Y.~Gal, J.~Hron, and A.~Kendall, ``Concrete dropout,'' in {\em Advances in
  Neural Information Processing Systems}, vol.~30, pp.~3584--3593, 2017.

\bibitem{boyd2004convex}
S.~Boyd and L.~Vandenberghe, {\em Convex optimization}.
\newblock Cambridge university press, 2004.

\bibitem{borkar2009stochastic}
V.~S. Borkar, {\em Stochastic Approximation: A Dynamical Systems Viewpoint}.
\newblock Texts and Readings in Mathematics, Hindustan Book Agency, 2009.

\bibitem{lecun-mnisthandwrittendigit-2010}
Y.~LeCun and C.~Cortes, ``{MNIST} handwritten digit database.''
  \url{http://yann.lecun.com/exdb/mnist/}, 2010.

\bibitem{glorot10a_init}
X.~Glorot and Y.~Bengio, ``Understanding the difficulty of training deep
  feedforward neural networks,'' in {\em International Conference on Artificial
  Intelligence and Statistics}, vol.~9, pp.~249--256, 2010.

\bibitem{le_cun_1998_doc_rec}
Y.~LeCun, L.~Bottou, Y.~Bengio, and P.~Haffner, ``Gradient-based learning
  applied to document recognition,'' {\em Proceedings of the IEEE}, vol.~86,
  no.~11, pp.~2278--2324, 1998.

\bibitem{kingma_adam:_2014}
D.~P. Kingma and J.~Ba, ``Adam: A method for stochastic optimization,'' {\em
  International Conference on Learning Representations}, 2015.

\bibitem{liebenwein_lost_2021}
L.~Liebenwein, C.~Baykal, B.~Carter, D.~Gifford, and D.~Rus, ``Lost in
  {Pruning}: {The} {Effects} of {Pruning} {Neural} {Networks} beyond {Test}
  {Accuracy},'' {\em arXiv:2103.03014 [cs]}, Mar. 2021.
\newblock arXiv: 2103.03014.

\bibitem{szegedy_intriguing_2014}
C.~Szegedy, W.~Zaremba, I.~Sutskever, J.~Bruna, D.~Erhan, I.~Goodfellow, and
  R.~Fergus, ``Intriguing properties of neural networks,'' in {\em
  International Conference on Learning Representations}, 2014.

\bibitem{scaman_lipschitz_2019}
A.~Virmaux and K.~Scaman, ``Lipschitz regularity of deep neural networks:
  analysis and efficient estimation,'' {\em Advances in Neural Information
  Processing Systems}, vol.~31, pp.~3839--3848, 2018.

\end{thebibliography}
%\bibliographystyle{ieeetr}

\end{document}